\newtheorem{theorem}{Theorem}[section]
\newtheorem{lemma}{Lemma}[subsection]
\newtheorem{corollary}{Corollary}[subsection]
\newtheorem{definition}{Definition}[section]
\renewcommand{\eqref}[1]{Eq.~(\ref{#1})}
\newcommand{\lemref}[1]{Lemma~\ref{#1}}
\newcommand{\stam}[1]{}
\newcommand{\ignore}[1]{}
\newtheorem{assumption}[theorem]{Assumption}
\newcommand{\bx}{\mathbf{x}}
\newcommand{\bxt}{\Tilde{\bx}}
\newcommand{\bw}{\mathbf{w}}
\newcommand{\bu}{\mathbf{u}}
\newcommand{\bz}{\mathbf{z}}
\newcommand{\bM}{\mathbf{M}}
\newcommand{\bP}{\mathbf{P}}
\newcommand{\bI}{\mathbf{I}}
\newcommand{\bL}{\mathbf{L}}
\newcommand{\bmu}{\boldsymbol{\mu}}
\newcommand{\bmut}{\Tilde{\bmu}}
\newcommand{\bzeta}{\boldsymbol{\zeta}}
\newcommand{\bxi}{\boldsymbol{\xi}}
\newcommand{\bxit}{\Tilde{\bxi}}
\newcommand{\btheta}{{\boldsymbol{\theta}}}
\newcommand{\bomega}{\boldsymbol{\omega}}
\newcommand{\ex}{\epsilon_x}
\newcommand{\exi}{\epsilon_{\xi}}
\newcommand{\exxi}{\epsilon_{x \xi}}
\newcommand{\co}{{\cal O}}
\newcommand{\cl}{{\cal L}}
\newcommand{\cs}{{\cal S}}
\newcommand{\cn}{{\cal N}}
\DeclareMathOperator*{\sign}{sign}
\newcommand{\reals}{{\mathbb R}}
\DeclareMathOperator*{\E}{\mathbb{E}}
\newcommand{\inner}[1]{\langle #1 \rangle}
\newcommand{\norm}[1]{\left\|#1\right\|}
\newcommand{\snorm}[1]{\|#1\|} 
\begin{document}
\twocolumn[
\aistatstitle{Effect of Ambient-Intrinsic Dimension Gap on Adversarial Vulnerability}
\aistatsauthor{ Rajdeep Haldar \And Yue Xing \And Qifan Song}
\aistatsaddress{ Purdue University \And Michigan State University \And Purdue University}

]
\begin{abstract}
The existence of adversarial attacks on machine learning models imperceptible to a human is still quite a mystery from a theoretical perspective. In this work, we introduce two notions of adversarial attacks: natural or on-manifold attacks, which are perceptible by a human/oracle, and unnatural or off-manifold attacks, which are not. We argue that the existence of the off-manifold attacks is a natural consequence of the dimension gap between the intrinsic and ambient dimensions of the data. For 2-layer ReLU networks, we prove that even though the dimension gap does not affect generalization performance on samples drawn from the observed data space, it makes the clean-trained model more vulnerable to adversarial perturbations in the off-manifold direction of the data space. Our main results provide an explicit relationship between the $\ell_2,\ell_{\infty}$ attack strength of the on/off-manifold attack and the dimension gap.
\end{abstract}

\section{Introduction}
Neural networks exhibit a universal approximation property, enabling them to learn any continuous function up to arbitrary accuracy\citep{hornik1989multilayer,pinkus1999approximation, leshno1993multilayer}. This has been used to address complex classification problems by learning their decision boundaries. However, although neural networks are known to generalize well \citep{liu2019understanding}, they are vulnerable to certain small-magnitude perturbations, hampering classification accuracy, which are known as \emph{adversarial examples} \citep{szegedy2013intriguing,goodfellow2014explaining,carlini2017towards,madry2017towards}. 
\begin{figure}[h]
\includegraphics[height=4.8cm,width=8.5cm]{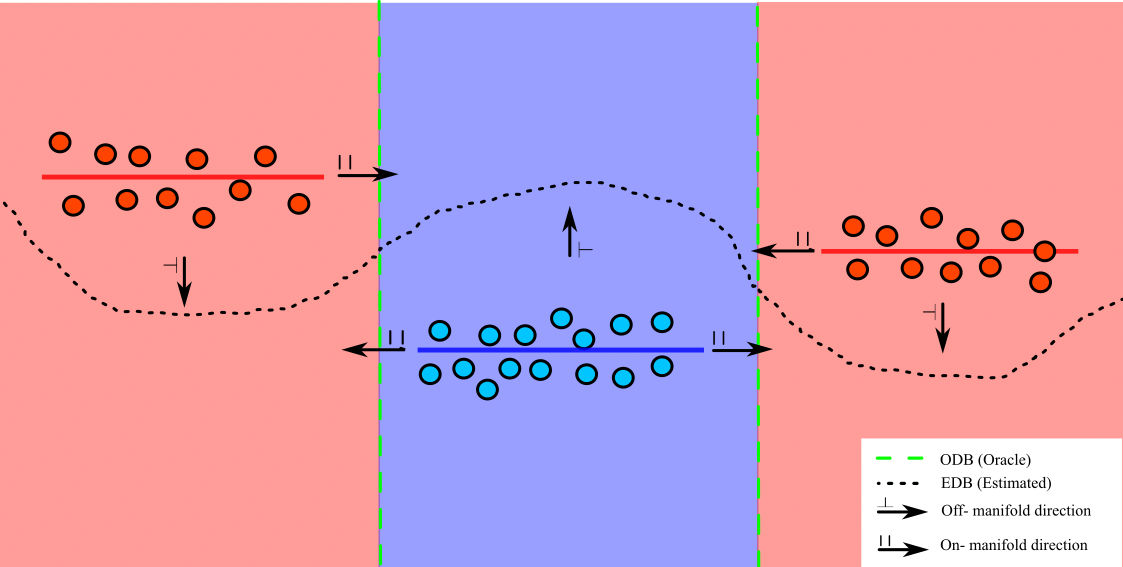}
\caption{Mental image: The oracle decision boundary (green dashed line) determines the label (blue or red) of any point in the Euclidean space. The observed data space consists of 1-dimensional line segments immersed in the 2-dimensional space. The model learns the estimated decision boundary (black dotted line) based on the observed data. 
}
\label{fig:mental image}
\end{figure}

For a classification problem, assume that there exists an \emph{oracle decision boundary} (ODB), which is the human decision boundary 
that determines the labels for any lawful input (e.g., a consensus digit recognition rule for any arbitrary gray-level images).
The goal of training a neural network classifier is to obtain an \emph{estimated decision boundary} (EDB) based on observed data (sample-label pairs observed). To facilitate misclassification by the neural network, a perturbation must push the original sample across the EDB.

Different types of attacks can be defined when the data is only observable in a particular subspace of the lawful input space (the data space). Usually, the EDB can estimate the ODB within the data space reasonably, given a large sample size. Thus, an adversarial perturbation restricted to the data space, which aims to cross over the ODB, tends to cross over the EDB as well. We call these adversarial attacks \emph{natural} or \emph{on-manifold} attacks. Even an oracle will be prone to wrong classifications with these adversarial perturbations.

In contrast, outside the data space, where no observation is made, the EDB curve is merely an extrapolation extended from the data space. It will not necessarily approximate the ODB well. Consequently, an adversarial perturbation outward from the data space, facilitated by crossing the EDB, will not necessarily try to push the sample over the ODB. We call these perturbations \emph{unnatural} or \emph{off-manifold} attacks. Under these attacks, an oracle can accurately distinguish and classify the perturbed sample, but the trained model fails. Figure \ref{fig:mental image} is provided to illustrate this phenomenon with a simple example. \cite{shamir2022dimpled}'s observations also support the phenomenon we describe, where the EDB learned by neural networks, over the training process, tends to align with the data space first and then form dimples around differently labeled clusters to attain correct classification on the observed data space, leaving them vulnerable to examples outside the data space.

\begin{figure}[h]
\centering\vspace{-0.1in}
\includegraphics[scale=0.25]{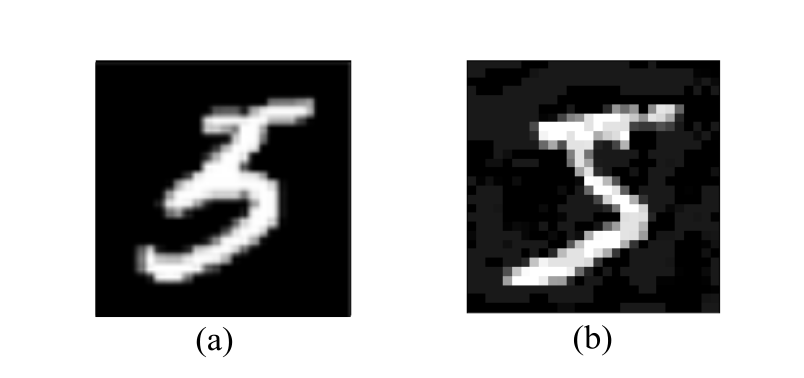}
\caption{Original image is a digit 5. (a) Natural attack; (b) Unnatural attack (minor changes in pixel values of the background compared to the original). Under both of these attacks, the neural network classifies the image as 3 instead of 5.}
\label{fig:motivation}
\end{figure}
Figure \ref{fig:motivation} shows an example of an \emph{unnatural} and \emph{natural} attack on a digit 5 image in the MNIST dataset \citep{lecun2010mnist}. The unnatural attacked image can still be classified as 5 by a human/oracle; however, the model classifies it as digit 3. The vulnerability of neural networks to \emph{unnatural} attacks is a pressing issue.\\

In this work, we obtain distinguished results for off-manifold and on-manifold attacks. Our results imply that unnatural/off-manifold attacks are fundamental consequences of the dimension gap between the local intrinsic dimension of the observed data space and the ambient dimension. That is, 
\begin{theorem}[Informal version of Theorem \ref{thm:pert flips}]
    As the dimension gap increases, the strength of $\ell_2,\ell_{\infty}$ attack required to misclassify the sample decreases, i.e., the model is more vulnerable. Notably, for $\ell_{\infty}$ attack, the strength asymptotically goes to $0$ w.r.t. the dimension gap. 
\end{theorem}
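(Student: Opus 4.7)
The plan is to analyze the clean-trained 2-layer ReLU network $f(x)=\sum_{i} a_i\,\sigma(w_i^\top x + b_i)$ on data whose support is a $d$-dimensional manifold embedded in $\reals^D$. The central observation is that the training loss is only sensitive to the behaviour of $f$ on the manifold, so the components of the first-layer weights orthogonal to the local tangent space essentially retain their initialized (random) structure. I would exploit this by decomposing each weight $w_i = w_i^{\parallel} + w_i^{\perp}$ with respect to the tangent space at a test point $x$, and showing that an off-manifold perturbation interacts coherently with the $(D-d)$ ``free'' directions carried by the $w_i^{\perp}$ terms, while the on-manifold part is well-aligned with the ODB and therefore contributes no easy attack direction.

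The key steps, in order, are: (i) use the local geometry of the manifold near $x$ to identify an orthonormal basis of the off-manifold subspace $V_\perp\subset \reals^D$ of dimension $D-d$; (ii) invoke the characterization of the clean-trained weights established earlier in the paper to control the off-manifold gradient projection $\nabla_\perp f(x)$, extracting concentration statements for both $\|\nabla_\perp f(x)\|_2$ and $\|\nabla_\perp f(x)\|_1$ as functions of the dimension gap; (iii) for an attack budget $\epsilon$ in norm $\|\cdot\|_p$, solve the linearized inner maximization $\max_{\|\delta\|_p\le\epsilon,\;\delta\in V_\perp}\nabla_\perp f(x)^\top \delta$ using the usual $\ell_p$/$\ell_{p^*}$ duality; (iv) invert this against the classification margin $M$ of the clean sample to read off the smallest $\epsilon$ that pushes the sample across the EDB.

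The $\ell_2$ case then follows by aligning $\delta$ with $\nabla_\perp f(x)$, yielding a minimum attack strength of order $M/\|\nabla_\perp f(x)\|_2$, which decreases in $D-d$ because additional off-manifold coordinates each contribute an independent, non-trivial component to the $\ell_2$ norm. The $\ell_\infty$ case is the crucial one: taking $\delta=\epsilon\,\sgn(\nabla_\perp f(x))$ exploits all $D-d$ coordinates simultaneously, producing an output change proportional to $\|\nabla_\perp f(x)\|_1$; by Cauchy--Schwarz, and under the concentration inherited from the random off-manifold initialization, this $\ell_1$ norm is of order $\sqrt{D-d}$ larger than the corresponding $\ell_2$ norm, so the required $\epsilon$ vanishes as the gap grows, establishing the asymptotic claim.

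The main obstacle I anticipate is controlling the ReLU activation pattern under perturbation: the first-order identity $f(x+\delta)=f(x)+\nabla f(x)^\top\delta$ is valid only while no hidden neuron switches its activation, yet a perturbation aligned with $\sgn(\nabla_\perp f(x))$ has ambient $\ell_2$-norm growing like $\sqrt{D-d}$ and may well flip some of the pre-activations $w_i^\top x + b_i$. Handling this cleanly will require a uniform lower bound on $|w_i^\top x + b_i|$ at a typical test point, together with a Lipschitz-style estimate ensuring that the aggregate contribution of any newly switched neurons is absorbed into a constant factor and does not cancel the intended output shift, so that the dimension-gap scaling survives. A secondary difficulty is translating the ``untrained off-manifold components'' intuition into a quantitative concentration statement; this should follow from coupling the trained weights to their random initialization on $V_\perp$, but it has to be verified for the specific training dynamics and overparameterization regime the paper relies on.
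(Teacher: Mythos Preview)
Your proposal rests on the wrong mechanism for why the off-manifold weight components are nonzero. You assume that ``the training loss is only sensitive to the behaviour of $f$ on the manifold, so the components of the first-layer weights orthogonal to the local tangent space essentially retain their initialized (random) structure,'' and you plan to couple the trained weights to their random initialization on $V_\perp$. In the paper's regime this is false: training is run to convergence in the sense of Theorem~\ref{thm:kkt}, so the weights are determined by the KKT conditions of the max-margin problem and are completely independent of initialization. Indeed the paper explicitly contrasts itself with the initialization-based argument (Section~\ref{remark:setting}): if the data lay exactly in a linear subspace, the KKT conditions would force $\bP\bw_j=0$ and your entire off-manifold gradient would vanish.

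What actually produces nonzero off-manifold weight components is the data model: each observation carries a cluster-level random shift $\bzeta^{(r)}$ and ambient noise $\bomega_i$, and since the KKT weights are signed combinations of the data points (\eqref{eqn: main kkt}), one obtains $\bP\bw_j=\sum_{i}\lambda_i y_i v_j\phi'_{i,j}\,\bP(\bzeta^{(r)}+\bomega_i)$. The paper then does not linearize or use $\nabla_\perp f$; it constructs the explicit perturbation $\bu_\perp=\sum_{q\in[k]}y^{(q)}\bP\bzeta^{(q)}$ and shows directly (Lemma~\ref{lem:neuron input changes fast}) that $\bw_j^\top\bu_\perp$ has the sign of $v_j$ for every neuron, so scaling $\bu_\perp$ moves all positive-layer neurons one way and all negative-layer neurons the other. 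This neuron-by-neuron monotonicity is precisely what sidesteps your activation-switching obstacle: a two-stage scaling ($\eta_1^\perp$ to make the $J_-$ inputs nonnegative, then $\eta_2^\perp$ to drive the output past $-1$) handles the ReLU nonlinearity without any first-order approximation. The $\ell_2$ and $\ell_\infty$ rates then come from bounding $\|\bu_\perp\|$ and $\|\bu_\perp\|_\infty$ via concentration of the Gaussians $\bzeta^{(q)}$ under the projection $\bP$, not from $\ell_p/\ell_{p^*}$ duality on a gradient.
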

We use intrinsic dimension as an umbrella term for well-known concepts of \emph{Manifold dimension, Minkowski dimension, Hausdorff dimension} etc. In our theoretical setting (Section \ref{subsec: data}), all of these notions of dimension are equivalent and refer to the signal residing in a lower dimensional set locally.

\begin{table}[h]
\begin{tabular}{@{}lllll@{}}
\toprule
         & Ambient & LPCA & MLE(k=5) & TwoNN \\ \midrule
MNIST    & 784     & 38   & 13.36    & 14.90 \\
CIFAR-10 & 3072    & 9    & 27.66    & 31.65 \\ \bottomrule
\end{tabular}
\caption{Intrinsic dimension estimation}
\label{tab:intrinsic_dimensions}
\end{table}
To fuel the reader's motivation, Table \ref{tab:intrinsic_dimensions} is presented to show the estimated local intrinsic dimension for the standard image datasets MNIST and CIFAR-10. Regardless of the estimation method (local PCA\citep{Fukunaga1971}, maximum likelihood estimation\citep{Haro2008}, Two nearest-neighbor\citep{Facco_2017}), the ambient dimension of the images is way larger than the intrinsic dimension of the space they reside in. This paper reveals that this is the root cause of the existence of \emph{unnatural attacks}. 
Furthermore, as implied by our theorem, a larger ambient-intrinsic dimension gap, e.g., training images with unnecessarily high resolution but no more useful details, leads to higher vulnerability.

\section{Related Works}
Besides the literature, as mentioned earlier, below is a list of works ordered by relatability to this paper.
\paragraph{Adversarial study motivated by a manifold view}
\citep{xiao2022understanding} assumes a manifold view of the data and focuses on a generative model to learn the data manifold and conceive on-manifold attacks in practice. \citep{zhang2022manifold} conducts a theoretical study to decompose the adversarial risk into four geometric components: the standard risk, nearest neighbor risk, in-manifold, and normal (off-manifold in our setting) risk. Our paper also subscribes to the manifold hypothesis; Compared to the above two studies, our focus and mathematical setting are on the relationship between the data dimension gap and adversarial attack strength. 

\paragraph{Adversarial examples in random ReLU networks}
Different from this paper which studies trained neural networks, some other studies, e.g., \cite{daniely2020relu,bubeck2021single} also show the existence of $\ell_2$ adversarial examples in two-layer ReLU networks with randomized weights and provide a corresponding rate for the strength of the $\ell_2$ perturbation. 
\cite{bartlett2021adversarial} further generalizes to a multi-layer network. 
\paragraph{Adversarial examples due to implicit bias in two-layer Relu networks}
\cite{vardi2022gradient} and \cite{frei2023double} argue that adversarial examples are a consequence of the implicit bias in clean training despite the existence of robust networks. Extending from their work, we introduce the notion of the intrinsic-ambient dimension gap exhibited by the data, which gives rise to two different types of attacks \emph{natural} or \emph{on-manifold} and \emph{unnatural} or \emph{off-manifold}. The previously mentioned works can be reduced to the \emph{on-manifold setting} in our framework.
 
\paragraph{Adversarial examples due to low dimensional data}
 A recent work, \cite{melamed2023adversarial}, studies the adversarial robustness of clean-trained neural networks with data in a low-dimensional linear subspace and argues that the neural networks are vulnerable to attacks in the orthogonal direction. 
 
 Though the motivation is similar, our findings and outcomes are different. Detailed comparisons of are provided in Section \ref{remark:setting} for the model setup, and Section \ref{remark:rate_comparison} for the final result, respectively. In short, the key differences arise from their setting of linear subspace, lazy training, and weight initialization. Though it is worth consideration in the finite training regime, the adversarial vulnerability considered in their setting does not hold if the neural network training attains convergence. On the other hand, our results are invariant to initialization, and our theoretical setting is more realistic than prior works.

\section{Problem Setup}
In this section, we introduce the detailed technical settings considered in this paper. We use consistent notations with \cite{frei2023double} in this paper. For any natural number $n$, the set $\{1,\ldots, n\}$ is represented by the shorthand $[n]$. For any vector $\bu$, $\norm{\bu}$ will denote the euclidean $\ell_2$ norm. The notations $\co{}$ and $\Theta$ are the standard asymptotic notations.
\subsection{Data}
\label{subsec: data}
In this paper, we study a binary classification problem with $k$ clusters with respective labels $y^{(r)}\in \{-1,1\},\,r\in [k]$ and $n$ observations.
The intrinsic signal lies in $d$ dimensions, and we observe some transformation of the signal in $D$ dimensions ($D\geq d$). The dimension gap is denoted by $g=D-d$.
Let $\bx_i=\bmu^{(r)}+\bxi_i \in \mathbb{R}^d$ denote the $i^{th},\, i\in [n]$ intrinsic signal for the $r^{th}$ cluster. The vectors $\bmu^{(1)},\ldots,\bmu^{(k)}\in \mathbb{R}^d$ are the means of the $k$ clusters, and $\bxi_i\in \mathbb{R}^d$ is the deviation of the signal around the mean.

In terms of the actual observed data, denote the $i^{th}$ observation $\bxt_i=\bM\bx_i+\bzeta^{(r)}+\bomega_i\in \mathbb{R}^D$,
where $\bM$ is a $D\times d$ matrix with $d$- orthonormal columns; the vector $\bzeta^{(r)}\in \mathbb{R}^D$ is the additional random cluster effect and $\bzeta^{(r)}\sim N(0,\tau^2\bI_D)$ for all $r\in[k]$; and $\bomega_i$ represents the ambient noise. We will also use an alternate representation for the $i^{th}$ observation in terms of the transformed mean and deviation term in higher dimensions, $\bxt_i=\bmut^{(r)}+\bxit_i$ where $\bmut^{(r)}=\bM\bmu^{(r)}$ and $\bxit=\bM\bxi_i+\bzeta^{(r)}+\bomega_i$.
The terms $\bxi_i,\bomega_i$ are also random, and we do not impose any specific distribution for them; instead, we will restrict their norm for our analysis.
The notation $\cs=\{(\bxt_i,y_i)\}_{i=1}^n$ is the collection of all $n$ observation-label pairs used to train our model.

\subsection{Data Interpretation}
\label{subsec: interpret}
When studying low-dimensional information embedded in a higher-dimensional space, we do not want the original information to be lost or distorted. We restrict our analysis to a class of transformations known as \emph{Isometric} Transformations. \emph{Isometric} transformations are essential as they preserve distances (structure/information) between any pair of points in the origin space.

 Any \emph{isometric} transformation can be expressed as an affine transformation with a linear part consisting of orthogonal columns and a geometric translation.  Hence, $\bM(\cdot)+\bzeta^{(r)}$ preserves distance within each cluster. The matrix $\bM$ applies a linear transformation on the $d$-dimensional intrinsic signal ($\bx_i$), immersing it into a $D$-dimensional space. After the immersion, a translation or shift $\bzeta^{(r)}$ is added corresponding to the $r^{th}$ cluster. Then an ambient noise $\bomega$ is added to account for the stochasticity in the observation process. Each cluster can be considered a lower $d$ dimensional manifold immersed in a $D$ dimensional space up to ambient noise. Our data space is a union of such manifolds, and our setup can be used as a motivation for real-life datasets that locally reside in a lower dimensional manifold. Examples can be found in Tables \ref{tab:intrinsic_dimensions}, \ref{tab:cifar10 individual classes}, \ref{tab:cifar10 individual classes full}, \ref{tab:mnistclasses}.
 
\begin{table}[h]
\begin{tabular}{llllll}
\hline
CIFAR class & 0     & 1     & $\dots$ & 8     & 9     \\ \hline
LPCA        & 8     & 11    & $\dots$ & 10    & 16    \\
MLE (k=5)   & 18.34 & 21.20 & $\dots$ & 19.80 & 24.42 \\
TwoNN       & 21.77 & 25.98 & $\dots$ & 24.77 & 29.11 \\ \hline
\end{tabular}
\caption{Intrinsic dimension of CIFAR-10 Dataset. The intrinsic dimension over all classes is similar to that for individual classes.
(Complete version: Table \ref{tab:cifar10 individual classes full} in the appendix.)\label{tab:cifar10 individual classes}} 
\vspace{-1.8mm}
\end{table}
Within each cluster $r$, the distances are preserved by the transformation $\bM(\cdot)+\bzeta^{(r)}$ up to the variance introduced by the ambient noise ($Var(\bomega)$). Between clusters, the distances are preserved by $\bM(\cdot)$ up to a slightly larger variance due to cluster translation and the noise term $(2\tau^2+Var(\bomega))$. Intuitively, our setup transforms a low-dimensional signal into a high-dimensional signal via $\bM$, preserving all information (pairwise angle and distances) up to some variance, where the within-cluster variance is lower than the between-cluster variance.

\subsection{Neural Network}
In this work, we will use a 2-layer ReLU network $\cn_{\btheta}:\mathbb{R}^D\to \mathbb{R}$ of width $w$ as our classification model.
\begin{equation}
    \cn_{\btheta}(\bxt)=\sum_{j=1}^wv_j\phi(\bw_j^\top\bxt+b_j)
\end{equation}
$\bw,b$ represent the weights and biases of the first layer; $v_j$ is a scalar weight of the second layer for the $j^{th}$ neuron. The function $\phi(x)=\max(0,x)$ is the ReLU activation function. The vectorized collection of all the parameters $(\bw_1^T,\ldots,\bw_j^T,b_1\ldots,b_j,v_1,\ldots,v_j)^T$ for the neural network is represented by $\btheta$. Both the two layers will be updated during the training.

\subsection{Expression for Network Parameters}
Before we proceed, we need to establish the training loss used in our framework. For our neural network model $\cn_{\btheta}$ parameterized by $\btheta$, the \emph{empirical loss} $\cl({\btheta})$ can be defined on the dataset $\cs=\{(\bxt_i,y_i)\}_{i=1}^n$ as follows:
\begin{equation}
\label{eq:loss}
	\cl(\btheta) := \frac{1}{n} \sum_{i=1}^n \ell(y_i \cdot\cn_{\btheta}(\bxt_i))~.
\end{equation} 
When $\ell(z)=e^{-z}$ or $\ell(z)=\log (1+e^{-z})$, $\cl(\btheta)$ is the empirical \emph{exponential loss} or \emph{logistic loss} respectively.

Some \emph{implicit bias} literature studies the limiting behavior of the training process. Based on \cite{lyu2020gradient},
two-layer ReLU networks have directional convergence to the solution of the \emph{maximum margin problem} \citep{james2013introduction}:
\begin{theorem}[\cite{lyu2020gradient}]
\label{thm:kkt}
    Consider $\cn_{\theta}(\bxt)$ trained via gradient descent or gradient flow for the exponential or logistic loss. If for some time $t_0$ in the training process, the loss reaches a sufficiently small value $\cl(\theta(t_0))<\frac{1}{n}$. Then, under some regularity and smoothness assumptions, $\frac{\btheta}{\snorm{\btheta}}$converges to the KKT solution of the maximum margin problem:
    \begin{equation}
    \label{eq:max_margin}
        \min_\btheta \frac{1}{2} \norm{\btheta}^2 \;\;\;\; \text{s.t. } \;\;\; \forall i \in [n] \;\; y_i \cn_{\btheta}(\bxt_i) \geq 1~.
    \end{equation}
   Furthermore, $\cl(\btheta(t)) \to 0$ and $\norm{\btheta(t)} \to \infty$ as $t \to \infty$.
\end{theorem}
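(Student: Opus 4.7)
The plan is to follow the implicit-bias framework established by \cite{lyu2020gradient} for positively homogeneous networks. The crucial structural fact is that a 2-layer ReLU network $\cn_{\btheta}(\bxt)$ is positively $2$-homogeneous in $\btheta$: for any $c>0$, $\cn_{c\btheta}(\bxt)=c^2 \cn_{\btheta}(\bxt)$, because rescaling both first-layer parameters $(\bw_j,b_j)$ and the second-layer weight $v_j$ by $c$ multiplies each neuron's contribution by $c^2$ (one factor from $\phi$, which is positively $1$-homogeneous, and one from $v_j$). This homogeneity, together with Euler's identity $\langle \nabla \cn_{\btheta}(\bxt),\btheta\rangle = 2\cn_{\btheta}(\bxt)$ (in the Clarke subdifferential sense), is the backbone of the whole argument.

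First I would introduce the \emph{normalized margin} $\bar{\gamma}(\btheta):=\min_{i\in[n]} y_i\cn_{\btheta}(\bxt_i)/\snorm{\btheta}^2$ and a smoothed surrogate $\tilde{\gamma}(\btheta)$ obtained from the LogSumExp-style object naturally coupled with the exponential/logistic loss. Once $\cl(\btheta(t_0))<1/n$ we have $y_i\cn_{\btheta}(\bxt_i)>0$ for every $i$, so the trajectory lies in the region where $\tilde{\gamma}$ is well defined and strictly positive. Differentiating $\tilde{\gamma}(\btheta(t))$ along the gradient flow $\dot{\btheta}\in-\partial^\circ \cl(\btheta)$, invoking $2$-homogeneity via Euler's identity, and applying Cauchy--Schwarz, one obtains that $\tilde{\gamma}$ is monotonically non-decreasing and that $\snorm{\btheta(t)}\to\infty$ (since finiteness of the limit of $\snorm{\btheta}$ would be incompatible with $\cl(\btheta(t))\to 0$ for a separable dataset under a strictly decreasing surrogate loss).

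Next I would establish that $\btheta(t)/\snorm{\btheta(t)}$ has a limit direction rather than mere accumulation points. The standard route is to invoke the o-minimal/definable structure of piecewise polynomial ReLU networks, which yields a Kurdyka--\L ojasiewicz inequality for the normalized margin on the unit sphere. Together with the monotonicity of $\tilde{\gamma}$, this rules out oscillation and gives a unique limit $\btheta^\infty/\snorm{\btheta^\infty}$. Identifying this limit with a KKT point of \eqref{eq:max_margin} is then a rescaling argument: first-order stationarity of $\tilde{\gamma}$ on the sphere translates, after multiplying by the appropriate homogeneous factor to normalize the margin to $1$, into exactly the stationarity and complementary-slackness conditions of the constrained minimum-norm problem, with dual variables given by limits of the exponential-loss weights $\ell'(y_i\cn_{\btheta(t)}(\bxt_i))$.

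The main obstacle is the non-smoothness of the ReLU activation, which rules out classical differential calculus and forces one to work with Clarke subdifferentials and arc-wise definable gradient flows. Care is needed so that Euler's identity and the chain rule hold in this generalized sense, so that the gradient flow is uniquely defined almost everywhere, and so that the o-minimal tools used for the KL inequality apply to the specific semi-algebraic class generated by ReLU networks and exponential/logistic losses. Once these technical ingredients are in place, the theorem follows; the rest of our paper only uses the \emph{conclusion}, namely that the trained $\btheta$ may be treated as a KKT point of \eqref{eq:max_margin}.
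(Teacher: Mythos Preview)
The paper does not prove this theorem at all: it is quoted verbatim as a result of \cite{lyu2020gradient} and used only as a black box to obtain the KKT representation \eqref{eqn: main kkt} of the trained weights. So there is no ``paper's own proof'' to compare your proposal against.

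That said, your sketch is a faithful high-level summary of the Lyu--Li argument: positive $2$-homogeneity of the 2-layer ReLU network, Euler's identity in the Clarke sense, monotonicity of the smoothed normalized margin, divergence of $\snorm{\btheta(t)}$, directional convergence via a Kurdyka--\L{}ojasiewicz inequality coming from the definable/semi-algebraic structure, and the rescaling that identifies the limit direction with a KKT point of \eqref{eq:max_margin}. The only place I would be more careful is the claim of a \emph{unique} limit direction: Lyu--Li prove convergence to \emph{some} KKT point (along the trajectory), not that the KKT point is unique, and for gradient descent (as opposed to gradient flow) the definability/KL machinery needs the discrete-time adaptations they develop. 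For the purposes of this paper none of that matters, since only the KKT characterization of the trained parameters is used downstream.
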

Theorem \ref{thm:kkt} shows the limiting behavior of the gradient flow in clean training and generally applies to any homogeneous network defined in \cite{lyu2020gradient}.
A consequence of the above theorem is that we can express the parameters in terms of the data, i.e., $\btheta=\sum_i^n\lambda_iy_i\nabla_{\btheta}\cn_{\btheta}(\bxt_i)$, where $\lambda_i\geq 0$ are the Lagrangian multipliers. Specifically,
\begin{equation}
        \label{eqn: main kkt}
    \bw_j=\sum_{i\in [n]}v_j\lambda_iy_i\phi'_{i,j}\bxt_i; \hspace{0.3cm}    b_j=\sum_{i\in [n]}v_j\lambda_i y_i \phi'_{i,j}
\end{equation}
$\phi'_{i,j}$ denotes the sub gradient of $\phi(\bw_j^\top\bxt_i+b_j)$.
\subsection{Volatile biases}
\label{subsec: volatile bias}
In the previous section, we saw that the network parameters can be explicitly written in terms of the observed dataset. In this section, we will leverage that closed-form expression to justify the existence of vulnerable weight components responsible for \emph{unnatural attacks}.

Let $\bP_o=\bM(\bM^{\top}\bM)^{-1}\bM^{\top}=\bM\bM^{\top}$ be the projection matrix of the image space where the intrinsic signal is initially immersed. On the contrary, $\bP=\bI_D-\bP_o$ is the projection onto the co-kernel, where no component of the intrinsic signal $\bx$ resides. We will associate the terms \emph{on-manifold} and \emph{off-manifold} to the spaces dictated by $\bP_o$ and $\bP$, respectively.

In the neural network, each weight $\bw_j$ can be decomposed into an \textit{authentic-weights} part $\bP_o\bw_j$ and a \textit{volatile biases} part $\bP\bw_j$. Multiplying, $\bP_o,\bP$ to \eqref{eqn: main kkt} we get:
\begin{align}
\label{eq: main kkt condition volatile w}
\bP_o\bw_j & =  \sum_{i \in [n]} \lambda_i y_i v_j \phi'_{i,j} \left[\bM\bx_i+\bP_o\left(\bzeta^{(r)}+\bomega_i\right)\right] \nonumber\\
	\bP\bw_j &=  \sum_{i\in I^{(r)}}\sum_{r \in [k]} \lambda_i y_i v_j \phi'_{i,j} \bP\left(\bzeta^{(r)}+\bomega_i\right),
\end{align}
where $I^{(r)}$ is the index set of samples in the $r^{th}$ cluster. The authentic weights are the ones that carry all the information of the intrinsic signal $\bx$ from the lower $d$ dimension space; the volatile biases are dummy weights (independent of $\bx$) that act just as additional bias terms during the \emph{clean training}. Any interaction between the immersed intrinsic signal and volatile biases is zero as $\bP\bM=0$. However, they can be exploited by \emph{off-manifold} attacks to change the neuron output. 
\paragraph{Comparison to \cite{melamed2023adversarial}}
\label{remark:setting}
In \cite{melamed2023adversarial}, the setting is equivalent to a linear transformation of a low-dimensional intrinsic signal to a higher dimension, i.e., under our framework, $\bxt=\bL\bx$ where $\bL$ is any $D\times d$ matrix. 
 Let $\pi^{\perp}=\bI_D-\bL(\bL^\top\bL)^{-1}\bL^\top$ be the projection matrix. Using Theorem \ref{thm:kkt} and \eqref{eqn: main kkt} in their setting, we have $\pi^{\perp}\bw_j=0\,, j\in [w]$. 
 
 The consequence is that, based on results in Theorem \ref{thm:kkt}, at convergence, the orthogonal weights converge to 0; hence, these weights have no contribution to the neuron output. The adversarial vulnerability described in \cite{melamed2023adversarial} results from finite training and large weight initialization. In real practice, since one can only train finite iterations, the initialization property observed by \cite{melamed2023adversarial} is still a factor to consider. However, it is not the driving force for the existence of \emph{unnatural attacks}. We also validate these arguments with experiments on simulated and real-life datasets (Section \ref{sec: init}).

\subsection{Assumptions}
\label{subsec: assump}
In this section, we introduce some additional assumptions to simplify the analysis.
\begin{assumption} \label{ass:dist}
    We have:
    \begin{enumerate}[label=(A\arabic*)]
    \item \label{a1} $\norm{\bmu^{(j)}} = \sqrt{d}$ for all $r \in [k]$.
    \item $k \left( \max_{i \neq j} |\inner{\bmu^{(i)}, \bmu^{(j)}}| + \Delta' + 1 \right) = c'(d  - \Delta'+ 1)$.\,; $c'<1/10$ 
     \item \label{a2}$\norm{\bxi}\leq \sqrt{2} \ln(d); \norm{\bomega}\leq 1$
    \item $c_3(g,\tau,k)=\frac{g\tau^2}{20k}-\frac{10k+1}{10k}\sqrt{\frac{13g\tau^2}{8}}>0;$ \\ $
        c_5(d,\tau,k)=\frac{d\tau^2}{20k}-\frac{10k+1}{10k}\sqrt{\frac{13d\tau^2}{8}}(\sqrt{2\ln d +1})\\>0;$
    \end{enumerate}
    where $\Delta'=\co{\left(\max \{\sqrt{d}\ln d,D\tau^2\}\right)}$  (Definition \ref{def:constants}).
\end{assumption}
Assumptions $(A1), (A2)$ are similar to the assumptions used in \cite{frei2023double}. Condition $(A1)$ states that the signal strength must grow with the intrinsic dimension. It can be generalized by replacing $\sqrt{d}$ to $\Theta{(\sqrt{d})}$ without changing the analysis in any meaningful way.  Condition $(A2)$ ensures that the means are near orthogonal to each other; we impose that pairwise interaction between the means is less than $\norm{\bmu^{(r)}}^2=d$ for any cluster $r$. This assumption can be satisfied for some $\tau^2=\co{(d/D)}$.

For the other two assumptions, $(A3)$ are bounds on the stochastic deviation and error terms. In general, these bounds can be relaxed to $\co{(\ln d)}, \co{(1)}$ respectively. These bounds are easy to satisfy for sufficiently large $d$, for instance, $\bxi\sim N(0,\bI_d); \bomega\sim N(0,D^{-1}\bI_D)$ satisfy the above bounds with high probability. In addition, Condition $(A4)$ is a mild assumption. The order of the positive terms is higher than the negative terms for both $c_3(g,\tau,k)$ and $c_5(d,\tau,k)$. A constant can always be incorporated inside $\tau$ to satisfy the assumption.
\section{Main Results}

In this section, we will state our main results. We study the adversarial vulnerability for any reasonable testing samples that arise from the observable data space (union of manifolds) and are homogeneous with training samples. Formally, we define such examples as \emph{nice examples} \ref{def: nice example}. 
These examples can be expressed as the transformation of some $d$ dimensional intrinsic signal described in Section \ref{subsec: data}. 
 \begin{definition}[Nice example]
 \label{def: nice example}
     An $\bxt \in \mathbb{R}^D$ is nice if it can be expressed as $\bxt=\bM\bx+\bzeta^{(r)}+\bomega$, for some $r\in [k]$ and $\bx=\bmu^{(r)}+\bxi$ such that $\norm{\bxi}\leq \sqrt{2}\ln(d)$, $\norm{\bomega}\leq 1$.
 \end{definition}
 In the following theorem, we show that for any \emph{nice-example}, the clean-trained neural network accurately predicts the labels.
\begin{theorem} \label{thm:good example labeld correctly}
Suppose $\cs$ satisfies the assumption \ref{ass:dist} and clean training has attained convergence under the setting of Theorem \ref{thm:kkt}. Let $\bxt \in \reals^D$ and $r \in [k]$ such that $\bxt$ is a ``nice example'' (Definition \ref{def: nice example}). Then with probability $1-2\exp(-\frac{D}{16})$, $\sign\left(\cn_{\btheta}(\bxt)\right) = y^{(r)}$.
\end{theorem}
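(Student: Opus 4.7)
The plan is to substitute the KKT representations (\ref{eqn: main kkt}) into $\cn_\btheta(\bxt)$ and show that the contribution of same-cluster training samples strictly dominates. Writing
\[
\bw_j^\top \bxt + b_j \;=\; v_j \sum_{i=1}^n \lambda_i y_i \phi'_{i,j}\bigl(\bxt_i^\top \bxt + 1\bigr),
\]
I would expand $\bxt_i^\top \bxt$ using $\bxt_i=\bM\bx_i+\bzeta^{(r_i)}+\bomega_i$ and $\bM^\top\bM=\bI_d$ into an intrinsic part $\bx_i^\top\bx$, an off-manifold noise part $(\bzeta^{(r_i)}+\bomega_i)^\top(\bzeta^{(r)}+\bomega)$, and two cross terms. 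The same decomposition applied with the sign-identity $v_j\phi(v_j u)=v_j^2\bigl[\phi(u)\onefunc_{v_j>0}-\phi(-u)\onefunc_{v_j<0}\bigr]$ reduces the theorem to checking that $\sum_i\lambda_i y_i\phi'_{i,j}(\bxt_i^\top\bxt+1)$ has the same sign as $y^{(r)}$ for every active neuron $j$.

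Next I would split the sum by cluster index. For $r_i=r$, Assumption \ref{ass:dist}\ref{a1}--\ref{a2} gives $\bx_i^\top\bx\approx\|\bmu^{(r)}\|^2=d$ up to logarithmic deviation, and Gaussian concentration pushes $(\bzeta^{(r)}+\bomega_i)^\top(\bzeta^{(r)}+\bomega)$ close to $\|\bzeta^{(r)}\|^2\sim D\tau^2$. For $r_i\ne r$, the near-orthogonality assumption bounds $|\bx_i^\top\bx|$, while $|\bzeta^{(r_i)\top}\bzeta^{(r)}|$ is of order $\tau^2\sqrt{D}$ rather than $\tau^2 D$. The constants $c_3(g,\tau,k)$ and $c_5(d,\tau,k)$ in \ref{ass:dist}\ref{a2} are precisely the margins by which the aggregated same-cluster positive mass beats the pooled cross-cluster negative mass after multiplying by the factor $k$ from summing over clusters, so the sign of $\sum_i\lambda_i y_i\phi'_{i,j}(\bxt_i^\top\bxt+1)$ agrees with $y^{(r)}$.

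Finally, I would show the total output does not vanish. Because the test point shares its intrinsic mean, cluster offset, and noise budget with the training distribution, the preactivations $\bw_j^\top\bxt+b_j$ are close to $\bw_j^\top\bxt_i+b_j$ for any same-cluster training sample $i$; the KKT margin $y_i\cn_\btheta(\bxt_i)\ge 1$ then forces at least one neuron with correctly signed $v_j$ to be active at $\bxt$, yielding a strictly positive $y^{(r)}\cn_\btheta(\bxt)$.

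The hard part will be the concentration step: I need a simultaneous control that $\|\bzeta^{(r)}\|^2\ge (1-\varepsilon)D\tau^2$ while every cross inner product $\bzeta^{(r_i)\top}\bzeta^{(r)}$ and every $\bzeta^{(r)\top}\bomega$ is bounded by an appropriate multiple of $\tau^2\sqrt{D}$, uniformly in the $k$ clusters and $n$ samples. Standard $\chi^2$ tails of the form $\Pr(|\|\bzeta^{(r)}\|^2-D\tau^2|\ge tD\tau^2)\le 2\exp(-cDt^2)$ together with a union bound, tuned so that the $c_3,c_5>0$ thresholds of \ref{ass:dist}\ref{a2} remain comfortably satisfied, should deliver the advertised failure probability $2\exp(-D/16)$. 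Once concentration is secured, the remainder is routine algebraic bookkeeping that turns the dominance into the desired sign statement.
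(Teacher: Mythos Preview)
Your reduction to a per-neuron sign condition is where the proposal breaks. You claim it suffices that $u_j:=\sum_i\lambda_i y_i\phi'_{i,j}(\bxt_i^\top\bxt+1)$ has the sign of $y^{(r)}$ for every neuron $j$. This sufficient condition is too strong to be provable: the activation pattern $\phi'_{i,j}\in\{0,1\}$ is fixed by the training data, and nothing prevents some neuron $j\in J_-$ from firing only on a single cluster $q\neq r$ with $y^{(q)}=-y^{(r)}$. For such a neuron $u_j$ has the wrong sign, and because $v_j<0$ it contributes $-v_j^2\phi(-u_j)<0$ to $\cn_\btheta(\bxt)$. Your fallback argument in the last paragraph (preactivations at $\bxt$ are ``close'' to those at a same-cluster training point, so an active correctly-signed neuron survives) is not enough either: closeness of preactivations does not by itself bound the damage from the wrong-sign neurons just described.

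The paper never attempts per-neuron control. It splits $\cn_\btheta(\bxt)=\sum_{j\in J_+}v_j\phi(\cdot)+\sum_{j\in J_-}v_j\phi(\cdot)$, uses $\phi(x)\ge x$ on the $J_+$ part, and for $J_-$ drops the (helpful) same-cluster term inside the ReLU before bounding. Both pieces are then estimated through the \emph{aggregate} quantities $\sum_{i\in I^{(q)}}\sum_{j\in J_\pm}v_j^2\lambda_i\phi'_{i,j}$, for which the margin constraints $y_i\cn_\btheta(\bxt_i)\ge 1$ applied at carefully chosen training points yield matching upper and lower bounds (Lemmas~\ref{lem:lam upper bound} and~\ref{lem:lam lower bound}, adapted from \cite{frei2023double}). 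This is the key ingredient missing from your outline; without it there is no mechanism to convert ``same-cluster inner products dominate cross-cluster ones'' into a quantitative lower bound on the full network output.

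You are also over-engineering the probabilistic step and misreading the role of several constants. For this theorem the only random event needed is the upper bound $\|\bzeta^{(q)}\|\le\tau\sqrt{13D/8}$ for each $q$, which guarantees $\|\bzeta^{(q)}+\bomega\|\le\bar\zeta$ and hence the deterministic inner-product bounds of Lemma~\ref{lem:properties}; a single $\chi^2$ tail gives the advertised $2\exp(-D/16)$. The cross terms $\bzeta^{(r_i)\top}\bzeta^{(r)}$ and the constants $c_3,c_5$ are irrelevant here---they enter only in the adversarial-perturbation theorem. Relatedly, the same-cluster quantity $(\bzeta^{(r)}+\bomega_i)^\top(\bzeta^{(r)}+\bomega)\approx D\tau^2$ is not signal in this proof; under Assumption~\ref{ass:dist} (which forces $\tau^2=\co(d/D)$) it is absorbed into $\Delta'$ as noise, and the separation is driven entirely by the intrinsic gap $d$ versus $p+\Delta'$.
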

Theorem \ref{thm:good example labeld correctly} states that the trained neural network can generalize well with a high probability when we have an example from the low-dimensional data space. 
Asymptotically, as the ambient dimension  $(D \to \infty)$ increases, this probability converges to 1.

However, even if the clean training generalizes well, our following theorem states that for any \emph{nice example} $\bxt$, there exists perturbation $\bz_{\shortparallel}$ and $\bz_{\perp}$ in the \emph{on-manifold} and \emph{off-manifold} direction which successfully misclassify the model with high probability. 
\begin{theorem}
     \label{thm:pert flips}
	Suppose $\cs$ satisfies Assumption \ref{ass:dist} and clean training has attained convergence under the setting of Theorem \ref{thm:kkt}.  
Let $\bxt \in \reals^D$ and $r \in [k]$ such that $\bxt$ is a ``nice example'' (Definition \ref{def: nice example}). 
	Then there exist  perturbations $\bz_{\perp}\propto \bu_{\perp},\,\bz_{\shortparallel}\propto \bu_{\shortparallel}$ with probability at least $1-\delta_1,1-\delta_2$ respectively; Such that,  $\cn_\btheta(\bxt - \bz_{\perp}),\, \cn_\btheta(\bxt - \bz_{\shortparallel}) \leq -1$ for $r\in Q_+$ and $\cn_\btheta(\bxt + \bz_{\perp}),\, \cn_\btheta(\bxt + \bz_{\shortparallel}) \geq 1$ for $r\in Q_-$. Also, 
 \begin{align*}
    &\norm{\bz_{\perp}}=\co{\left(\frac{d}{c\sqrt{kg\tau^2}}\right)},
    \norm{\bz_{\perp}}_{\infty}=\co{\left(\frac{d\sqrt{2\log (2g)}}{cg\tau}\right)}, \\ &\norm{\bz_{\shortparallel}}=\co{\left(\sqrt{\frac{d}{c^2k(2+\tau^2)}}\right)},\\ &\norm{\bz_{\shortparallel}}_{\infty}=\co{\left(\frac{\sqrt{d}+\tau\sqrt{2\log (2d)}}{c(2+\tau^2)}\right)}  ,
 \end{align*}
 where $ \bu_{\shortparallel}=\sum_{q\in [k]}y^{(q)}\left(\bmut^{(q)}+\bP_o\bzeta^{(q)}\right)$ and  $\bu_{\perp}=\sum_{q\in [k]}y^{(q)}\bP\bzeta^{(q)}$. $Q_+, Q_-$ are a set of clusters with labels $1,-1$ respectively and $c=k^{-1}\min\{|Q_+|,|Q_-|\}$. Probabilities $\delta_1, \delta_2$ are defined in Lemmas \ref{lem: w.h.p shifts nearly orthogonal in off manifold space}, \ref{lem: w.h.p shifts nearly orthogonal in manifold space} respectively.
\end{theorem}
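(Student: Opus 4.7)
The plan is to build each adversarial perturbation by scaling a direction tied to the KKT decomposition of the trained weights. Recall from Theorem~\ref{thm:kkt} and equation~\eqref{eqn: main kkt} that $\bw_j = v_j \sum_i \lambda_i y_i \phi'_{i,j} \bxt_i$ with $\lambda_i \ge 0$, and that each training point splits as $\bxt_i = \bmut^{(r)} + \bM\bxi_i + \bzeta^{(r)} + \bomega_i$. The authentic/volatile decomposition of Section~\ref{subsec: volatile bias} then makes it natural to probe $\bw_j$ against $\bu_\shortparallel = \sum_q y^{(q)}(\bmut^{(q)} + \bP_o \bzeta^{(q)})$ on the manifold and against $\bu_\perp = \sum_q y^{(q)} \bP \bzeta^{(q)}$ off the manifold. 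For either candidate direction $\bu$ I would evaluate $y^{(r)} \cn_\btheta(\bxt - y^{(r)} t \bu)$ and locate the smallest $t > 0$ driving this quantity below $-1$; the resulting perturbation $\bz = y^{(r)} t \bu$ flips the sign exactly as demanded by the $Q_+/Q_-$ split in the statement.

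The first step is a lower bound on the linear sensitivity of $\cn_\btheta$ at $\bxt$ along the chosen direction, i.e.\ a sum of terms of the form $v_j \phi'_j \langle \bw_j, \bu \rangle$. Substituting~\eqref{eqn: main kkt}, this reduces to a double sum over training points of inner products between the cluster means $\bmut^{(q)}$ and the cluster shifts $\bP\bzeta^{(q)}$ or $\bP_o\bzeta^{(q)}$. Assumption~\ref{ass:dist}(A2) cancels the $\bmu$-interactions up to the admissible slack; Gaussianity of $\bzeta$ makes the diagonal $\langle \bP\bzeta^{(q)}, \bP\bzeta^{(q)}\rangle$ concentrate near $g\tau^2$ and the off-diagonals near $0$ at scale $\sqrt{g\tau^4}$, with analogous $d\tau^2$ and $\sqrt{d\tau^4}$ rates in the on-manifold case. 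This is precisely where Assumption~(A4) and the constants $c_3,c_5$ are used, and where Lemmas~\ref{lem: w.h.p shifts nearly orthogonal in off manifold space} and~\ref{lem: w.h.p shifts nearly orthogonal in manifold space} supply the high-probability guarantees $1-\delta_1, 1-\delta_2$. The label-balance factor $c = k^{-1}\min(|Q_+|,|Q_-|)$ arises because, once the sign pattern $y_i y^{(q)}$ is telescoped through, only the minority-label clusters are guaranteed to pair with the correct sign.

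The second step converts this sensitivity into the four norm bounds. For $\ell_2$ I would use $\|\bu_\perp\| = \Theta(\tau\sqrt{kg})$ and $\|\bu_\shortparallel\| = \Theta(\sqrt{kd(1+\tau^2)})$, both from standard $\chi^2$ concentration on the $\bzeta^{(q)}$'s. For $\ell_\infty$ I would use $\|\bu_\perp\|_\infty = O(\tau\sqrt{2\log(2g)})$ from a maximum-of-Gaussians tail bound on $\bP\bzeta^{(q)}$, and $\|\bu_\shortparallel\|_\infty = O(\sqrt{d} + \tau\sqrt{2\log(2d)})$ from the entrywise bound on $\bmut^{(q)}$ implied by Assumption~(A1) plus the same Gaussian tail on $\bP_o\bzeta^{(q)}$. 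Combined with an $O(d)$ upper bound on $|\cn_\btheta(\bxt)|$ coming from the KKT formulas and Cauchy--Schwarz, the minimal valid scaling obeys $t \cdot (\mathrm{sensitivity}) \gtrsim d$, which solves to exactly the stated bounds on $\|\bz_\perp\|,\|\bz_\perp\|_\infty,\|\bz_\shortparallel\|,\|\bz_\shortparallel\|_\infty$.

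The main obstacle is that the ReLU nonlinearity prevents a clean linearization $\cn_\btheta(\bxt + s\bu) = \cn_\btheta(\bxt) + s\sum_j v_j \phi'_j \langle \bw_j, \bu \rangle$ once neurons toggle their activation status. I would handle this one-sidedly, as is standard in the Frei--Vardi line of work: a neuron newly activated in the direction opposing $y^{(r)}$ only further decreases $y^{(r)} \cn_\btheta(\bxt - y^{(r)} t \bu)$, while a neuron deactivated on the positive side is absorbed into the margin budget. A subtler point is ensuring the sensitivity does not cancel once the signs of the $v_j$ are accounted for; here the authentic/volatile split in~\eqref{eq: main kkt condition volatile w} is crucial, because $\bu_\perp$ interacts only with $\bP\bw_j$, whose coefficient structure is dictated by the $y_i$ alone and yields a strictly positive pairing under Assumption~(A4).
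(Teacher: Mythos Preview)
Your high-level architecture matches the paper: the perturbation directions $\bu_\perp,\bu_\shortparallel$ are the right ones, the concentration arguments for the $\bzeta^{(q)}$ interactions are exactly those supplied by Lemmas~\ref{lem: w.h.p shifts nearly orthogonal in off manifold space} and~\ref{lem: w.h.p shifts nearly orthogonal in manifold space}, and your $\ell_2/\ell_\infty$ norm computations for $\bu_\perp,\bu_\shortparallel$ coincide with the paper's. However, two specific steps in your plan would not go through as written.

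First, the bound $|\cn_\btheta(\bxt)|=O(d)$ is too weak by a factor of $d$. The paper proves $|\cn_\btheta(\bxt)|\le 2$ (Lemma~\ref{lem:output not too large}), and this constant bound is what fixes the final scaling at $\eta=\Theta\bigl(d/(ck\,c_6)\bigr)$ rather than $\Theta\bigl(d^2/(ck\,c_6)\bigr)$. With an $O(d)$ margin budget and the correct sensitivity (which, after invoking Lemma~\ref{lem:lam lower bound}, is of order $c_6\cdot ck/d$, not $c_6\cdot ck$), your $\|\bz_\perp\|$ would overshoot the stated bound. If your numbers come out right, it is because two errors of order $d$ are silently cancelling, not because the inequalities hold.

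Second, the one-sided ReLU argument you sketch does not deliver a rate. Knowing that newly activated $J_-$ neurons only help and that deactivated $J_+$ neurons are harmless shows $t\mapsto\cn_\btheta(\bxt-t\bu)$ is monotone, but says nothing about how fast it decreases. The sensitivity $\sum_j v_j\phi'_j(\bxt)\langle\bw_j,\bu\rangle$ you compute at $\bxt$ may have \emph{zero} contribution from $J_-$ (all of those neurons can be inactive at a correctly classified point), so linearizing at $\bxt$ gives no lower bound once the $J_+$ block switches off. The paper resolves this with a two-stage move $\bz=(\eta_1+\eta_2)\bu$: Lemma~\ref{lem:neuron input not small} bounds how negative each neuron input can be at $\bxt$, and Lemma~\ref{lem:neurons input become positive} shows that $\eta_1=(2\Delta'+p+1)/c_6$ already forces every $J_-$ neuron active. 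Only after that can one linearize on the $J_-$ block, compare back to $\cn_\btheta(\bxt)$ via $\phi(\bw_j^\top\bxt+b_j)=\max\{0,\bw_j^\top\bxt+b_j\}$, and choose $\eta_2$ to push the output past $-1$. Both ingredients are needed; monotonicity alone is not a substitute.
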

Theorem \ref{thm:pert flips} provides an upper bound on the strength of an \emph{on-manifold} ($\bz_{\shortparallel}$) and \emph{off-manifold} ($\bz_{\perp}$) adversarial perturbation for any \emph{nice example}. These upper bounds hold with probability $1-\delta_1, 1-\delta_2$ for on/off-manifold perturbations respectively, where $\delta_1\to 0$ as $D\to \infty$ and $\delta_2\to 0$ as $D,d\to \infty$.\\ Note that $\tau^2=\co{(d/D)}$ based on our assumptions (Section \ref{subsec: assump}), hence we see that $\ell_2$ strength of $\bz_{\perp}$ decreases as the dimension gap $g$ increases and asymptotically (in $g$) is equivalent to $\co{(d/(c\sqrt{k}))}$. For $\bz_{\shortparallel}$ the $\ell_2$ strength increases and saturates to $\co{(\sqrt{d/(c^2k)})}$ as $g\to\infty$. 

On the other hand, the $\ell_\infty$ strength of $\bz_{\perp}$ and $\bz_{\shortparallel}$ goes to $0$ and $\co{(\sqrt{d})}$, respectively as $g\to \infty$. That is, asymptotically, the $\ell_2$ strength  of \emph{on-manifold} is a bit smaller (or comparable \footnote{When $\tau^2=\Theta(\frac{d}{D})$ then $\snorm{\bz_{\perp}}$ and $\snorm{\bz_{\shortparallel}}$ are both $\co{(\sqrt{\frac{d}{c^2k}})}$ as $g\to\infty$~.}) than the \emph{off-manifold} perturbation, but in contrast the $\ell_\infty$ strength for the \emph{off-manifold} perturbation goes to 0 while the \emph{on-manifold} perturbation is still some constant depending on $d$. Additionally, the asymptotic upper bound for \emph{on-manifold} perturbation reduces to the rate to \cite{frei2023double}; our framework is an extension introducing dimension gap and two distinct attack spaces, \cite{frei2023double} results can be related to our \emph{on-manifold} perturbation results. As these are upper-bound results to an existential result, they also act as an upper bound to the smallest adversarial perturbation. In Section \ref{sec: experiments}, we will validate the tightness of these upper bounds with empirical experiments.

\begin{figure*}[h]
    \centering
    \includegraphics[scale=0.5]{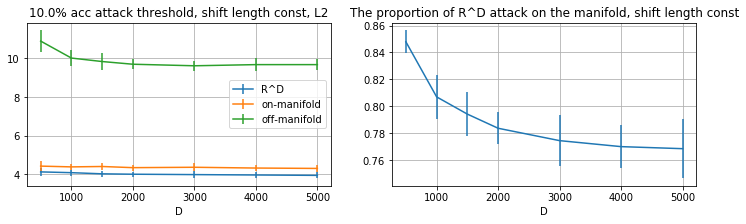}
    \includegraphics[scale=0.5]{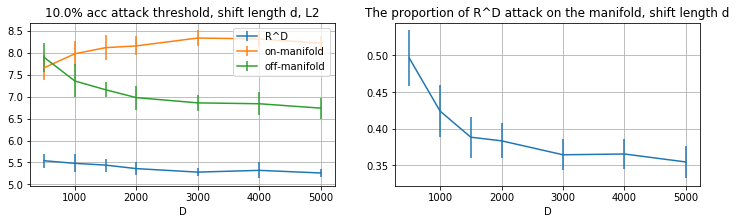}\vspace{-0.2in}
    \caption{Attack strength threshold associating with $10\%$ robust test accuracy (left) and the on-manifold proportion in $\mathbb{R}^D$ attack. The top and bottom rows correspond to the cases $\snorm{\bzeta}=\Theta(1),\Theta(\sqrt{d})$ respectively.}
    \label{fig:change_D}\vspace{-0.1in}
\end{figure*}

\paragraph{Rate comparison with \cite{melamed2023adversarial}}
\label{remark:rate_comparison} Despite the differences in \cite{melamed2023adversarial} and our theoretical setting ( Remark \ref{remark:setting}), we can attempt to draw some parallels with their main result. In \cite{melamed2023adversarial}, an adversarial perturbation (off-manifold) exists as a function of orthogonal weights when an observation $\bxt$ strictly resides in a $d$ dimensional linear subspace of a $D$ dimensional Euclidean space (dimension gap $g=D-d$). The $\ell_2$ strength of this off-manifold perturbation in our notation can be expressed as $\co{\left(N(\bxt)\sqrt{\frac{D}{g}}\right)}$ up to some constants. Here, $N(\cdot)$ represents their two-layer ReLU lazy-trained network. For the case when $\tau^2=\Theta\left(\frac{d}{D}\right)$ in  Theorem \ref{thm:pert flips}, we can obtain a similar rate for $\snorm{\bz_{\perp}}$ in our framework. However, for smaller $\tau^2$, the rate in our result is smaller. 
\section{Experiments}
\label{sec: experiments}
In this section, we conduct a simulation study and experiments in the MNIST dataset to exhibit that the data dimension plays an essential role in adversarial robustness. 
Due to the page limit, we postpone some figures to the appendix.

\subsection{Simulation Study}
\label{sec: simulation}

To generate the data, we first generate $k$ means $\{\bmu^{(r)}\}_{r=1}^k$ corresponding to the $k$ clusters in the lower $d$ dimension from $\bmu^{(r)}\sim N(0,\bI_d)$. This enforces that $\snorm{\bmu^{(r)}}=\Theta(d)$ with high probability, which is a relaxed version of assumption (A1). Given the $k$ cluster means, we generate $n=1000$ intrinsic data signals $\{\bx_i\}_{i=1}^{1000}$ centered around these means drawn uniformly. For each $i$, $\bx_i=\bmu_i+\bxi_i$, where $\bmu_i\sim \mathcal{U}(\bmu^{(1)},\ldots, \bmu^{(k)})$ and $\bxi_i\sim N(0,\bI_d/d)$, which enforces assumption (A3) with high probability.

After getting the intrinsic data signals $\{\bx_i\}_{i=1}^{1000}$, we immerse them into a $D (\gg d)$ dimension space to get our observed data samples $\{\bxt_i\}_{i=1}^{1000}$.
In alignment with our theoretical setting, we apply a matrix $\bM$ for the initial linear transformation. 
We emulate this matrix by generating $d$, $D$-dimensional  $N(0,D^{-1}\bI_D)$  Gaussian vectors $m_1,\ldots, m_d$; which act as the column of $\bM$. This is justified as when $D\gg d$ and $i\neq j$, $m_i^{\top}m_j=O_p(1/\sqrt{D})$, i.e., the $d$ vectors are almost orthogonal. 
Lastly, we add translations $\{\bzeta^{(r)}\}_{r=1}^k$ corresponding to each cluster. We generate $\bzeta^{(r)}\sim N(0,\tau^2\bI_D)$ for each cluster $r=1,\ldots,k$, the choice of $\tau^2=D^{-1},dD^{-1}$ corresponds to $\snorm{\bzeta^{(r)}}=\Theta(1),\Theta(\sqrt{d})$ respectively, with high probability. The architecture and training details can be found in Appendix \ref{appendix: sim train setting}.

To validate our theory, we need to find the minimal attack strength required to misclassify an example. This is not computationally feasible; hence as an alternative, we find the minimal attack strength required to misclassify a large proportion of the data.
We iterate over attack strengths until reaching 10\% robust testing accuracy. The details are in Appendix \ref{appendix: minimal strength attack}.


In the first simulation, we fix the intrinsic dimension $d=100$ and vary the ambient dimension $D$. The results are summarized in Figure \ref{fig:change_D}. The top panel of Figure \ref{fig:change_D} is the $\ell_2$ attack strength for $\mathbb{R}^D$, on-manifold, and off-manifold attacks. The bottom panel is the proportion of $\mathbb{R}^D$ attack that lies on the manifold. The rows of Figure \ref{fig:change_D} corresponds to the cases when $\snorm{\bzeta^{(r)}}$ is $\Theta(\sqrt{d})$ and $\Theta(1)$ or $\tau^2=dD^{-1},D^{-1}$ respectively.

There are two observations. First, the trends for $\ell_2$ attack strength in Figure \ref{fig:change_D} are as expected as in the discussion in Theorem \ref{thm:pert flips}.

Second, the bottom panel of the figure shows the ratio of the on-manifold component to the off-manifold component for the $\mathbb{R}^D$ attack. We see that as $D$ increases, this proportion decreases, implying that even when the attack has no constraint with an increase in $D$ or $g$, the model becomes more vulnerable to off-manifold perturbations.

Figure \ref{fig:simulation:inf} in the appendix is analogous to Figure \ref{fig:change_D} but considering the $\ell_\infty$ norm of the attack. The observations are consistent with the discussion succeeding Theorem \ref{thm:pert flips}. One difference is that for $\ell_{\infty}$ upper bound of the on-manifold attack, it also approaches $0$ instead of a constant (in $D,g$) $\co{(\sqrt{d})}$. This is an artifact of generating $\bM$ from random Gaussian vectors instead of the fixed-$\bM$ scenario considered in Theorem \ref{thm:pert flips}.

In the appendix, we also plot the attack threshold generated by $\ell_{\infty}$ attacks. 
The results are provided in Figure \ref{fig:simulation:inf_}. They are identical to the results of Figure \ref{fig:simulation:inf}, implying our bounds provide accurate guidance for the $\ell_{\infty}$ case. Intuitively, when $D$ increases, the total attack budget grows when fixing the $\ell_{\infty}$ attack strength, so we observe a decreasing trend in the top panel of Figure \ref{fig:simulation:inf_}. Besides, if we multiply the attack strength by $\sqrt{D}$, we can observe a similar trend as the $\ell_2$ attack in Figure \ref{fig:change_D}, consistent with our prior discussions.


In contrast to the first simulation, we fix the ambient dimension $D=2000$ in the second simulation and vary the intrinsic dimension $d$. We track the attack threshold as a function of $d$. The results are provided in the appendix, Figure \ref{fig:change_codim}. One can see that when $d$ increases, there is more on-manifold information; thus, the model is more robust.
\paragraph{Effect of Initialization}
\label{sec: init}
We also conduct experiments to examine the effect of neural network initialization and $\ell_2$ regularization (weight decay). In the appendix, the results are postponed to Figure \ref{fig:change_init} and \ref{fig:change_decay}. The trained model slightly improves when using a small initialization or weight decay compared to Figure \ref{fig:change_D}. However, the vulnerability towards off-manifold attack in these two scenarios is similar to Figure \ref{fig:change_D}. That is, dimension-gap hampers robustness regardless of the initialization.
\subsection{MNIST Dataset}
\label{subsec: mnist_exp}
We also conduct experiments on the MNIST dataset to validate the effect of the gap between intrinsic and ambient dimensions on the adversarial robustness. According to our theoretical setup, the MNIST dataset can be considered as a union of low-dimensional local clusters immersed in a high-dimensional space as bolstered by Table \ref{tab:mnistclasses} in the appendix.

Different from the simulation, we cannot access the low-dimensional intrinsic signal corresponding to the MNIST images; the original data set already has some dimension gap. Regardless, we can still attempt to track the effect of the additional dimension gap on the adversarial robustness. To make a fair assessment, we implement \emph{isometric transformations} on the original MNIST images that increase the ambient dimension $D$ to track the dependence of attack strength with varying dimensions.

\begin{figure}[h]\vspace{-0.2in}
\includegraphics[scale=0.3]{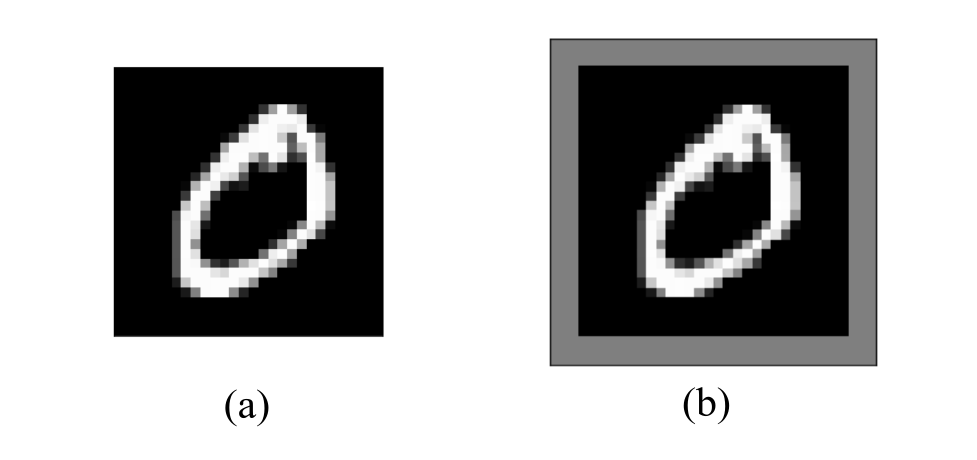}\vspace{-0.2in}
\caption{(a) Original image 28x28, $D=784$; (b) Padded image 34x34, $P=3$, $D=1156$.}\vspace{-0.1in}
\label{fig:padding}
\end{figure}
We pad gray (0.5) pixels on the image's border to increase the ambient dimension (Figure \ref{fig:padding}). Adding a pad of $P$ pixels along the borders increases the ambient dimension $D$ by $4P^2+2P$. Adding gray borders to the images is an isometric transformation, i.e., the distances between every image are preserved, and no information is lost. Padding the images will allow us to emulate the effect of increasing the dimension gap without distorting the data. Hence the attack strengths across dimensions are still comparable.

We use a convolution network to train the model. The architecture and training details are in Appendix \ref{appendix: mnist setting}.

 We consider two settings: (1) training for 10 epochs and (2) training until the loss is 0.02. While presenting the results for (1) in the main content, we postpone the results for (2) in the appendix. After training the neural network, we examine the robustness against $\ell_2$ and $\ell_\infty$ attacks in the testing data, and the results are summarized in Figure \ref{fig:mnist_10epoch_l2} and \ref{fig:mnist_10epoch_linf}.

From our theory, it is expected that when increasing the ambient dimension $D$, the model becomes less robust, i.e., we require smaller $\ell_2$ attacks until the norm saturates to some constant. In the top panel of Figure \ref{fig:mnist_10epoch_l2}, one can see that when $P$ increases (in turn increasing $D$), we only need a smaller $\ell_2$ attack to achieve a low accuracy and this small value converges to some constant which aligns with the theory. In the bottom panel of Figure \ref{fig:mnist_10epoch_l2}, we plot the accuracy w.r.t. different attack strengths.
For $\ell_\infty$ attack, based on the observations for $\ell_2$ attack in Figure \ref{fig:mnist_10epoch_l2}, it is intuitive that the neural network is less robust against $\ell_{\infty}$ attack as well when $D$ increases as observed in Figure \ref{fig:mnist_10epoch_linf}.

\begin{figure}[!ht]
    \centering\vspace{-0.05in}
    \includegraphics[scale=0.45]{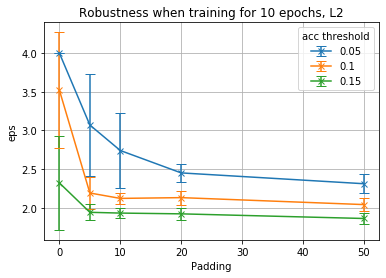}
    \includegraphics[scale=0.45]{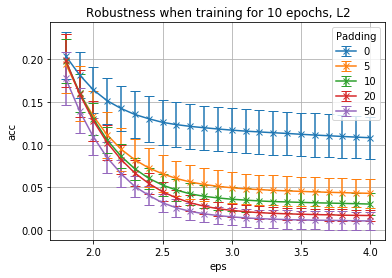}\vspace{-0.2in}
    \caption{The relationship between attack strength and the padding number $P$ in Table \ref{tab:my_label} (Top) and the relationship between robust test accuracy and attack strength for different values of $P$ (Bottom). $\ell_2$ attack. 10 training epochs.} 
    \label{fig:mnist_10epoch_l2}
\end{figure}

\begin{figure}[!ht]
    \centering
    \includegraphics[scale=0.45]{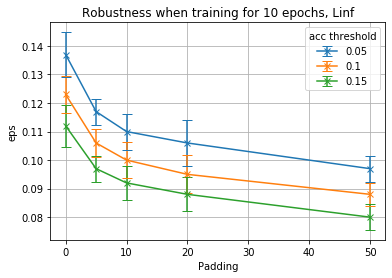}
    \includegraphics[scale=0.45]{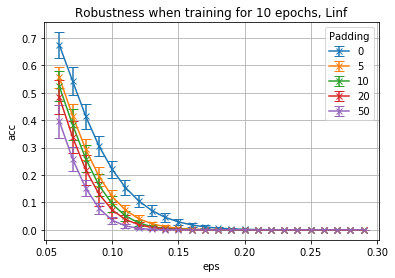}\vspace{-0.2in}
    \caption{The relationship between attack strength and the padding number $P$ (Top), and the relationship between robust test accuracy and attack strength under different $P$.(Bottom) $\ell_\infty$ attack. 10 training epochs.} 
    \label{fig:mnist_10epoch_linf}
\end{figure}
\subsection{Fashion-MNIST}
In the same vein as Section \ref{subsec: mnist_exp}, we verify the effect of increased dimension gap (by padding grey pixel as in the MNIST experiment) on the vulnerability of the model on FMNIST images \citep{xiao2017/online} in Figure \ref{fig:fmnist_30epoch_linf}.
Figure \ref{fig:fmnist_30epoch_linf} illustrates the same pattern of the bottom plot in Figure \ref{fig:mnist_10epoch_linf}.
We use the same convolution network architecture and training setting used for the MNIST experiment (refer to Section \ref{appendix: mnist setting}) averaging over 30 random seeds.
\begin{figure}[!ht]
    \centering
    \includegraphics[scale=0.5]{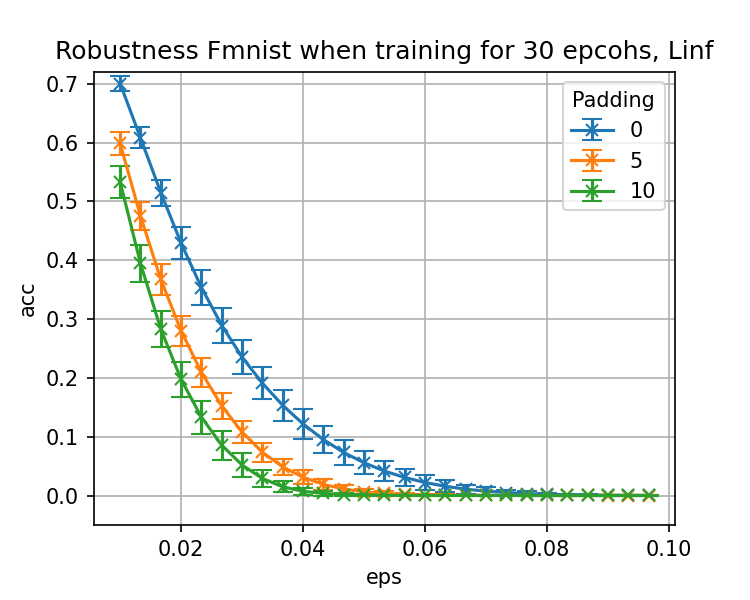}\vspace{-0.2in}
    \caption{Relationship between robust test accuracy and attack strength under different $P$. $\ell_\infty$ attack. 30 training epochs.} 
    \label{fig:fmnist_30epoch_linf}
\end{figure}
\subsection{Imagenet Binary Classification}
We subset two classes (Dog and Church) from the Imagenet dataset \citep{5206848} and scale all the images to a uniform resolution of 320x320 pixels. In contrast to the FMNIST/MNIST experiments, where we emulate the effect of increasing the dimension gap via adding padding to the images, for Imagenet, we scale the images to different resolutions. The motivation of this experiment is that for higher-resolution images, the ambient dimension increases disproportionately w.r.t the useful features or on-manifold dimensions. From Table \ref{tab: intrinsic_dimension_resolution}, even with different resolutions, the intrinsic dimension of the data remains almost constant. Hence, we can use resolution as a surrogate for the intrinsic-ambient dimension gap: higher-resolution images will tend to have a higher dimension gap. \\

\begin{table}[h]
\begin{tabular}{@{}lllll@{}}
\toprule
Size($\text{px}^2$) & Ambient & LPCA & MLE(k=5) & TwoNN \\ \midrule
320         & 307200  & 12   & 22.75    & 31.27 \\ 
128        & 49152   & 12   & 21.87    & 30.08 \\ 
64       & 12288   & 12   & 20.88    & 28.93 \\ \bottomrule
\end{tabular}
\caption{Intrinsic dimension is unchanged even with varying resolution. Imagenet (Dogs+Church).}
\vspace{-1.8mm}
\label{tab: intrinsic_dimension_resolution}
\end{table}
We rescale the images to 64x64, 128x128, and 320x320 (base) pixels and train Resnet18 models \citep{He_2016_CVPR} with $10^{-3}$ learning rate and Adam optimizer for 20 epochs in each resolution setting. All models have been trained for 20 epochs for uniform comparison, with $\sim 0.02$ training loss and $~99\%$ accuracy averaging over 30 seeds. After the clean training, the models are subjected to $\ell_\infty$ PGD attack of varying strengths up to $\epsilon=16/255$.

Figure \ref{fig:imagenet_linf} illustrates that, as the resolution of an image increases, the vulnerability also increases, bolstering our dimension gap argument further.
\begin{figure}[!ht]
    \centering
    \includegraphics[scale=0.5]{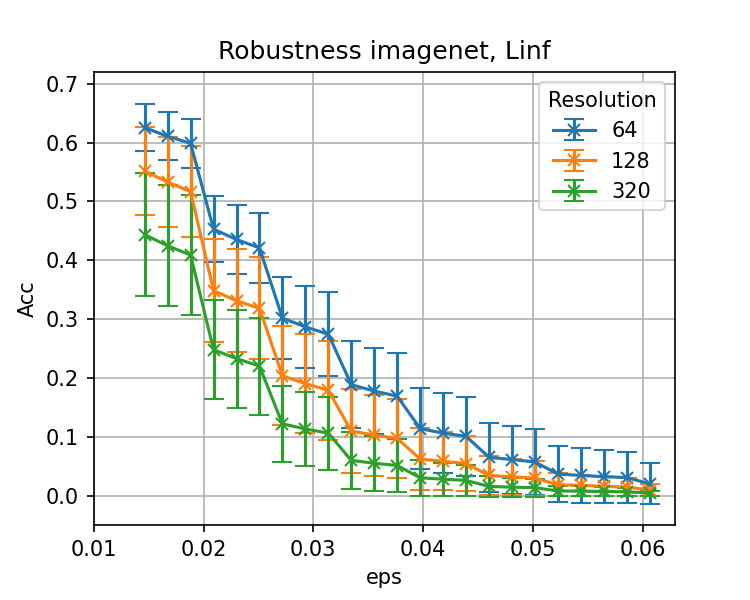}\vspace{-0.2in}
    \caption{Relationship between robust test accuracy and attack strength under different resolutions. $\ell_\infty$ attack.} 
    \label{fig:imagenet_linf}
\end{figure}
\section{Discussion and Conclusion}
We introduce the notion of natural and unnatural attacks residing in on and off manifold space concerning the low dimensional data space. Despite the generalization capabilities of a cleanly trained model on examples drawn from the data space, it will exhibit vulnerabilities to perturbations in both off and on manifold directions. In particular, the off-manifold adversarial perturbations can get arbitrarily small (in $\ell_{\infty}$ norm) with the increase in dimension gap. 
\paragraph{Adversarial Training} This work discusses the intrinsic-ambient dimension gap as the root cause for prevalent adversarial attacks in literature. However, we have yet to comment on adversarially trained models that are used to induce robustness to such attacks. We attempt to motivate the success of adversarial training through the lens of our framework. In contrast to clean training, adversarial training aims to minimize the loss over the worst possible attacks the data can exhibit. We can think of adversarial training as clean training on a dataset consisting of adversarial examples of the original data. Even though the original data space is a low dimensional subset of the whole Euclidean space, the dimension of adversarial data space will always be equal to the ambient dimension as we consider a $D$ dimensional $\epsilon$ ball around the original data space. Where $\epsilon$ is a parameter determining the strength of attacks, adversarial data space makes the dimension gap $0$ and, based on our theory, should not induce any \emph{off-manifold} or \emph{unnatural attacks}. Thinking of adversarial training as clean training on a much higher dimensional data space also sheds light on the complexity of adversarial training \citep{schmidt2018adversarially}. A higher dimensional data space requires many more observations for a representative sample.
\newpage
\bibliographystyle{asa}
\bibliography{reference}

\begin{thebibliography}{31}
\newcommand{\enquote}[1]{``#1''}
\expandafter\ifx\csname natexlab\endcsname\relax\def\natexlab#1{#1}\fi

\bibitem[{Adler and Taylor(2007)}]{adler2007gaussian}
Adler, R.~J. and Taylor, J.~E. (2007), \enquote{Gaussian inequalities,}
  \textit{Random Fields and Geometry}, 49--64.

\bibitem[{Bartlett et~al.(2021)Bartlett, Bubeck, and
  Cherapanamjeri}]{bartlett2021adversarial}
Bartlett, P., Bubeck, S., and Cherapanamjeri, Y. (2021), \enquote{Adversarial
  examples in multi-layer random relu networks,} \textit{Advances in Neural
  Information Processing Systems}, 34, 9241--9252.

\bibitem[{Bubeck et~al.(2021)Bubeck, Cherapanamjeri, Gidel, and Tachet~des
  Combes}]{bubeck2021single}
Bubeck, S., Cherapanamjeri, Y., Gidel, G., and Tachet~des Combes, R. (2021),
  \enquote{A single gradient step finds adversarial examples on random
  two-layers neural networks,} \textit{Advances in Neural Information
  Processing Systems}, 34, 10081--10091.

\bibitem[{Carlini and Wagner(2017)}]{carlini2017towards}
Carlini, N. and Wagner, D. (2017), \enquote{Towards evaluating the robustness
  of neural networks,} in \textit{2017 ieee symposium on security and privacy
  (sp)}, Ieee, pp. 39--57.

\bibitem[{Daniely and Shacham(2020)}]{daniely2020relu}
Daniely, A. and Shacham, H. (2020), \enquote{Most ReLU Networks Suffer from
  \textbackslash ell\^{}2 Adversarial Perturbations,} in \textit{Advances in
  Neural Information Processing Systems}, eds. Larochelle, H., Ranzato, M.,
  Hadsell, R., Balcan, M., and Lin, H., Curran Associates, Inc., vol.~33, pp.
  6629--6636.

\bibitem[{Deng et~al.(2009)Deng, Dong, Socher, Li, Li, and Fei-Fei}]{5206848}
Deng, J., Dong, W., Socher, R., Li, L.-J., Li, K., and Fei-Fei, L. (2009),
  \enquote{ImageNet: A large-scale hierarchical image database,} in
  \textit{2009 IEEE Conference on Computer Vision and Pattern Recognition}, pp.
  248--255.

\bibitem[{Facco et~al.(2017)Facco, d'Errico, Rodriguez, and Laio}]{Facco_2017}
Facco, E., d'Errico, M., Rodriguez, A., and Laio, A. (2017),
  \enquote{Estimating the intrinsic dimension of datasets by a minimal
  neighborhood information,} \textit{Scientific Reports}, 7.

\bibitem[{Frei et~al.(2023)Frei, Vardi, Bartlett, and Srebro}]{frei2023double}
Frei, S., Vardi, G., Bartlett, P.~L., and Srebro, N. (2023), \enquote{The
  Double-Edged Sword of Implicit Bias: Generalization vs. Robustness in ReLU
  Networks,} \textit{arXiv preprint arXiv:2303.01456}.

\bibitem[{Fukunaga and Olsen(1971)}]{Fukunaga1971}
Fukunaga, K. and Olsen, D. (1971), \enquote{An Algorithm for Finding Intrinsic
  Dimensionality of Data,} \textit{IEEE Transactions on Computers}, C-20,
  176--183.

\bibitem[{Goodfellow et~al.(2014)Goodfellow, Shlens, and
  Szegedy}]{goodfellow2014explaining}
Goodfellow, I.~J., Shlens, J., and Szegedy, C. (2014), \enquote{Explaining and
  harnessing adversarial examples,} \textit{arXiv preprint arXiv:1412.6572}.

\bibitem[{Haro et~al.(2008)Haro, Randall, and Sapiro}]{Haro2008}
Haro, G., Randall, G., and Sapiro, G. (2008), \enquote{Translated Poisson
  Mixture Model for Stratification Learning,} \textit{International Journal of
  Computer Vision}, 80, 358--374.

\bibitem[{He et~al.(2015)He, Zhang, Ren, and Sun}]{he2015delving}
He, K., Zhang, X., Ren, S., and Sun, J. (2015), \enquote{Delving deep into
  rectifiers: Surpassing human-level performance on imagenet classification,}
  in \textit{Proceedings of the IEEE international conference on computer
  vision}, pp. 1026--1034.

\bibitem[{He et~al.(2016)He, Zhang, Ren, and Sun}]{He_2016_CVPR}
--- (2016), \enquote{Deep Residual Learning for Image Recognition,} in
  \textit{Proceedings of the IEEE Conference on Computer Vision and Pattern
  Recognition (CVPR)}.

\bibitem[{Hornik et~al.(1989)Hornik, Stinchcombe, and
  White}]{hornik1989multilayer}
Hornik, K., Stinchcombe, M., and White, H. (1989), \enquote{Multilayer
  feedforward networks are universal approximators,} \textit{Neural networks},
  2, 359--366.

\bibitem[{James et~al.(2013)James, Witten, Hastie, Tibshirani,
  et~al.}]{james2013introduction}
James, G., Witten, D., Hastie, T., Tibshirani, R., et~al. (2013), \textit{An
  introduction to statistical learning}, vol. 112, Springer.

\bibitem[{Laurent and Massart(2000)}]{laurent2000adaptive}
Laurent, B. and Massart, P. (2000), \enquote{Adaptive estimation of a quadratic
  functional by model selection,} \textit{Annals of statistics}, 1302--1338.

\bibitem[{LeCun et~al.(2010)LeCun, Cortes, and Burges}]{lecun2010mnist}
LeCun, Y., Cortes, C., and Burges, C. (2010), \enquote{MNIST handwritten digit
  database,} \textit{ATT Labs [Online]. Available:
  http://yann.lecun.com/exdb/mnist}, 2.

\bibitem[{Ledoux(2006)}]{ledoux2006isoperimetry}
Ledoux, M. (2006), \enquote{Isoperimetry and Gaussian analysis,}
  \textit{Lectures on Probability Theory and Statistics: Ecole d’Et{\'e} de
  Probabilit{\'e}s de Saint-Flour XXIV—1994}, 165--294.

\bibitem[{Leshno et~al.(1993)Leshno, Lin, Pinkus, and
  Schocken}]{leshno1993multilayer}
Leshno, M., Lin, V.~Y., Pinkus, A., and Schocken, S. (1993),
  \enquote{Multilayer feedforward networks with a nonpolynomial activation
  function can approximate any function,} \textit{Neural networks}, 6,
  861--867.

\bibitem[{Liu et~al.(2019)Liu, Bai, Jiang, Chen, and
  Wang}]{liu2019understanding}
Liu, J., Bai, Y., Jiang, G., Chen, T., and Wang, H. (2019),
  \enquote{Understanding Why Neural Networks Generalize Well Through GSNR of
  Parameters,} in \textit{International Conference on Learning
  Representations}.

\bibitem[{Lyu and Li(2020)}]{lyu2020gradient}
Lyu, K. and Li, J. (2020), \enquote{Gradient Descent Maximizes the Margin of
  Homogeneous Neural Networks,} .

\bibitem[{Madry et~al.(2017)Madry, Makelov, Schmidt, Tsipras, and
  Vladu}]{madry2017towards}
Madry, A., Makelov, A., Schmidt, L., Tsipras, D., and Vladu, A. (2017),
  \enquote{Towards deep learning models resistant to adversarial attacks,}
  \textit{arXiv preprint arXiv:1706.06083}.

\bibitem[{Melamed et~al.(2023)Melamed, Yehudai, and
  Vardi}]{melamed2023adversarial}
Melamed, O., Yehudai, G., and Vardi, G. (2023), \enquote{Adversarial Examples
  Exist in Two-Layer ReLU Networks for Low Dimensional Data Manifolds,}
  \textit{arXiv preprint arXiv:2303.00783}.

\bibitem[{Pinkus(1999)}]{pinkus1999approximation}
Pinkus, A. (1999), \enquote{Approximation theory of the MLP model in neural
  networks,} \textit{Acta numerica}, 8, 143--195.

\bibitem[{Schmidt et~al.(2018)Schmidt, Santurkar, Tsipras, Talwar, and
  Madry}]{schmidt2018adversarially}
Schmidt, L., Santurkar, S., Tsipras, D., Talwar, K., and Madry, A. (2018),
  \enquote{Adversarially robust generalization requires more data,}
  \textit{Advances in neural information processing systems}, 31.

\bibitem[{Shamir et~al.(2022)Shamir, Melamed, and
  BenShmuel}]{shamir2022dimpled}
Shamir, A., Melamed, O., and BenShmuel, O. (2022), \enquote{The Dimpled
  Manifold Model of Adversarial Examples in Machine Learning,} .

\bibitem[{Szegedy et~al.(2013)Szegedy, Zaremba, Sutskever, Bruna, Erhan,
  Goodfellow, and Fergus}]{szegedy2013intriguing}
Szegedy, C., Zaremba, W., Sutskever, I., Bruna, J., Erhan, D., Goodfellow, I.,
  and Fergus, R. (2013), \enquote{Intriguing properties of neural networks,}
  \textit{arXiv preprint arXiv:1312.6199}.

\bibitem[{Vardi et~al.(2022)Vardi, Yehudai, and Shamir}]{vardi2022gradient}
Vardi, G., Yehudai, G., and Shamir, O. (2022), \enquote{Gradient methods
  provably converge to non-robust networks,} \textit{Advances in Neural
  Information Processing Systems}, 35, 20921--20932.

\bibitem[{Xiao et~al.(2017)Xiao, Rasul, and Vollgraf}]{xiao2017/online}
Xiao, H., Rasul, K., and Vollgraf, R. (2017), \enquote{Fashion-MNIST: a Novel
  Image Dataset for Benchmarking Machine Learning Algorithms,} .

\bibitem[{Xiao et~al.(2022)Xiao, Yang, Fan, Wang, and
  Luo}]{xiao2022understanding}
Xiao, J., Yang, L., Fan, Y., Wang, J., and Luo, Z.-Q. (2022),
  \enquote{Understanding adversarial robustness against on-manifold adversarial
  examples,} \textit{arXiv preprint arXiv:2210.00430}.

\bibitem[{Zhang et~al.(2022)Zhang, Zhang, Hu, Goswami, Chen, and
  Metaxas}]{zhang2022manifold}
Zhang, W., Zhang, Y., Hu, X., Goswami, M., Chen, C., and Metaxas, D.~N. (2022),
  \enquote{A Manifold View of Adversarial Risk,} in \textit{International
  Conference on Artificial Intelligence and Statistics}, PMLR, pp.
  11598--11614.

\end{thebibliography}

\section*{Checklist}
 \begin{enumerate}

 \item For all models and algorithms presented, check if you include:
 \begin{enumerate}
   \item A clear description of the mathematical setting, assumptions, algorithm, and/or model. \\\textbf{Yes}, Refer to Sections \ref{subsec: data}, \ref{subsec: assump}, \ref{sec: experiments} respectively.
   \item An analysis of the properties and complexity (time, space, sample size) of any algorithm. \\\textbf{Not Applicable}
   \item (Optional) Anonymized source code, with specification of all dependencies, including external libraries. \\\textbf{No}, we only have simulations and simple experiments for which the details provided in Section \ref{sec: experiments}, Appendix \ref{Additional details exp} suffice.
 \end{enumerate}

 \item For any theoretical claim, check if you include:
 \begin{enumerate}
   \item Statements of the full set of assumptions of all theoretical results. \\\textbf{Yes}, Refer section \ref{subsec: assump}.
   \item Complete proofs of all theoretical results. \\\textbf{Yes}, Refer to the proofs in Appendix \ref{proofs: apendix}.
   \item Clear explanations of any assumptions. \\\textbf{Yes}, Refer section \ref{subsec: assump}. 
 \end{enumerate}

 \item For all figures and tables that present empirical results, check if you include:
 \begin{enumerate}
   \item The code, data, and instructions needed to reproduce the main experimental results (either in the supplemental material or as a URL). \\\textbf{No}, We are using simple simulations.
   \item All the training details (e.g., data splits, hyperparameters, how they were chosen).\\ \textbf{Yes}, details provided in Section \ref{sec: experiments}, Appendix \ref{Additional details exp}.
         \item A clear definition of the specific measure or statistics and error bars (e.g., with respect to the random seed after running experiments multiple times). 
        \\
        \textbf{Yes}, We provide error bars by running multiple runs. 
         \item A description of the computing infrastructure used. (e.g., type of GPUs, internal cluster, or cloud provider). \textbf{No}, Computation is inexpensive and doesn't need specialized infrastructure.
 \end{enumerate}

 \item If you are using existing assets (e.g., code, data, models) or curating/releasing new assets, check if you include:
 \begin{enumerate}
   \item Citations of the creator If your work uses existing assets.\\ \textbf{Yes}, Only external asset used is MNIST dataset which has been cited.
   \item The license information of the assets, if applicable.\\ \textbf{Not Applicable}
   \item New assets either in the supplemental material or as a URL, if applicable. \\\textbf{Not Applicable}
   \item Information about consent from data providers/curators.\\ \textbf{Not Applicable}
   \item Discussion of sensible content if applicable, e.g., personally identifiable information or offensive content. \textbf{Not Applicable}
 \end{enumerate}

 \item If you used crowdsourcing or conducted research with human subjects, check if you include:
 \begin{enumerate}
   \item The full text of instructions given to participants and screenshots. \\\textbf{Not Applicable}
   \item Descriptions of potential participant risks, with links to Institutional Review Board (IRB) approvals if applicable. \\\textbf{Not Applicable}
   \item The estimated hourly wage paid to participants and the total amount spent on participant compensation. \\\textbf{Not Applicable}
 \end{enumerate}

 \end{enumerate}
\appendix
\onecolumn
\section{Proof Sketch}
We adapt the mathematical techniques used in \cite{frei2023double} to our framework.
\begin{itemize}
    \item[1.] We use the KKT condition in \eqref{eqn: main kkt} to connect the neural network weights $\bw_j,b_j$ to the data $\bxt_i,y_i$. Then we can write the outputs at a new sample $\bxt$  (i.e., $\cn_{\btheta}(\bxt)=\sum_{j=1}^wv_j\phi(\bw_j^\top\bxt+b_j)$) in terms of interactions ($\inner{\bxt_i,\bxt}$) between the observed signal $\bxt_i$ and $\bxt$. For the values of $\inner{\bxt_i,\bxt}$, the means $\bmu^{(r)}$, $r\in[k]$  are nearly orthogonal [\ref{ass:dist} (A2)] and impose $\inner{\bxt_i,\bxt}$ to be larger, if $\bxt_i,\bxt$ belong to the same cluster compared to the case when they are from different clusters. This ensures that the neuron output is consistent when a sample is from the same cluster. (Theorem \ref{thm:good example labeld correctly})
    \item[2.] We further study the adversarial attack. As discussed in Section \ref{subsec: volatile bias}, $\bw_j$ can be decomposed into $\bP_o\bw_j$ and $\bP\bw_j$. We design perturbations $\bz_{\shortparallel}$ and $\bz_{\perp}$, such that $\bP_o\bw_j^\top\bz_{\shortparallel}$ and $\bP\bw_j^\top\bz_{\perp}$ can be made arbitrarily large in magnitude compared to $\bw_j^\top\bxt$. For $\bz=\bz_{\perp},\bz_{\shortparallel}$, we have $\bw_j^{\top}(\bxt+\bz)=\bw_j^{\top}\bxt+\bP\bw_j^{\top}\bz_{\perp},\bw_j^{\top}\bxt+\bP_o\bw_j^{\top}\bz_{\shortparallel}$; thus we can manipulate the sign of each neuron after perturbation. (Theorem \ref{thm:pert flips})
\end{itemize}

\section{Definitions and short hands}
$\Bar{\zeta}$ is a shorthand notation to bound $\bzeta^{(r)}+\bomega_i$ from above, for any $i\in[n],r\in [k]$ with very high probability. $\Bar{\omega}$ will denote the upper bound for $\bomega_i$ for $i\in [n]$. If assumption \ref{ass:dist} hold then $\Bar{\omega}\leq 1$ and $\Bar{\zeta}$ can be set to a  constant in Definition \ref{def:constants}.
\begin{definition}[Semi-inner product]
\label{def:semi_inner}
For any positive semi-definite matrix $A$ we can define a semi-inner product $\inner{,}_{A}$ such that for all $a,b\in\mathbb{R}^D$:
\begin{equation*}
    \inner{a,b}_A=a^TAb
\end{equation*}
If $A$ is a projection matrix, then $\inner{a,b}_A=\inner{Aa,Ab}$ where $\inner{,}$ is the standard inner product.
\end{definition}
\begin{definition}[Constants]
\label{def:constants}
\begin{align*}
    c_2&=1+\sqrt{2}; \Bar{\zeta}^2=1+2\tau\sqrt{\frac{13D}{8}}+\frac{13D\tau^2}{8}\\
    \Delta &= 2c_2 \sqrt{d} \ln(d)\\
    \ex&=\Bar{\zeta}^2+2\Bar{\zeta}\sqrt{d}c_2\\
    \exi&=\Bar{\zeta}^2+2\Bar{\zeta}\sqrt{2}\ln{d}\\
    \exxi&=\Bar{\zeta}^2+\Bar{\zeta}(2\sqrt{2}\ln{d}+\sqrt{d}c_2)\\
    \epsilon_{\zeta}^{\shortparallel}&=\left(\sqrt{2}\ln d+\Bar{\omega}\right)\norm{\bP_o\bzeta};\, \epsilon_{\zeta}^{\perp}=\Bar{\omega}\norm{\bP\bzeta}\\
     p&=\max_{i \neq j} |\inner{\bmu^{(i)}, \bmu^{(j)}}|\\
     \Delta'&=\Delta+\ex
\end{align*}
\end{definition}

\section{Proofs}
\label{proofs: apendix}
\begin{lemma}[Properties]
    \label{lem:properties}
    Given $\bx$ belongs to the $r^{th}$ cluster, if
   the assumptions \ref{ass:dist} hold and $\norm{\bzeta^{(r)}+\bomega}\leq \Bar{\zeta}$ for any $r\in [k]$, we have the following properties:
    \begin{enumerate}
       \item \label{c1}$| \inner{\bxi_i,\bxi}| < \Delta$ and $ |\inner{\bxit_i,\bxit}| \leq \Delta+\exi\leq \Delta'$.
		\item \label{c2}$| \inner{\bx_i,\bxi}| \leq \Delta$ and $| \inner{\bxt_i,\bxit}| \leq \Delta+\exxi\leq \Delta'$
		\item \label{c3}If $i \not \in I^{(r)}$ then $|\inner{\bx_i,\bx}| \leq p + \Delta$ and $|\inner{\bxt_i,\bxt}| \leq p + \Delta'$
		\item \label{c4}If $i \in I^{(r)}$ then $| \inner{\bx_i,\bx} - d | \leq \Delta$ and $| \inner{\bxt_i,\bxt} - d | \leq \Delta'$
    \end{enumerate}
\end{lemma}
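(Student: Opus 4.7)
The lemma is a set of essentially bookkeeping bounds on inner products among intrinsic and observed signals. The plan is to treat each of the eight inequalities by expanding everything in terms of the decompositions $\bx = \bmu^{(r)} + \bxi$, $\bx_i = \bmu^{(r_i)} + \bxi_i$, and $\bxt = \bM\bx + \bzeta^{(r)} + \bomega$, $\bxt_i = \bM\bx_i + \bzeta^{(r_i)} + \bomega_i$, and then apply Cauchy--Schwarz to each cross-term using the norm bounds from assumption \ref{ass:dist} together with the hypothesis $\|\bzeta^{(r)}+\bomega\|\le \bar\zeta$. The single structural fact that keeps everything clean is that $\bM$ has orthonormal columns, so $\bM^\top\bM=\bI_d$ and therefore $\langle \bM a, \bM b\rangle = \langle a,b\rangle$ for any $a,b\in\mathbb{R}^d$.

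For the intrinsic bounds, I would first handle $|\langle \bxi_i,\bxi\rangle|\le 2\ln^2(d)$ by Cauchy--Schwarz, then handle $|\langle \bx_i,\bxi\rangle|$ and the cluster-pair cases (same cluster in property~\ref{c4}, different clusters in property~\ref{c3}) by expanding $\bx=\bmu+\bxi$ and bounding the $\bmu$--$\bxi$ cross-terms by $\sqrt{d}\cdot\sqrt{2}\ln d$ and the $\bmu$--$\bmu$ term by either $d$ (same cluster, appearing as the leading $d$) or $p$ (different clusters). Summing these contributions and using $\ln d \le \sqrt{d}$ to absorb $2\ln^2 d$ into $2\sqrt{d}\ln d$ yields the common bound $\Delta = 2c_2\sqrt{d}\ln d$ with $c_2=1+\sqrt{2}$.

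For the observed versions, I would expand
\begin{equation*}
\langle \bxt_i,\bxt\rangle = \langle \bx_i,\bx\rangle + \langle \bM\bx_i,\,\bzeta^{(r)}+\bomega\rangle + \langle \bzeta^{(r_i)}+\bomega_i,\,\bM\bx\rangle + \langle \bzeta^{(r_i)}+\bomega_i,\,\bzeta^{(r)}+\bomega\rangle
\end{equation*}
(and the analogous expansions for the other inner products), then apply Cauchy--Schwarz on each of the three extra terms. Using $\|\bM\bx_i\|=\|\bx_i\|\le \sqrt{d}+\sqrt{2}\ln d \le c_2\sqrt{d}$ and $\|\bM\bxi\|\le \sqrt{2}\ln d$, each $\bzeta$-cross term is bounded by $\bar\zeta\cdot c_2\sqrt{d}$ or $\bar\zeta\cdot\sqrt{2}\ln d$, and the pure-noise term is bounded by $\bar\zeta^2$. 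Summing gives precisely the offsets $\exi$, $\exxi$, and $\ex$ defined in Definition~\ref{def:constants}, and since $\ex \ge \exi,\exxi$ each bound simplifies to $\Delta+\ex = \Delta'$, matching the statement.

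There is no real obstacle — this is entirely a matter of expanding carefully and checking that each cross-term gets matched to the correct piece of $\exi$, $\exxi$, $\ex$, and that the $\ln^2 d$ terms are dominated by $\sqrt{d}\ln d$. The only point requiring a touch of care is property~\ref{c4}, where the $\bzeta^{(r)}$ translation is shared by $\bxt_i$ and $\bxt$ (since $r_i = r$): one must still bound $\langle\bzeta^{(r)}+\bomega_i,\bzeta^{(r)}+\bomega\rangle$ by $\bar\zeta^2$ (not compute it exactly), which is fine since Cauchy--Schwarz gives this unconditionally, so the same $\ex$ offset applies and the $d$ on the left-hand side comes from $\|\bmu^{(r)}\|^2 = d$ by assumption \ref{a1}. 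With that observation in place, all four properties fall out by a single pass of Cauchy--Schwarz and triangle-inequality arithmetic.
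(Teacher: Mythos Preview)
Your proposal is correct and takes essentially the same approach as the paper: expand each inner product via the decompositions $\bx=\bmu+\bxi$ and $\bxt=\bM\bx+\bzeta+\bomega$, use $\bM^\top\bM=\bI_d$ to reduce $\langle\bM a,\bM b\rangle$ to $\langle a,b\rangle$, and bound every cross-term by Cauchy--Schwarz together with $\ln d\le\sqrt d$. If anything, your treatment of the $\bmu$--$\bxi$ cross-terms in properties~\ref{c3} and~\ref{c4} is more explicit than the paper's, which collapses directly to $p+\Delta$ (resp.\ $\Delta$) without displaying those terms; your full expansion shows why the constant $c_2=1+\sqrt2$ in $\Delta$ is exactly what is needed.
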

\begin{proof}
Using cauchy-schwartz, (A3) and $\ln x < \sqrt{x}$; we have:
    \begin{enumerate}
        \item   $| \inner{\bxi_i,\bxi}|\leq |2\ln^2 d|<2\sqrt{d}\ln d<\Delta$. Consequently, \begin{align*}
            |\inner{\bxit_i,\bxit}|&=|\inner{\bM\bxi_i+\bzeta^{(r')}+\bomega_i,\bM\bxi+\bzeta^{(r)}+\bomega}|\\
&=|\inner{\bxi_i,\bxi}+\inner{\bzeta^{(r')}+\bomega_i,\bzeta^{(r)}+\bomega}+\inner{\bM\bxi_i,\bzeta^{(r)}+\bomega}+\inner{\bM\bxi,\bzeta^{(r')}+\bomega_i}|\\
&\leq |\inner{\bxi_i,\bxi}| +\Bar{\zeta}^2+2\snorm{\bM}_{op}\sqrt{2}\ln d \cdot\Bar{\zeta}  \leq  \Delta+\epsilon_{\xi}\leq \Delta+\epsilon_{x}=\Delta'
\end{align*}
$\snorm{\bM}_{op}=1$ as $\bM$ has orthogonal columns.
\item $| \inner{\bx_i,\bxi}|= |\inner{\bmu^{(r')}+\bxi_i,\bxi}|\leq |\inner{\bmu^{(r')},\bxi}|+|\inner{\bxi_i,\bxi}|\leq \sqrt{2}\sqrt{d}\ln d+2\ln^2d<2c_2\sqrt{d}\ln d=\Delta$.
 \begin{align*}
      \inner{\bxt_i,\bxit}|&=|\inner{\bM\bx_i+\bzeta^{(r')}+\bomega_i,\bM\bxi+\bzeta^{(r)}+\bomega}|\\
      &\leq |\inner{\bx_i,\bxi}+\inner{\bzeta^{(r')}+\bomega_i,\bzeta^{(r)}+\bomega}+\inner{\bM\bxi_i,\bzeta^{(r)}+\bomega}+\inner{\bM\underbrace{\bx_i}_{\bmu^{(r')}+\bxi_i},\bzeta^{(r')}+\bomega_i}|\\
      &\leq \Delta+\Bar{\zeta}^2+\snorm{\bM}_{op}\sqrt{2}\ln d\cdot\Bar{\zeta}+\snorm{\bM}_{op}(\sqrt{d}+\sqrt{2}\ln d)\cdot\Bar{\zeta}\\
      &\leq\Delta+\exxi\leq \Delta+\ex=\Delta'
  \end{align*}
  \item If $i \not \in I^{(r)}$ then, $|\inner{\bx_i,\bx}|=|\inner{\bmu^{(r')}+\bxi_i,\bmu^{(r)}+\bxi}|\leq |\inner{\bmu^{(r')},\bmu^{(r)}}|+|\inner{\bxi_i,\bxi}|\leq p+\Delta$.\\
  Similarly, $|\inner{\bxt_i,\bxt}|=|\inner{\bmut^{(r')}+\bxit_i,\bmut^{(r)}+\bxit}|\leq |\inner{\bmut^{(r')},\bmut^{(r)}}|+|\inner{\bxit_i,\bxit}|=|\inner{\bmu^{(r')},\bmu^{(r)}}|+|\inner{\bxit_i,\bxit}|\leq p+\Delta'$.
  \item If $i  \in I^{(r)}$ then, $|\inner{\bx_i,\bx}-d|=|\inner{\bmu^{(r)}+\bxi_i,\bmu^{(r)}+\bxi}-d|\leq \snorm{\bmu^{(r)}}-d+\inner{\bxi_i,\bxi}|\leq 0+\Delta$.\\
  Similarly, $|\inner{\bxt_i,\bxt}-d|=|\inner{\bmut^{(r)}+\bxit_i,\bmut^{(r)}+\bxit}-d|\leq \snorm{\bmu^{(r)}}-d+\inner{\bxit_i,\bxit}|\leq 0+\Delta'$.
    \end{enumerate}
\end{proof}
\subsection{High probability statements}
In this section, we show that any of the following events hold with a very high probability for any $q\in[k]$ :
\begin{enumerate}[label=(E\arabic*)]
    \item $\norm{\bzeta^{(q)}}\asymp \tau\sqrt{D}$, in particular $\tau\sqrt{\frac{D}{2}}<\norm{\bzeta^{(q)}}<\tau\sqrt{\frac{13D}{8}}$
    \item $\bzeta^{(q)\top}\bP\bzeta^{(q)}\asymp g\tau^2$, in particular $\frac{g\tau^2}{2}<\bzeta^{(q)\top}\bP\bzeta^{(q)}<\frac{13g\tau^2}{8}$
    \item $\bzeta^{(q)\top}\bP_{o}\bzeta^{(q)}\asymp d\tau^2$, in particular $\frac{d\tau^2}{2}<\bzeta^{(q)\top}\bP_o\bzeta^{(q)}<\frac{13d\tau^2}{8}$
    \item $\norm{\bP\bzeta^{(q)}}_{\infty}=\co{\left(\tau\sqrt{2\log(2g)}\right)}$, in particular $\norm{\bP\bzeta^{(q)}}_{\infty}<3\tau\sqrt{2\log(2g)}$
    \item $\norm{\bP_o\bzeta^{(q)}}_{\infty}=\co{\left(\tau\sqrt{2\log(2d)}\right)}$, in particular $\norm{\bP_o\bzeta^{(q)}}_{\infty}<3\tau\sqrt{2\log(2d)}$
    \item $k\cdot\left(\max_{i\neq j}|\inner{\bzeta^{(i)},\bzeta^{(j)}}_{\bP}|+\epsilon^{\perp}_{\zeta}\right)<\frac{1}{10}\cdot \left(\bzeta^{(q)T}\bP\bzeta^{(q)}-\epsilon^{\perp}_{\zeta}\right)$
    \item $k\cdot\left(\max_{i\neq j}|\inner{\bzeta^{(i)},\bzeta^{(j)}}_{\bP_o}|+\epsilon^{\shortparallel}_{\zeta}\right)<\frac{1}{10}\cdot \left(\bzeta^{(q)T}\bP_o\bzeta^{(q)}-\epsilon^{\shortparallel}_{\zeta}\right)$
\end{enumerate}
\begin{lemma}
\label{lem:norm/quadratic form bounded whp}
    $\norm{\bzeta^{(q)}}\asymp \tau\sqrt{D}=\sqrt{h}$ , $\bzeta^{(q)\top}\bP\bzeta^{(q)}\asymp g\tau^2$ and $\bzeta^{(q)\top}\bP_o\bzeta^{(q)}\asymp d\tau^2$ w.h.p. for any $q \in [k]$. \\Specifically, \begin{align*}
        P(\frac{\bzeta^{(q)\top}\bP\bzeta^{(q)}}{\tau^2}>\frac{13g}{8})\leq e^{-\frac{g}{16}}; \, P(\frac{\bzeta^{(q)\top}\bP\bzeta^{(q)}}{\tau^2}<\frac{g}{2})\leq e^{-\frac{g}{16}}\\
        P(\frac{\bzeta^{(q)\top}\bP_o\bzeta^{(q)}}{\tau^2}>\frac{13d}{8})\leq e^{-\frac{d}{16}}; \, P(\frac{\bzeta^{(q)\top}\bP_o\bzeta^{(q)}}{\tau^2}<\frac{d}{2})\leq e^{-\frac{d}{16}}\\
        P(\norm{\bzeta^{(q)}}>\tau\sqrt{\frac{13D}{8}})\leq e^{-\frac{D}{16}}; \, P(\norm{\bzeta^{(q)}}<\tau\sqrt{\frac{D}{2}})\leq e^{-\frac{D}{16}}
    \end{align*}

\end{lemma}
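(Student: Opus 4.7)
\textbf{Proof proposal for Lemma \ref{lem:norm/quadratic form bounded whp}.}
The plan is to reduce each of the three concentration statements to a single chi-squared tail bound and then apply the standard Laurent--Massart inequality. Since $\bzeta^{(q)}\sim N(\zero,\tau^2\bI_D)$, the quadratic form $\bzeta^{(q)\top}A\bzeta^{(q)}/\tau^2$ is distributed as $\chi^2_{\mathrm{rank}(A)}$ whenever $A$ is an orthogonal projection. Applying this with $A=\bP$, $A=\bP_o$, and $A=\bI_D$ gives three chi-squared random variables with degrees of freedom $g$, $d$, and $D$ respectively, corresponding to the three bounds we need.

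The main tool is the Laurent--Massart bound: for $X\sim\chi^2_m$ and any $t>0$,
\begin{equation*}
P\bigl(X\ge m+2\sqrt{mt}+2t\bigr)\le e^{-t},\qquad P\bigl(X\le m-2\sqrt{mt}\bigr)\le e^{-t}.
\end{equation*}
Setting $t=m/16$ in the upper tail gives $m+2\sqrt{m\cdot m/16}+2\cdot m/16 = m+m/2+m/8 = 13m/8$, so $P(X\ge 13m/8)\le e^{-m/16}$. Setting $t=m/16$ in the lower tail gives $m-2\sqrt{m\cdot m/16}=m-m/2=m/2$, so $P(X\le m/2)\le e^{-m/16}$. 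Substituting $m=g$, $m=d$, $m=D$ in turn yields exactly the three pairs of tail probabilities displayed in the lemma.

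For the $\norm{\bzeta^{(q)}}$ bound, one additional observation is needed: the events $\{\norm{\bzeta^{(q)}}>\tau\sqrt{13D/8}\}$ and $\{\norm{\bzeta^{(q)}}<\tau\sqrt{D/2}\}$ coincide with $\{\norm{\bzeta^{(q)}}^2/\tau^2>13D/8\}$ and $\{\norm{\bzeta^{(q)}}^2/\tau^2<D/2\}$ respectively, since the map $x\mapsto x^2$ is monotone on $[0,\infty)$; the $\chi^2_D$ bound then transfers directly. The argument is essentially a boilerplate application of chi-squared concentration, so I do not expect any serious obstacle; the only care needed is to verify that $\bP$ and $\bP_o$ are genuine orthogonal projections of the stated ranks, which follows from $\bP_o=\bM\bM^\top$ with $\bM^\top\bM=\bI_d$ (the columns of $\bM$ are orthonormal) and $\bP=\bI_D-\bP_o$.
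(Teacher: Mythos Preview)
Your proposal is correct and follows essentially the same approach as the paper's proof: both reduce to the observation that $\bzeta^{(q)\top}\bP\bzeta^{(q)}/\tau^2\sim\chi^2(g)$, $\bzeta^{(q)\top}\bP_o\bzeta^{(q)}/\tau^2\sim\chi^2(d)$, and $\norm{\bzeta^{(q)}}^2/\tau^2\sim\chi^2(D)$, then apply the Laurent--Massart inequality with $t=m/16$ to obtain the thresholds $13m/8$ and $m/2$. Your write-up is in fact slightly more explicit than the paper's in justifying that $\bP_o=\bM\bM^\top$ is a rank-$d$ projection and in spelling out the monotonicity step for the norm bound.
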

\begin{proof}
    As $\bP,\bP_o$ are projection matrices of rank $g,d$ respectively, they are idempotent and hence $\bzeta^{(q)\top}\bP\bzeta^{(q)}\tau^{-2}\sim \chi^2(g)\, ,\bzeta^{(q)\top}\bP_o\bzeta^{(q)}\tau^{-2}\sim \chi^2(d)$. Also, $\norm{\bzeta}^2\tau^{-2}\sim \chi^2(D)$.\\
Using \cite{laurent2000adaptive} lemma 1:
$$
    P(\frac{\bzeta^{(q)\top}\bP\bzeta^{(q)}}{\tau^2}>g+2\sqrt{g}\sqrt{x}+2x)\leq e^{-x}; \, P(\frac{\bzeta^{(q)\top}\bP\bzeta^{(q)}}{\tau^2}<g-2\sqrt{g}\sqrt{x})\leq e^{-x}$$
    Setting $x=\frac{g}{16}$ we get:
    $$
    P(\frac{\bzeta^{(q)\top}\bP\bzeta^{(q)}}{\tau^2}>\frac{13g}{8})\leq e^{-\frac{g}{16}}; \, P(\frac{\bzeta^{(q)\top}\bP\bzeta^{(q)}}{\tau^2}<\frac{g}{2})\leq e^{-\frac{g}{16}}$$
Using union bound we have $P(\frac{g}{2}<\frac{\bzeta^{(q)\top}\bP\bzeta^{(q)}}{\tau^2}<\frac{13g}{8})\geq1-2e^{-\frac{g}{16}}$. One can get an identical statement for $\bzeta^{(q)\top}\bP_o\bzeta^{(q)}$ replacing $g$ with $d$.\\
Similarly, one can get $$
    P(\frac{\norm{\bzeta^{(q)}}^2}{\tau^2}>\frac{13D}{8})\leq e^{-\frac{D}{16}}; \, P(\frac{\norm{\bzeta^{(q)}}^2}{\tau^2}<\frac{D}{2})\leq e^{-\frac{D}{16}}$$
    Consequently, $P(\frac{D}{2}<\frac{\norm{\bzeta^{(q)}}^2}{\tau^2}<\frac{13D}{8})\geq 1-2e^{-\frac{D}{16}}$
\end{proof}
\begin{lemma}
\label{lem:leverage <1}
    For the projection matrix $\bP$ for every $i\in[D]$ we have $0<p_{ii}<1$. Where $p_{ij}$ is the entry in $i^{th}$ row and $j^{th}$ column of $\bP$. Also, there exists an indexing set $L$ of length $d$ such that for all $i\in L$, $p_{ii}=0$.
\end{lemma}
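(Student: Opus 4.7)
The plan is to exploit the algebraic properties of $\bP=\bI_D-\bM\bM^\top$: it is symmetric, idempotent ($\bP^2=\bP$), positive semi-definite, and has rank $g=D-d$ (hence trace $g$). The first claim is the classical leverage bound for a projection matrix; the second requires a coordinate choice.

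For the first claim, I would compute the $(i,i)$-entry of both sides of $\bP=\bP^2$. Using symmetry $p_{ji}=p_{ij}$, this gives $p_{ii}=\sum_{j=1}^D p_{ij}^2\geq p_{ii}^2$, so $p_{ii}(1-p_{ii})\geq 0$, forcing $p_{ii}\leq 1$. The same sum-of-squares expression yields $p_{ii}\geq 0$ automatically (equivalently, $p_{ii}=\be_i^\top\bP\be_i\geq 0$ since $\bP\succeq 0$). Hence $p_{ii}\in[0,1]$. The strict-inequality form $0<p_{ii}<1$ written in the statement appears to be a typographical slip, since it is inconsistent with the second claim.

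For the existence of $L$, I would argue via the rotational invariance of the setup. Replacing $\bM$ by $\bU\bM$ for any orthogonal $\bU\in\mathbb{R}^{D\times D}$ leaves all downstream quantities invariant: the isotropic Gaussian $\bzeta^{(r)}\sim N(0,\tau^2\bI_D)$ has the same law, $\bomega_i$ is only constrained through its $\ell_2$ norm, the cluster structure is preserved by an ambient isometry, and every subsequent bound of interest involves rotation-invariant functionals of $\bP$ (traces, operator norms, $\ell_2$ projection norms). Choosing $\bU$ so that the columns of $\bU\bM$ are the first $d$ standard basis vectors, one obtains $\bP_o=\mathrm{diag}(1,\dots,1,0,\dots,0)$ with $d$ leading ones and $\bP=\mathrm{diag}(0,\dots,0,1,\dots,1)$. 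Taking $L=\{1,\dots,d\}$ finishes the argument.

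The main obstacle is justifying this WLOG step: one must check that every downstream invocation of the lemma depends only on rotation-invariant quantities. If some later use is genuinely coordinate-sensitive (for example, an $\ell_\infty$ bound that reads individual diagonal entries of $\bP$), the fallback is to replace the second claim by the universally valid corollary of the trace identity $\sum_i p_{ii}=g$: at least $d$ of the entries are strictly below $1$. This weaker statement, which follows at once from Step 1 and the trace bound, should suffice whenever the subsequent proofs only need to account for the ``deficit'' $D-g=d$ coming from the trace, rather than exhibit literal zeros on the diagonal.
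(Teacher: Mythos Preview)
Your argument matches the paper's: the same idempotence identity $p_{ii}=\sum_j p_{ij}^2$ for the first claim, and for the second the paper also invokes a WLOG coordinate choice, writing $\bP=O^\top\Lambda O$ with $\Lambda=\mathrm{diag}(1,\dots,1,0,\dots,0)$ and then asserting that the diagonal of $\bP$ inherits that zero pattern---which is only true after the very rotation you make explicit. Your caveats are warranted on both counts: the strict inequality $0<p_{ii}<1$ really is inconsistent with the second claim (the paper states and ``proves'' strict inequality and then immediately asserts some diagonal entries vanish), and the lemma's sole downstream use is precisely the $\ell_\infty$ bound on $\bP\bzeta^{(q)}$ you flag as coordinate-sensitive, so the paper is leaning on the same unspoken WLOG there.
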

\begin{proof}
    As $\bP^{\top}=\bP$ and $\bP^2=\bP$, we have $p_{ii}=p_{ii}^2+ \sum_{i\neq j}p_{ij}^2$.\\
    This means $p_{ii}^2\leq p_{ii}$, hence $0<p_{ii}<1$. \\
    Considering, the eigenvalue decomposition of $\bP=O^T\Lambda O$ WLOG we can assume that $\Lambda$ is ordered such that the first $g$ diagonal elements are $1$ and the rest $d$ diagonal elements are $0$. One can check that this enforces the first $g$ diagonal elements of $\bP$ to non-zero and the rest $d$ diagonal elements to be $0$.\\
    In general $\Lambda$ won't be ordered hence an indexing set $L$ can be used corresponding to the $0$ eigenvalues; and $p_{ii}=0$ for $i \in L$.
\end{proof}
\begin{corollary}
    For the projection matrix $\bP_o$ for every $i\in[D]$ we have $0<r_{ii}<1$. Where $r_{ij}$ is the entry in $i^{th}$ row and $j^{th}$ column of $\bP_o$. Also, there exists an indexing set $L_o$ of length $g$ such that for all $i\in L_o$, $r_{ii}=0$. 
    \label{cor:leverage<1}
\end{corollary}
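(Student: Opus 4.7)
The plan is to apply the argument of Lemma~\ref{lem:leverage <1} verbatim to $\bP_o$, exploiting the fact that $\bP_o$ is the complementary projection to $\bP$ and is structurally the same kind of object, only with the roles of $d$ and $g$ interchanged. First I would observe that $\bP_o = \bM\bM^\top$ is itself an orthogonal projection: it is symmetric, and $\bP_o^2 = \bM(\bM^\top\bM)\bM^\top = \bM\bM^\top = \bP_o$ by orthonormality of the columns of $\bM$. Its rank equals $d$ (the number of columns of $\bM$), so its nullity is $D - d = g$.

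For the bound $0 \leq r_{ii} \leq 1$, I would repeat the symmetric/idempotent computation used for $\bP$: reading off the $(i,i)$ entry of $\bP_o = \bP_o^\top \bP_o$ yields
\[
r_{ii} \;=\; \sum_{j=1}^{D} r_{ij}^2 \;=\; r_{ii}^2 + \sum_{j \neq i} r_{ij}^2,
\]
so $r_{ii}^2 \leq r_{ii}$, which forces $r_{ii} \in [0,1]$. Equivalently, one can note $r_{ii} = 1 - p_{ii}$ and invoke the lemma directly.

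For the existence of $L_o$ with $|L_o| = g$ and $r_{ii} = 0$ on $L_o$, I would mirror the eigendecomposition argument in the proof of Lemma~\ref{lem:leverage <1}, swapping the roles of $d$ and $g$. Writing $\bP_o = O^\top \Lambda_o O$ with $\Lambda_o$ containing $d$ ones and $g$ zeros, and ordering the orthonormal eigenbasis so that the $d$ unit eigenvalues appear first and the $g$ zero eigenvalues last, the ``WLOG'' step of the lemma then identifies a set $L_o$ of $g$ coordinates on which the diagonal vanishes.

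The main obstacle here is essentially cosmetic: there is no new content beyond recognising that $\bP_o$ satisfies the same hypotheses as $\bP$ with $(d,g)$ swapped. The only small subtlety worth flagging is that the lemma's algebra actually gives non-strict inequalities $0 \leq p_{ii} \leq 1$; the corollary should be read in the same non-strict sense, and the construction of $L_o$ then relies on the eigendecomposition argument rather than on the pointwise bound alone.
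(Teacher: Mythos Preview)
Your proposal is correct and matches the paper's approach: the paper simply writes $\bP_o=\bI-\bP=O^\top(\bI-\Lambda)O$ and says to mimic Lemma~\ref{lem:leverage <1}, which is exactly your eigendecomposition-with-roles-swapped argument (and your alternative $r_{ii}=1-p_{ii}$ is the same observation in coordinates). Your remark that the inequalities are really non-strict is also apt, since the corollary's second clause requires $r_{ii}=0$ on $L_o$.
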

\begin{proof}
Notice that,
    $\bP_o=\bI-\bP=O^\top(\bI-\Lambda)O$. Mimicking the arguments of \lemref{lem:leverage <1} the proof follows.
\end{proof}
\begin{lemma}
    \label{lem:linf of perturbation vectors bounded whp}
    $\norm{\bP\bzeta^{(q)}}_{\infty}=\co{\left(\tau\sqrt{2\log(2g)}\right)}\, , \norm{\bP_o\bzeta^{(q)}}_{\infty}=\co{\left(\tau\sqrt{2\log(2d)}\right)}$ w.h.p. for any $q\in [k]$.\\
    Specifically, 
    \begin{align*}
P\left[\snorm{\bP\bzeta^{(q)}}_{\infty}>3\tau\sqrt{2\log(2g)}\right]\leq e^{-4\log(2g)} \\
P\left[\snorm{\bP_o\bzeta^{(q)}}_{\infty}>3\tau\sqrt{2\log(2d)}\right]\leq e^{-4\log(2d)}
    \end{align*}
\end{lemma}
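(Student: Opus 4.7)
The plan is to observe that $\bP\bzeta^{(q)}$ is a centered Gaussian vector with covariance $\tau^2\bP$ (using $\bP^\top\bP=\bP$), so the $i$-th coordinate is $N(0,\tau^2 p_{ii})$ with $p_{ii}\in[0,1)$ by Lemma~\ref{lem:leverage <1}. Crucially, the same lemma shows that $p_{ii}=0$ for $i\in L$ with $|L|=d$, so exactly $g$ coordinates of $\bP\bzeta^{(q)}$ are genuinely random and the remaining $d$ are identically zero. I would then apply the standard Gaussian tail bound $P(|X|>t)\le 2\exp(-t^2/(2\sigma^2))$ with $\sigma^2\le\tau^2$ to each of the $g$ nonzero coordinates, followed by a union bound over just those $g$ indices (not all $D$, which is the whole point of invoking Lemma~\ref{lem:leverage <1}).

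Concretely, this gives
\begin{equation*}
P\!\left[\snorm{\bP\bzeta^{(q)}}_\infty>t\right]\;\le\;2g\cdot\exp\!\left(-\frac{t^2}{2\tau^2}\right).
\end{equation*}
Plugging in $t=3\tau\sqrt{2\log(2g)}$ yields $2g\cdot\exp(-9\log(2g))=(2g)^{-8}\le(2g)^{-4}=\exp(-4\log(2g))$, which is exactly the claimed bound. The second inequality, for $\bP_o\bzeta^{(q)}$, is proved by the identical argument after swapping the roles of $\bP$ and $\bP_o$: Corollary~\ref{cor:leverage<1} supplies an indexing set $L_o$ of size $g$ on which the diagonal of $\bP_o$ vanishes, so the union bound runs over only $d$ coordinates and the same algebra (with $g$ replaced by $d$) produces the stated tail.

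There is no real obstacle here; the only thing one has to be careful about is using the leverage lemmas to reduce the union bound from $D$ terms to $g$ (respectively $d$) terms, otherwise the exponent on the right-hand side would depend on $D$ instead of $g$ or $d$, and the subsequent theorems (which want the $\ell_\infty$ norm to scale like $\tau\sqrt{\log g}$, not $\tau\sqrt{\log D}$) would be weaker. Once that observation is in place, the proof is a two-line Gaussian maximal inequality.
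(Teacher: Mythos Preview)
Your proof is correct and more elementary than the paper's. The paper proceeds in two steps: it first bounds the expectation of $Z=\snorm{\bP\bzeta}_\infty$ via the standard MGF/Jensen argument (obtaining $\E[Z]\le\tau\sqrt{2\log(2g)}$), and then invokes the Borell--TIS inequality to get concentration around the mean, namely $P(Z>\E Z+u)\le e^{-u^2/(2\tau^2)}$ with $u=2\tau\sqrt{2\log(2g)}$. You bypass both steps with a direct coordinate-wise Gaussian tail bound plus a union bound over the $g$ nonvanishing coordinates, which is all that is really needed here since the supremum is over a finite index set.

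Both arguments lean on Lemma~\ref{lem:leverage <1} in exactly the same way---to cut the effective number of coordinates from $D$ down to $g$---so you have correctly identified the only nontrivial ingredient. Your route is shorter, avoids the appeal to Borell--TIS (which is overkill for a finite maximum of Gaussians), and even yields the sharper intermediate tail $(2g)^{-8}$ before relaxing to the stated $(2g)^{-4}$. The paper's approach, on the other hand, would generalize more readily if one ever needed the sup over an infinite Gaussian process, but that generality is not used here.
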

\begin{proof}

    First we will bound $\E (\snorm{\bP\bzeta}_{\infty})$ for any $\bzeta\sim N(0,\tau^2\bI_{D})$.\\
    Let, $Z=\snorm{\bP\bzeta}_{\infty}$ equivalently $Z=\max\limits_{i\in[D]}\max \{(\bP\bzeta)_i,-(\bP\bzeta)_i\}$.\\
    We can define $X_i=(\bP\bzeta)_i \sim N(0,\tau^2p_{ii})$ and $X_{2i}=(-\bP\bzeta)_i \sim N(0,\tau^2p_{ii})$. Where $p_{ii}$ is the $i^{th}$ diagonal element of $\bP$. From  \lemref{lem:leverage <1} there exists an indexing set $L$ such that $p_{ii}=0 \, \forall i\in L$. This implies $X_i=0,X_{2i}=0 \, \forall i\in L$. Hence, $Z=\max\limits_{i\in[D]\setminus L}\max \{(\bP\bzeta)_i,-(\bP\bzeta)_i\}$.\\
    By Jensen's inequality for any $t>0$, we have
    \begin{align*}
        \exp \{t\mathbb{E}[ Z] \} \leq \mathbb{E} \exp \{tZ\} = \mathbb{E} \max\limits_{i\in [2D]\setminus (L\cup2L)} \exp \{tX_i\} &\leq \sum_{i\in [2D]\setminus (L\cup2L)} \mathbb{E} [\exp \{tX_i\}] \\&\overset{(i)}= \sum_{i \in[D]\setminus L}2\exp \{t^2 p_{ii}\tau^2/2\}\overset{(ii)}\leq 2g\exp\{t^2\tau^2/2\}
    \end{align*}
    $(i)$ uses the moment-generating function for Gaussian. $(ii)$ uses the fact that $p_{ii}<1$ and $|[D]\setminus L|=g$ from \lemref{lem:leverage <1}.\\
$$\E[Z] \leq \frac{\log 2g}{t} + \frac{t \tau^2}{2}$$
Setting $t = \frac{\sqrt{2 \log 2g}}{\tau}$ we get $\E[Z]\leq \tau \sqrt{2\log(2g)}$.
For any $u>0$ we have
\begin{align*}
    P[Z>\tau \sqrt{2\log(2g)}+u]&\leq P(Z>\E Z+u)\\
    &\overset{(iii)}\leq e^{-\frac{u^2}{2\tau^2\max_i p_{ii}}}\leq e^{-\frac{u^2}{2\tau^2}}
\end{align*}
Where, $(iii)$ uses Borell-TIS inequality \cite{adler2007gaussian}.\\
Setting, $u=2\tau\sqrt{2\log(2g)}$ we get $P(Z>3\tau\sqrt{2\log(2g)})\leq e^{-4\log(2g)}$. Finally 
$P(\snorm{\bP\bzeta}_{\infty}<3\tau\sqrt{2\log(2g)})=1-e^{-4\log(2g)}$.
One can get an identical result for $\bP_o\bzeta^{(q)}$ replacing $g\to d$ and using Corollary $\ref{cor:leverage<1}$ instead of \ref{lem:leverage <1}.
\end{proof}

\begin{lemma}
    \label{lem: semi-inner product close to zero}
    For any $\delta>0$ and $i\neq j \in [k]$;  $|\inner{\bzeta^{(i)},\bzeta^{(j)}}_{\bP}|,|\inner{\bzeta^{(i)},\bzeta^{(j)}}_{\bP_o}|\leq \delta $ w.h.p.\\ Specifically, 
    \begin{align*}
        P \left[\inner{\bzeta^{(i)},\bzeta^{(j)}}_{\bP} \geq \delta \right] \leq 4e^{-\frac{2\delta}{\sqrt{8g}\tau^2}} &; P \left[\inner{\bzeta^{(i)},\bzeta^{(j)}}_{\bP} \leq -\delta \right] \leq 4e^{-\frac{2\delta}{\sqrt{8g}\tau^2}}\\
        P \left[\inner{\bzeta^{(i)},\bzeta^{(j)}}_{\bP_o} \geq \delta \right] \leq 4e^{-\frac{2\delta}{\sqrt{8d}\tau^2}} &; P \left[\inner{\bzeta^{(i)},\bzeta^{(j)}}_{\bP_o} \leq -\delta \right] \leq 4e^{-\frac{2\delta}{\sqrt{8d}\tau^2}}
    \end{align*}

\end{lemma}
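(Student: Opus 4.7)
The plan is to bound the moment generating function (MGF) of $\inner{\bzeta^{(i)},\bzeta^{(j)}}_{\bP}$ and then apply a Chernoff bound at a carefully chosen dual parameter. I find this route cleaner than the polarization-plus-Laurent-Massart approach because it produces the exact functional form $e^{-c\delta/(\sqrt{g}\tau^2)}$ stated in the lemma, rather than two regimes (sub-Gaussian/sub-exponential) that have to be glued together.

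First I would reduce to a $g$-dimensional object. Since $\bP$ is a projection of rank $g$, write $\bP = UU^\top$ for a $D \times g$ matrix $U$ with orthonormal columns, and set $a = U^\top \bzeta^{(i)}$, $b = U^\top \bzeta^{(j)}$. Then $a,b$ are independent $N(\zero, \tau^2 \bI_g)$ vectors, and
\[
\inner{\bzeta^{(i)},\bzeta^{(j)}}_{\bP} \;=\; \bzeta^{(i)\top}UU^\top\bzeta^{(j)} \;=\; a^\top b.
\]
Conditioning on $b$, we have $a^\top b \mid b \sim N(0,\tau^2\snorm{b}^2)$, so $\E[e^{\lambda a^\top b}\mid b] = e^{\lambda^2\tau^2\snorm{b}^2/2}$. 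Since $\snorm{b}^2/\tau^2 \sim \chi^2_g$ and its MGF at $s$ is $(1-2s)^{-g/2}$ for $s<1/2$, marginalizing over $b$ yields the clean closed form
\[
\E\bigl[e^{\lambda a^\top b}\bigr] \;=\; \bigl(1 - \lambda^2\tau^4\bigr)^{-g/2}, \qquad |\lambda| < 1/\tau^2.
\]

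Next, I would apply Chernoff at the sub-optimal but convenient choice $\lambda = 1/(\tau^2\sqrt{2g})$, which forces $\lambda^2\tau^4 = 1/(2g)$. This gives
\[
P\bigl(a^\top b \geq \delta\bigr) \;\leq\; e^{-\lambda \delta}\bigl(1 - 1/(2g)\bigr)^{-g/2} \;=\; e^{-\delta/(\tau^2\sqrt{2g})}\cdot\bigl(1 - 1/(2g)\bigr)^{-g/2}.
\]
Using $-\log(1-x) \leq x + x^2$ for $x \in [0,1/2]$, the prefactor is bounded by $\exp(1/4 + 1/(8g)) < 4$ uniformly in $g \geq 1$. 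Since $1/(\tau^2\sqrt{2g}) = 2/(\tau^2\sqrt{8g})$, the stated bound $4 e^{-2\delta/(\sqrt{8g}\tau^2)}$ follows. The lower-tail bound $P(a^\top b \leq -\delta)$ is identical by the symmetry $a^\top b \overset{d}{=} -a^\top b$ (flipping $b$). The $\bP_o$ statement is a verbatim repeat with $g$ replaced by $d$, since $\bP_o$ is a rank-$d$ projection.

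The main obstacle is really just verifying the numerical constants: one needs to check that the crude but explicit bound $(1-1/(2g))^{-g/2} < 4$ holds for every $g \geq 1$ (including small $g$), so that the prefactor $4$ in the lemma statement is valid regardless of dimension. Beyond this bookkeeping, the argument is a one-line Chernoff bound once the MGF is computed. If I preferred to avoid the MGF route entirely, I could equivalently use the polarization identity $a^\top b = \tfrac{1}{4}(\snorm{a+b}^2 - \snorm{a-b}^2)$ (the two summands being independent scaled $\chi^2_g$ variables since $a+b$ and $a-b$ are uncorrelated and thus independent Gaussians) and apply Laurent-Massart separately to each, but this yields a piecewise bound and does not recover the single-term form in the statement as cleanly.
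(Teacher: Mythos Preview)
Your argument is correct and in fact matches the lemma statement exactly, including the constants. The route you take, however, differs from the paper's.

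The paper does not compute the MGF. Instead it writes $\inner{\bzeta^{(i)},\bzeta^{(j)}}_{\bP}=\snorm{Z_1}\cdot\bigl(\tfrac{Z_1}{\snorm{Z_1}}\bigr)^{\!\top}Z_2$ with $Z_1=\bP\bzeta^{(i)}$, $Z_2=\bzeta^{(j)}$, observes that the second factor is $N(0,\tau^2)$ independent of $Z_1$, and then union-bounds the event $\{Z_1^\top Z_2>\delta\}$ by the two events $\{\snorm{Z_1}>C\}$ (controlled via a norm concentration inequality of Ledoux) and $\{(\tfrac{Z_1}{\snorm{Z_1}})^\top Z_2>\delta/C\}$ (Gaussian tail), finally optimizing over the free scale $C$. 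Your approach sidesteps the auxiliary scale $C$ altogether: by computing the exact MGF $(1-\lambda^2\tau^4)^{-g/2}$ you reduce everything to a single Chernoff step. This is cleaner, needs no external concentration lemma, and delivers the single-term exponential form directly; the paper's decomposition is perhaps more modular (it would extend to non-Gaussian $Z_2$ with sub-Gaussian tails), but at the cost of an extra optimization and a looser constant tracking. Your check that $(1-1/(2g))^{-g/2}\le e^{1/4+1/(8g)}<4$ for all $g\ge1$ is the only place a numerical constant needs care, and it is fine as written.
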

\begin{proof}
    Let, $Z_1=\bP\bzeta^{(i)}\sim N(0,\tau^2\bP)$ and $Z_2=\bzeta^{(j)}\sim N(0,\tau^2\bI_{D})$ for $i\neq j$. $$\inner{\bzeta^{(i)},\bzeta^{(j)}}_{\bP}=\bzeta^{(i)\top} \bP \bzeta^{(j)}=(\bP \bzeta^{(i)})^\top \bzeta^{(j)}=Z_1^TZ_2$$
    The inner product can be expressed as:
    $Z_1^TZ_2=\snorm{Z_1}\cdot\frac{Z_1}{\snorm{Z_1}}^\top Z_2$.\\
    Notice that, $\frac{Z_1}{\snorm{Z_1}}^\top Z_2 \sim N(0,\tau^2)$ and is independent of $Z_1$. Hence, using the standard Gaussian- tail bounds for $\delta>0, C\geq1$. 
    \begin{equation}
    \label{eqn: bound rotational invariant part}
        P\left(\frac{Z_1}{\snorm{Z_1}}^\top Z_2> \frac{\delta}{C} \right)\leq e^{-\frac{\delta^2}{2C^2\tau^2}}
    \end{equation}
    We can also get a concentration inequality for $\snorm{Z_1}$, using \cite{ledoux2006isoperimetry} formula (3.5).
    \begin{align}
        P\left(\snorm{Z_1}> C \right)&\leq 4e^{-\frac{C^2}{8\E(\snorm{Z_1}^2)}} \nonumber \\
        &\overset{(i)}=4e^{-\frac{C^2}{8g\tau^2}} \label{eqn: bound for norm part}
    \end{align}
    Where $(i)$ follows from the fact that $\snorm{Z_1}^2\tau^{-2}\sim\chi^2(g)$, hence $\E(\snorm{Z_1}^2)=g\tau^2$.\\
    Combining, \eqref{eqn: bound for norm part} and \eqref{eqn: bound rotational invariant part}, we have:
    $$P(Z_1^TZ_2>\delta)\leq 4\exp\left(-\frac{\delta^2}{2C^2\tau^2}-\frac{C^2}{8g\tau^2}\right)$$
    Setting $C^2=\sqrt{4g}\delta$ we get :
    $$P(Z_1^TZ_2>\delta)\leq 4\exp\left(-\frac{\delta}{\sqrt{4g}\tau^2}\right)$$
    One can convince themselves, that $Z_1^TZ_2$ has a symmetric distribution it is a product of Gaussians. Hence, we also have:
    $$P(Z_1^TZ_2<-\delta)\leq 4\exp\left(-\frac{\delta}{\sqrt{4g}\tau^2}\right)$$
    Using the union bound, 
    $$P(|Z_1^TZ_2|<\delta)\geq 1-8\exp\left(-\frac{\delta}{\sqrt{4g}\tau^2}\right)$$
    The proof for $\inner{\bzeta^{(i)},\bzeta^{(j)}}_{\bP_o}$ is identical.
\end{proof}
\begin{lemma}
    \label{lem: w.h.p shifts nearly orthogonal in off manifold space}
    For all $r\in [k]$ and $j\neq r$ consider the following events hold with high probability:
\begin{enumerate}
    \item $\frac{g\tau^2}{2}<\bzeta^{(r)\top}\bP\bzeta^{(r)}=\snorm{\bP\bzeta^{(r)}}^2<\frac{13g\tau^2}{8}$
    \item $\frac{D\tau^2}{2}<\snorm{\bzeta^{(r)}}^2<\frac{13D\tau^2}{8}$
    \item $\norm{\bP\bzeta^{(r)}}_{\infty}<  3\tau\sqrt{2\log(2g)}$
    \item $\max_{r\neq j}|\inner{\bzeta^{(r)},\bzeta^{(j)}}_{\bP}|\leq\frac{g\tau^2}{20k}-\frac{10k+1}{10k}\sqrt{\frac{13g\tau^2}{8}}=c_3(g,\tau,k)\asymp\left(\frac{g\tau^2}{k}\right)$
\end{enumerate}
    Specifically,
    \begin{align*}
        P\left(k\left(\max_{i\neq j}|\inner{\bzeta^{(i)},\bzeta^{(j)}}_{\bP}|+\epsilon^{\perp}_{\zeta} \right)<\frac{1}{10}\cdot[ \bzeta^{(r)\top}\bP\bzeta^{(r)}-\epsilon^{\perp}_{\zeta}]\right)\geq 1-\delta_1
    \end{align*}
    $$\delta_1=\left(4k(k-1)\exp{\left(-\frac{c_3}{\sqrt{4g}\tau^2}\right)}+2k\exp(-\frac{g}{16})+2k\exp(-\frac{D}{16})+k\exp(-4\log2g)\right) $$
    With all events (1-4) being satisfied.

\end{lemma}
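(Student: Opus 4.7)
The plan is to combine the three preceding concentration lemmas via a union bound over clusters and pairs, then verify by algebra that the particular definition of $c_3(g,\tau,k)$ in Assumption~\ref{ass:dist}~(A4) is precisely what is needed to make the final inequality go through on the intersection of the high-probability events.

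Concretely, for each fixed $r\in[k]$, I would apply Lemma~\ref{lem:norm/quadratic form bounded whp} to control $\bzeta^{(r)\top}\bP\bzeta^{(r)}$ and $\snorm{\bzeta^{(r)}}^2$, Lemma~\ref{lem:linf of perturbation vectors bounded whp} to control $\snorm{\bP\bzeta^{(r)}}_\infty$, and Lemma~\ref{lem: semi-inner product close to zero} with $\delta = c_3(g,\tau,k)$ to control each cross term $|\inner{\bzeta^{(i)},\bzeta^{(j)}}_\bP|$. Union-bounding the per-cluster failures: event (1) contributes $2k e^{-g/16}$, event (2) contributes $2k e^{-D/16}$, event (3) contributes $k e^{-4\log(2g)}$, and summing the two-sided bounds over the $k(k-1)$ ordered pairs for event (4) contributes $4k(k-1)\exp\!\bigl(-c_3/(\sqrt{4g}\tau^2)\bigr)$. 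Adding these four terms recovers $\delta_1$ exactly as stated.

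The key algebraic step is to verify that, on the intersection of events (1)--(4), the inequality
\begin{equation*}
k\Bigl(\max_{i\neq j}|\inner{\bzeta^{(i)},\bzeta^{(j)}}_\bP| + \epsilon_\zeta^\perp\Bigr) < \tfrac{1}{10}\Bigl(\bzeta^{(r)\top}\bP\bzeta^{(r)} - \epsilon_\zeta^\perp\Bigr)
\end{equation*}
holds. Using event (1) to lower-bound $\bzeta^{(r)\top}\bP\bzeta^{(r)} > g\tau^2/2$ and to upper-bound $\snorm{\bP\bzeta^{(r)}} = \sqrt{\bzeta^{(r)\top}\bP\bzeta^{(r)}} < \sqrt{13g\tau^2/8}$, combined with $\epsilon_\zeta^\perp = \bar{\omega}\snorm{\bP\bzeta^{(r)}} \leq \sqrt{13g\tau^2/8}$ (since $\bar{\omega}\leq 1$) and event (4), the target inequality reduces to
\begin{equation*}
10kc_3 + (10k+1)\sqrt{13g\tau^2/8} \leq g\tau^2/2,
\end{equation*}
which is exactly the defining identity of $c_3(g,\tau,k)$; the strict inequality in event (1) upgrades the resulting $\leq$ to the required strict $<$.

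The main obstacle is really just careful bookkeeping: matching the exponent in the semi-inner-product bound (whose failure probability dominates $\delta_1$ in the relevant parameter regime) to the particular constant $c_3$ chosen by the algebra, and confirming that Assumption~\ref{ass:dist}~(A4) keeps $c_3 > 0$ so that the cross-term threshold is both small enough for the final inequality and large enough to make the tail bound nontrivial. Once the constants are aligned this way, no further analytic work is needed beyond invoking the three earlier lemmas.
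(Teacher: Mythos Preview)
Your proposal is correct and matches the paper's proof essentially line by line: the paper also invokes Lemmas~\ref{lem:norm/quadratic form bounded whp}, \ref{lem:linf of perturbation vectors bounded whp}, and \ref{lem: semi-inner product close to zero} (the latter with $\delta=c_3$), union-bounds over the $k$ clusters and $k(k-1)$ ordered pairs to assemble $\delta_1$, and then checks that the defining identity $10kc_3 + (10k+1)\sqrt{13g\tau^2/8} = g\tau^2/2$ together with the strict bound in event~(1) forces the displayed inequality on the good event. Your bookkeeping of how the strict inequality is obtained is, if anything, slightly more explicit than the paper's.
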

\begin{proof} 
    If the above events hold true then $\epsilon^{\perp}_{\zeta}=\Bar{\omega}\norm{\bP\bzeta}\leq \sqrt{\frac{13g\tau^2}{8}}$.\\
    Using \lemref{lem: w.h.p shifts nearly orthogonal in off manifold space} with $\delta=c_3(g,\tau,k)$ and due to the assumption of $c_3(g,\tau,k)>0$ we have:
    \begin{align}
    \label{eqn: inner product small prob}
        P \left[\inner{\bzeta^{(r)},\bzeta^{(j)}}_{\bP} \geq c_3(g,\tau,k) \right] \leq 4\exp{-\frac{c_3}{\sqrt{4g}\tau^2}}\\
        P \left[\inner{\bzeta^{(r)},\bzeta^{(j)}}_{\bP} \leq -c_3(g,\tau,k) \right] \leq 4\exp{-\frac{c_3}{\sqrt{4g}\tau^2}} \nonumber
    \end{align}
Let,
\begin{align*}
    P(A)&=P \{\bigcup_{r\neq j}\left[\inner{\bzeta^{(r)},\bzeta^{(j)}}_{\bP} \geq c_3\cup \inner{\bzeta^{(r)},\bzeta^{(j)}}_{\bP}\leq -c_3  \right]\}\overset{(i)}\leq 4k(k-1)\exp{-\frac{c_3}{\sqrt{4g}\tau^2}}\\
    P(B)&=P \{\bigcup_{r\in[k]}\left[\bzeta^{(r)\top}\bP\bzeta^{(r)} \geq \frac{13g\tau^2}{8}\cup \bzeta^{(r)\top}\bP\bzeta^{(r)}\leq \frac{g\tau^2}{2}  \right]\}\overset{(ii)}\leq 2k\exp(-\frac{g}{16})\\
    P(C)&=P \{\bigcup_{r\in[k]}\left[\snorm{\bzeta^{(r)}}^2 \geq \frac{13D\tau^2}{8}\cup \snorm{\bzeta^{(r)}}^2\leq \frac{D\tau^2}{2}  \right]\}\overset{(iii)}\leq 2k\exp(-\frac{D}{16})\\
    P(D)&=P \{\bigcup_{r\in[k]}\norm{\bP\bzeta^{(r)}}_{\infty}>  3\tau\sqrt{2\log(2g)}\}\overset{(iv)}\leq k\exp(-4\log2g)
\end{align*}
Where $(i)$ uses \eqref{eqn: inner product small prob}; $(ii), (iii)$ uses \lemref{lem:norm/quadratic form bounded whp} and $(iv)$ uses \lemref{lem:linf of perturbation vectors bounded whp},  along with the Union bound.
The probability that all of the high-probability events happen can be computed by the union bound:
$1-P(A\cup B\cup C\cup D)\geq 1-\left(4k(k-1)\exp{(-\frac{c_3}{\sqrt{4g}\tau^2})}+2k\exp(-\frac{g}{16})+2k\exp(-\frac{D}{16})+k\exp(-4\log2g)\right)$.\\
Hence, 
\begin{align*}
    P\left(k\left(\max_{i\neq j}|\inner{\bzeta^{(i)},\bzeta^{(j)}}_{\bP}|+\epsilon^{\perp}_{\zeta} \right)<\frac{1}{10}\cdot[ \bzeta^{(r)\top}\bP\bzeta^{(r)}-\epsilon^{\perp}_{\zeta}]\right)\geq P\left(k\left(\max_{i\neq j}|\inner{\bzeta^{(i)},\bzeta^{(j)}}_{\bP}| \right)<c_3(g,\tau,k)\right)\\
    \geq 1-\left(4k(k-1)\exp{(-\frac{c_3}{\sqrt{4g}\tau^2})}+2k\exp(-\frac{g}{16})+2k\exp(-\frac{D}{16})+k\exp(-4\log2g)\right)
\end{align*}
\end{proof}

\begin{lemma}
\label{lem:shift/mean interaction is small}
    For some $q\in [k]$, let $V_q=\sum_{r\in[k]}y^{(r)}y^{(q)}\inner{\bmut^{(q)},\bzeta^{(r)}}_{\bP_o}$. Then w.h.p. $|V_q|<\frac{1}{10}\cdot c_4(d,\tau,k,p)$,
    where $c_4(d,\tau,k,p)=d-\Delta'-k(p+\Delta')+\frac{9}{10}\cdot\left(\frac{d\tau^2}{2}-\sqrt{\frac{13d\tau^2}{8}}(\sigma\sqrt{2}\ln d+1)\right)$
\end{lemma}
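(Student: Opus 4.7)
The plan is to recognize $V_q$ as a sum of independent univariate Gaussians and then apply a standard tail bound. The key observation is that $\bmut^{(q)}=\bM\bmu^{(q)}$ lies in the column space of $\bM$, which is precisely the range of the projector $\bP_o=\bM\bM^\top$; hence $\bP_o\bmut^{(q)}=\bmut^{(q)}$ and, for every $r\in[k]$,
\[
\inner{\bmut^{(q)},\bzeta^{(r)}}_{\bP_o}
=(\bmut^{(q)})^\top\bP_o\bzeta^{(r)}
=(\bmu^{(q)})^\top\bM^\top\bzeta^{(r)},
\]
where I used $\bM^\top\bM=\bI_d$. So the quadratic-form flavour of the semi-inner product collapses into an ordinary inner product in $\reals^d$.

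Next I would use the orthonormality of the columns of $\bM$ together with $\bzeta^{(r)}\sim N(0,\tau^2\bI_D)$ to conclude that $\bM^\top\bzeta^{(r)}\sim N(0,\tau^2\bI_d)$, so the right-hand side above is a centered Gaussian with variance $\tau^2\snorm{\bmu^{(q)}}^2=d\tau^2$, invoking assumption (A1). Since the shift vectors $\bzeta^{(r)}$ are independent across clusters $r\in[k]$, the $k$ summands of $V_q$ are independent $N(0,d\tau^2)$ random variables multiplied by the deterministic signs $y^{(r)}y^{(q)}\in\{\pm 1\}$, which do not affect variances. Therefore
\[
V_q=\sum_{r\in[k]}y^{(r)}y^{(q)}\inner{\bmut^{(q)},\bzeta^{(r)}}_{\bP_o}\sim N(0,\,kd\tau^2).
\]

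With this distributional fact in hand, I would set $t=\tfrac{1}{10}c_4(d,\tau,k,p)$ and apply the two-sided Gaussian tail bound to obtain
\[
\Pr\!\left[|V_q|\geq\tfrac{1}{10}c_4\right]
\leq 2\exp\!\left(-\frac{c_4^{\,2}}{200\,kd\tau^2}\right),
\]
which is the desired high-probability conclusion, provided $c_4>0$. For positivity of $c_4$, the first block $d-\Delta'-k(p+\Delta')$ is positive by assumption (A2), which forces $k(p+\Delta'+1)\le\tfrac{1}{10}(d-\Delta'+1)$ because $c'<1/10$; and the second block $\tfrac{9}{10}\bigl(d\tau^2/2-\sqrt{13d\tau^2/8}(\sqrt{2}\ln d+1)\bigr)$ is positive by an estimate of the same flavour as the one used to enforce $c_5>0$ in assumption (A4).

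The main obstacle is therefore not analytic but clerical: one must make sure that the particular constant $c_4$ chosen here is compatible with how the bound is consumed in the downstream argument controlling $\bw_j^\top\bxt$ for a nice example, and that the exponent $c_4^{\,2}/(200\,kd\tau^2)$ is large enough that the failure probability can be absorbed into the ``w.h.p.'' bookkeeping already set up in Lemmas \ref{lem:norm/quadratic form bounded whp}--\ref{lem: w.h.p shifts nearly orthogonal in off manifold space}. Once that alignment is confirmed, the Gaussian tail computation above closes the lemma.
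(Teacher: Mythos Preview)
Your proposal is correct and follows essentially the same approach as the paper: identify each summand as an independent $N(0,d\tau^2)$ random variable (the paper states this directly from $\snorm{\bmut^{(q)}}^2=d$, while you justify it more carefully via $\bP_o\bmut^{(q)}=\bmut^{(q)}$ and $\bM^\top\bzeta^{(r)}\sim N(0,\tau^2\bI_d)$), conclude $V_q\sim N(0,kd\tau^2)$, and apply the Gaussian tail bound. Your exponent $c_4^2/(200kd\tau^2)$ is in fact the correct constant; the paper records $c_4^2/(20\tau^2dk)$, which appears to be an arithmetic slip, but this does not affect the argument.
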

\begin{proof}
    As each $\bzeta^{(r)}\overset{i.i.d.}\sim N(0,\tau^2\bI_D)$ and $\snorm{\bmut}^2=d$, we have $y^{(r)}y^{(q)}\inner{\bmut^{(q)},\bzeta^{(r)}}_{\bP_o}\overset{i.i.d}\sim N(0,\tau^2d)$. Consequently, $V_q\sim N(0,\tau^2dk)$.\\
    Using gaussian tail bound we have:
    \begin{align*}
        P\left(|V_q|\geq \frac{1}{10}\cdot c_4(d,\tau,k,p)\right)\leq 2\exp\left(-\frac{c_4^2}{20\tau^2dk}\right)
    \end{align*}
\end{proof}

\begin{lemma}
    \label{lem: w.h.p shifts nearly orthogonal in  manifold space}
     For all $r\in [k]$ and $j\neq r$ consider the following events hold with high probability:
\begin{enumerate}
    \item $\frac{d\tau^2}{2}<\bzeta^{(r)\top}\bP_o\bzeta^{(r)}=\snorm{\bP_o\bzeta^{(r)}}^2<\frac{13d\tau^2}{8}$
    \item $\frac{D\tau^2}{2}<\snorm{\bzeta^{(r)}}^2<\frac{13D\tau^2}{8}$
    \item $\norm{\bP_o\bzeta^{(r)}}_{\infty}<  3\tau\sqrt{2\log(2d)}$
    \item $|V_r|<\frac{1}{10}\cdot c_4(d,\tau,k,p)$
    \item $\max_{r\neq j}|\inner{\bzeta^{(r)},\bzeta^{(j)}}_{\bP_o}|\leq\frac{d\tau^2}{20k}-\frac{10k+1}{10k}\sqrt{\frac{13d\tau^2}{8}}(\sigma\sqrt{2\ln d +1})=c_5(d,\tau,k)\asymp\left(\frac{d\tau^2}{k}\right)$
\end{enumerate}
    Specifically,
    \begin{align*}
        P\left(k\left(\max_{i\neq j}|\inner{\bzeta^{(i)},\bzeta^{(j)}}_{\bP_o}|+\epsilon^{\shortparallel}_{\zeta} \right)<\frac{1}{10}\cdot[ \bzeta^{(r)\top}\bP_o\bzeta^{(r)}-\epsilon^{\shortparallel}_{\zeta}]\right)\geq 1-\delta_2
    \end{align*}
    $$\delta_2=\left(4k(k-1)\exp{\left(-\frac{c_5}{\sqrt{4d}\tau^2}\right)}+2k\exp(-\frac{d}{16})+2k\exp(-\frac{D}{16})+k\exp(-4\log2d)+2k\exp(-\frac{c_4^2}{20\tau^2dk})\right) $$
    With all events (1-5) being satisfied.
\end{lemma}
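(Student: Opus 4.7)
The plan is to mirror the proof of \lemref{lem: w.h.p shifts nearly orthogonal in off manifold space}, replacing the off-manifold projection $\bP$ (rank $g$) by the on-manifold projection $\bP_o$ (rank $d$), and adding the concentration statement for $V_r$ from \lemref{lem:shift/mean interaction is small}. The argument splits into two parts: (i) a union bound showing that the five events hold simultaneously with probability at least $1-\delta_2$, and (ii) a short algebraic verification that the target inequality follows whenever events (1), (3), and (5) hold.

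For part (i), I would invoke the component-wise tail bounds already established: \lemref{lem:norm/quadratic form bounded whp} controls the failure of events (1) and (2) by $2e^{-d/16}$ and $2e^{-D/16}$ per $r\in[k]$; \lemref{lem:linf of perturbation vectors bounded whp} controls event (3) by $e^{-4\log(2d)}$ per $r$; \lemref{lem:shift/mean interaction is small} controls event (4) by $2e^{-c_4^2/(20\tau^2 d k)}$ per $r$; and \lemref{lem: semi-inner product close to zero} applied with $\delta = c_5(d,\tau,k)$ (which is strictly positive by Assumption \ref{ass:dist}(A4)) controls event (5) by $8e^{-c_5/(\sqrt{4d}\tau^2)}$ per ordered pair $(r,j)$. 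Summing over $k$ cluster indices and $k(k-1)$ pairs and taking a union bound yields exactly the $\delta_2$ in the statement.

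For part (ii), on the intersection of events (1), (3), and (5), I would first use $\Bar{\omega}\leq 1$ together with event (1) to obtain $\epsilon^{\shortparallel}_{\zeta} = (\sqrt{2}\ln d + \Bar{\omega})\snorm{\bP_o\bzeta^{(r)}} \leq \sqrt{13d\tau^2/8}\,(\sqrt{2}\ln d + 1)$. Plugging this, along with event (5)'s upper bound $c_5(d,\tau,k)$ on $\max_{i\neq j}|\inner{\bzeta^{(i)},\bzeta^{(j)}}_{\bP_o}|$ and event (1)'s lower bound $d\tau^2/2$ on $\bzeta^{(r)\top}\bP_o\bzeta^{(r)}$, into the target inequality and rearranging reduces the claim to $c_5(d,\tau,k) \leq d\tau^2/(20k) - [(10k+1)/(10k)]\sqrt{13d\tau^2/8}\,(\sqrt{2}\ln d + 1)$, which is precisely the definition of $c_5$; hence the inequality holds with equality threshold by construction.

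The main obstacle, as in the off-manifold counterpart, is calibrating the cutoff $\delta = c_5(d,\tau,k)$ so that (a) the concentration rate in \lemref{lem: semi-inner product close to zero} is small enough to make the union bound nontrivial, and (b) the resulting bound still implies the target inequality exactly. The positivity of $c_5$ assumed in (A4) is what simultaneously guarantees both. Event (4) plays no role in the algebra for the displayed inequality, but is folded into the lemma because it is consumed downstream in the proof of the on-manifold perturbation bound in \thmref{thm:pert flips}; including it in the union bound is routine and only adds the $2k e^{-c_4^2/(20\tau^2 d k)}$ term to $\delta_2$.
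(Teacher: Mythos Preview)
Your proposal is correct and follows essentially the same approach as the paper: the paper's proof explicitly says to mimic \lemref{lem: w.h.p shifts nearly orthogonal in off manifold space} with $\bP\to\bP_o$, $g\to d$, apply \lemref{lem: semi-inner product close to zero} at threshold $\delta=c_5(d,\tau,k)$, and add the extra event $E=\{\exists r:|V_r|>c_4/10\}$ via \lemref{lem:shift/mean interaction is small} into the union bound. Your observation that event (4) is not needed for the displayed inequality but is packaged here for downstream use is accurate, and your algebraic reduction of the target inequality to the defining identity for $c_5$ matches the paper's (terse) justification.
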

\begin{proof}
    If the above events hold true then, $\epsilon^{\shortparallel}_{\zeta}=\left(\sqrt{2}\sigma\ln d+\Bar{\omega}\right)\norm{\bP_o\bzeta}\leq \sqrt{\frac{13d\tau^2}{8}}(\sigma\sqrt{2\ln d +1})$. We can mimic the proof of \lemref{lem: w.h.p shifts nearly orthogonal in off manifold space} replacing $\bP\to\bP_o,g\to d$ and using the corresponding probability bounds. Use \lemref{lem: semi-inner product close to zero} with $\delta=c_5(d,\tau,k)$ to get bound for event $A$. In addition to the events $A, B, C, D$ defined in \lemref{lem: w.h.p shifts nearly orthogonal in off manifold space} we need to subtract the probability of an event $E$ from our union bound so that the event 4 of this lemma is satisfied w.h.p.
    $$P(E)=P \{\bigcup_{r\in[k]}|V_r|> \frac{1}{10}\cdot c_4(d,\tau,k,p)\}\overset{(i)}\leq 2k\exp\left(-\frac{c_4^2}{20\tau^2dk}\right)$$
    Where $(i)$ is a consequence of \lemref{lem:shift/mean interaction is small}. Using the union bound $1-P(A)-P(B)\ldots -P(E)$, we have:
    \begin{align*}
    P\left(k\left(\max_{i\neq j}|\inner{\bzeta^{(i)},\bzeta^{(j)}}_{\bP_o}|+\epsilon^{\shortparallel}_{\zeta} \right)<\frac{1}{10}\cdot[ \bzeta^{(r)\top}\bP_o\bzeta^{(r)}-\epsilon^{\shortparallel}_{\zeta}]\right)\geq P\left(k\left(\max_{i\neq j}|\inner{\bzeta^{(i)},\bzeta^{(j)}}_{\bP_o}| \right)<c_5(g,\tau,k)\right)\\
    \geq 1-\left(4k(k-1)\exp{(-\frac{c_5}{\sqrt{4d}\tau^2})}+2k\exp(-\frac{d}{16})+2k\exp(-\frac{D}{16})+k\exp(-4\log2d)+2k\exp\left(-\frac{c_4^2}{20\tau^2dk}\right)\right)
\end{align*}
\end{proof}

\subsection{Generalisation}
Now onwards, define the index set of positive/negative second layer weights as $J+=\{j:v_j>0\},J-=\{j:v_j<0\}$.
\begin{lemma} \label{lem:bound Delta}
    We have $\Delta'\leq \frac{d}{21}$.
\end{lemma}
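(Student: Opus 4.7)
The plan is to treat the near-orthogonality condition in Assumption~\ref{ass:dist} as an algebraic identity for $\Delta'$ and extract the bound directly; no probabilistic argument or appeal to the network is needed. Writing $p = \max_{i \neq j} |\inner{\bmu^{(i)}, \bmu^{(j)}}|$, the relevant condition reads $k(p + \Delta' + 1) = c'(d - \Delta' + 1)$ with $c' < 1/10$.

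First I would rearrange to isolate $\Delta'$:
\[
(k + c')\Delta' \;=\; c'(d+1) - k(p+1) \;\leq\; c'(d+1) - k,
\]
where the inequality uses $p \geq 0$. This gives the upper bound $\Delta' \leq (c'(d+1) - k)/(k+c')$.

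Next, I would invoke $c' < 1/10$ together with $k \geq 2$, which holds automatically in the binary-classification setup since both labels must be represented among the $k$ clusters. A one-line derivative check confirms that $f(c, k) = (c(d+1) - k)/(k+c)$ is increasing in $c$ (with $\partial_c f = k(d+2)/(k+c)^2 > 0$) and decreasing in $k$ (with $\partial_k f = -c(d+2)/(k+c)^2 < 0$). Hence the supremum over the admissible range is attained in the limit $c \to 1/10^-$, $k = 2$, giving
\[
\Delta' \;<\; \frac{(d+1)/10 - 2}{2 + 1/10} \;=\; \frac{d-19}{21} \;\leq\; \frac{d}{21}.
\]

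There is no real obstacle here; the statement is a direct arithmetic consequence of Assumption~\ref{ass:dist} combined with $k \geq 2$. The only points worth double-checking are the two sign computations above and that the binary-classification hypothesis indeed forces $k \geq 2$; beyond that, the proof is a two-step rearrangement.
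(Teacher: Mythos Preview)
Your argument is correct and matches the paper's own proof in spirit: both use $p \ge 0$, $k \ge 2$, and $c' < 1/10$ in the identity $k(p+\Delta'+1)=c'(d-\Delta'+1)$ to arrive at $\Delta' \le (d-19)/21 \le d/21$. The paper is marginally more direct---it first bounds $c'$ by $1/10$ to get $2(\Delta'+1)\le (d-\Delta'+1)/10$ and solves---whereas you rearrange first and then invoke monotonicity, but the content is the same.
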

\begin{proof}
    Recall that $k \left( p + \Delta' + 1 \right) \leq \frac{d - \Delta' + 1 }{10}$. Since $k \geq 2$ and $p \geq 0$ it implies that $2(\Delta'+1) \leq \frac{d - \Delta'+ 1 }{10}$. Hence, $\Delta' \leq \frac{d-19}{21} \leq \frac{d}{21}$.
\end{proof}
 The proofs of the following four lemmas \ref{lem:lam upper bound} to \ref{lem:neuron input not small} can be directly translated from some of the lemmas proved in \cite{frei2023double} to our setting  The reader needs to just replace the $\bx,\bxi,\Delta$ terms in their proofs to $\bxt,\bxit,\Delta'$ in our setting. Under the hood, one can mimic the proofs by just replacing such terms because of the properties established in \lemref{lem:properties}. For the properties to hold, we need $\norm{\bzeta^{(r)}+\bomega}\leq \Bar{\zeta}$ for any $r\in [k]$. Note that this gets satisfied if $\snorm{\bzeta^{(r)}}<\frac{13D\tau^2}{8}$ which is true with probability $1-\exp(-D/16)$ (\lemref{lem:norm/quadratic form bounded whp}). So, our previously mentioned lemmas will hold w.h.p. for brevity's sake, we omit their proofs. However, if one wants, they can refer to the analogous lemmas in \cite{frei2023double} cited accordingly.
\begin{lemma}[Translated from Lemma A.10 \cite{frei2023double}] \label{lem:lam upper bound}
	If $\cs$ satisfies assumptions \ref{ass:dist}, then for all $q \in Q$ we have the following with probability $1-\exp(-D/16)$.
	\[
		\max\left\{ \sum_{i \in I^{(q)}} \sum_{j \in J_+} v_j^2 \lambda_i \phi'_{i,j}, \sum_{i \in I^{(q)}} \sum_{j \in J_-} v_j^2 \lambda_i \phi'_{i,j} \right\}
		\leq \frac{1}{(1-2c')(d-\Delta' + 1)}~.
	\]
\end{lemma}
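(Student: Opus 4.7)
The plan is to extract a per-neuron quadratic identity from the KKT stationarity conditions and the $2$-homogeneity of the network, and then aggregate it against $v_j^2$ restricted to the sign class $J_+$ via the support-vector margin equalities. From \eqref{eqn: main kkt}, $\bw_j=\sum_i v_j\lambda_i y_i\phi'_{i,j}\bxt_i$ and $b_j=\sum_i v_j\lambda_i y_i\phi'_{i,j}$. At a KKT point of a $2$-homogeneous network the Euler identity gives the per-neuron norm balance $\snorm{\bw_j}^2+b_j^2=v_j^2$; plugging the KKT expressions for $\bw_j$ and $b_j$ into $\snorm{\bw_j}^2+b_j^2$ and dividing through by $v_j^2$ yields, for every $j$, the per-neuron identity
$$1 \;=\; \sum_{i,k}\lambda_i\lambda_k\, y_i y_k\,\phi'_{i,j}\phi'_{k,j}\bigl(\inner{\bxt_i,\bxt_k}+1\bigr).$$

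Next I would decompose this double sum by cluster. Writing $A_j^{(r)}:=\sum_{i\in I^{(r)}}\lambda_i\phi'_{i,j}$ and denoting the $(r,s)$-block by $T_j^{(r,s)}$, property (4) of Lemma~\ref{lem:properties} gives the same-cluster lower bound $T_j^{(r,r)}\ge(d-\Delta'+1)(A_j^{(r)})^2$, while property (3) gives the cross-cluster control $|T_j^{(r,s)}|\le(p+\Delta'+1)A_j^{(r)}A_j^{(s)}$ for $r\ne s$. Combining these with the Cauchy--Schwarz inequality $\bigl(\sum_r A_j^{(r)}\bigr)^2\le k\sum_r(A_j^{(r)})^2$ and the near-orthogonality hypothesis $k(p+\Delta'+1)\le c'(d-\Delta'+1)$ of Assumption~\ref{ass:dist}, the per-neuron identity rearranges to the pointwise bound $(A_j^{(q)})^2\le \tfrac{1}{(1-c')(d-\Delta'+1)}$ for every cluster~$q$.

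To convert this per-neuron bound into the claimed linear bound on $S_+^{(q)}:=\sum_{j\in J_+} v_j^2 A_j^{(q)}$, I would invoke the margin equality $y^{(q)}\cn_{\btheta}(\bxt_\ell)=1$ for each support vector $\ell\in I^{(q)}$. Splitting $\cn_{\btheta}=\cn_{\btheta}^+ + \cn_{\btheta}^-$ where the two halves have fixed sign, and taking WLOG $y^{(q)}=+1$, one gets $\cn_{\btheta}^+(\bxt_\ell)\ge 1$ on every support vector. Multiplying by $\lambda_\ell$, summing over $\ell\in I^{(q)}$, and expanding $\bw_j^\top\bxt_\ell+b_j$ via \eqref{eqn: main kkt} produces the inequality $\sum_{j\in J_+}v_j^2\sum_r y^{(r)}T_j^{(q,r)}\ge \sum_{\ell\in I^{(q)}}\lambda_\ell$. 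Combining the diagonal upper bound $T_j^{(q,q)}\le(d+\Delta'+1)(A_j^{(q)})^2$ with the per-neuron inequality from the previous paragraph and the homogeneity identity $\sum_j v_j^2=\sum_\ell\lambda_\ell$ (which follows from $\snorm{\btheta}^2=2\sum_\ell\lambda_\ell$ together with the per-neuron balance), the resulting linear inequality in $S_+^{(q)}$ rearranges to $S_+^{(q)}\le \tfrac{1}{(1-2c')(d-\Delta'+1)}$; the $1-2c'$ appears because both the diagonal tolerance $\Delta'$ and the off-diagonal contamination $p+\Delta'+1$ each absorb one factor of $c'$ through Assumption~\ref{ass:dist}. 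The case $y^{(q)}=-1$ is symmetric, and $S_-^{(q)}$ is treated identically with $J_+$ replaced by $J_-$. The main obstacle is this final step: the per-neuron bound only controls $(A_j^{(q)})^2$, so a naive Cauchy--Schwarz aggregation $S_+^{(q)}\le\bigl(\sum_j v_j^2\bigr)\max_j A_j^{(q)}$ would lose a factor $\sqrt{d-\Delta'+1}$; restricting to the sign class $J_+$ and invoking the margin equality is what cancels this factor cleanly. The probabilistic qualifier $1-\exp(-D/16)$ enters only through Lemma~\ref{lem:norm/quadratic form bounded whp}, which guarantees $\snorm{\bzeta^{(r)}+\bomega}\le\bar\zeta$ so that the estimates in Lemma~\ref{lem:properties} apply, and conditional on that event the argument is deterministic.
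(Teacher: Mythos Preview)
Your per-neuron analysis is correct and elegant: the balance $\snorm{\bw_j}^2+b_j^2=v_j^2$ does hold at a KKT point, the quadratic identity $1=\sum_{i,k}\lambda_i\lambda_k y_iy_k\phi'_{i,j}\phi'_{k,j}(\inner{\bxt_i,\bxt_k}+1)$ follows, and your cluster decomposition yields the valid per-neuron bound $(A_j^{(q)})^2\le\frac{1}{(1-c')(d-\Delta'+1)}$. The homogeneity identity $\sum_j v_j^2=\sum_\ell\lambda_\ell$ is also correct.

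The gap is in the final aggregation step, and it is a directional error that cannot be patched with the ingredients you list. Your margin inequality $\sum_{j\in J_+}v_j^2\,[\text{quadratic in }A_j]\ge\sum_{\ell\in I^{(q)}}\lambda_\ell$ is a \emph{lower} bound on a quadratic expression. To extract anything about $S_+^{(q)}$ you then invoke the diagonal \emph{upper} bound and the per-neuron bound $A_j^{(q)}\le c/\sqrt{d-\Delta'+1}$; but feeding an upper bound for the quadratic into a lower-bound inequality produces $\sum_{\ell\in I^{(q)}}\lambda_\ell\le C\sqrt{d-\Delta'+1}\cdot S_+^{(q)}$, i.e., a \emph{lower} bound on $S_+^{(q)}$, not the required upper bound. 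There is no rearrangement of your four ingredients that reverses this direction. Said differently, the per-neuron bound only controls $A_j^{(q)}$ to order $1/\sqrt{d}$; multiplying by $\sum_{j}v_j^2=\sum_\ell\lambda_\ell$ (which is not itself $O(1/\sqrt{d})$) leaves you a full $\sqrt{d}$ short, and the margin inequality you wrote does not recover it.

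The paper defers to Frei et al.\ (2023), whose argument is structurally different and bypasses the quadratic identity entirely. One fixes a single support vector $\ell\in I^{(q)}$ and lower-bounds $y^{(q)}\cn_\btheta(\bxt_\ell)$ \emph{linearly} in the $S_\pm^{(r)}$: for neurons $j$ with $\sign(v_j)=y^{(q)}$ use $\phi(x)\ge x$, and for the opposite sign use $\phi(x)\ge0$ together with the upper bound on the pre-activation coming from \lemref{lem:properties}. This gives directly
\[
1=y^{(q)}\cn_\btheta(\bxt_\ell)\;\ge\;(d-\Delta'+1)\,S_{\sigma(q)}^{(q)}-(p+\Delta'+1)\sum_{r\ne q}\bigl[S_+^{(r)}+S_-^{(r)}\bigr],
\]
where $\sigma(q)=\sign(y^{(q)})$. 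Taking the maximum over $q$ and using $k(p+\Delta'+1)\le c'(d-\Delta'+1)$ yields the claimed bound with the factor $(1-2c')$. The ``mismatched'' quantities (e.g.\ $S_-^{(q)}$ for $q\in Q_+$) are handled by observing that $\phi'_{i,j}=1$ with $j\in J_-$, $i\in I^{(q)}$, $q\in Q_+$ forces $(d-\Delta'+1)A_j^{(q)}\le(p+\Delta'+1)\sum_{r\ne q}A_j^{(r)}$, so such terms are automatically dominated by the matched ones. The linear (not quadratic) lower bound on $y^{(q)}\cn_\btheta(\bxt_\ell)$ is the step your outline is missing.
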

\begin{lemma}[Translated from Lemma A.11 \cite{frei2023double}] \label{lem:lam lower bound}
	If $\cs$ satisfies assumptions \ref{ass:dist}, then for all $q \in Q_+$ we have the following with probability $1-\exp(-D/16)$.
	\[
		\sum_{i \in I^{(q)}} \sum_{j \in J_+} v_j^2 \lambda_i \phi'_{i,j} \geq \left( 1- \frac{c'}{1-2c'} \right) \frac{1}{3d+\Delta'+ 1}~,
	\]
	and for all $q \in Q_-$ we have 
	\[
		\sum_{i \in I^{(q)}} \sum_{j \in J_-} v_j^2 \lambda_i \phi'_{i,j} \geq \left( 1- \frac{c'}{1-2c'} \right) \frac{1}{3d+\Delta'+ 1}~.
	\]
\end{lemma}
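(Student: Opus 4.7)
The plan is to fix an arbitrary training point $i_0 \in I^{(q)}$ and invoke the KKT margin constraint $y_{i_0}\cn_{\btheta}(\bxt_{i_0}) \geq 1$. For $q \in Q_+$ we have $y^{(q)}=1$, so $\sum_j v_j \phi(\bw_j^\top \bxt_{i_0} + b_j) \geq 1$. Since $v_j < 0$ and $\phi \geq 0$ for $j \in J_-$, the $J_-$ block contributes non-positively, and it suffices to upper bound $\sum_{j \in J_+} v_j \phi(\bw_j^\top \bxt_{i_0} + b_j)$ in terms of the quantities $B_r := \sum_{i \in I^{(r)}} \sum_{j \in J_+} v_j^2 \lambda_i \phi'_{i,j}$, which for $r = q$ is exactly what we wish to lower bound.

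The expansion follows from substituting \eqref{eqn: main kkt} into $\bw_j^\top \bxt_{i_0} + b_j = v_j \sum_{i'} \lambda_{i'} y_{i'} \phi'_{i',j}(\inner{\bxt_{i'}, \bxt_{i_0}} + 1)$ and splitting the sum by cluster. By Lemma~\ref{lem:properties}(\ref{c4}), $\inner{\bxt_{i'}, \bxt_{i_0}} + 1 \in [d - \Delta' + 1,\, d + \Delta' + 1]$ when $i' \in I^{(q)}$; by Lemma~\ref{lem:properties}(\ref{c3}), $|\inner{\bxt_{i'}, \bxt_{i_0}} + 1| \leq p + \Delta' + 1$ when $i' \notin I^{(q)}$, and the triangle inequality absorbs the signs $y_{i'} \in \{\pm 1\}$ across $Q_+ \setminus \{q\}$ and $Q_-$ uniformly. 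For $j \in J_+$, this yields
\[
\bw_j^\top \bxt_{i_0} + b_j \;\leq\; v_j\Bigl[(d + \Delta' + 1)\sum_{i'\in I^{(q)}}\lambda_{i'}\phi'_{i',j} \;+\; (p + \Delta' + 1)\sum_{r\neq q}\sum_{i' \in I^{(r)}} \lambda_{i'}\phi'_{i',j}\Bigr],
\]
whose right side is non-negative. Using $\phi(x) \leq x^+$, multiplying by $v_j > 0$, and summing over $j \in J_+$ gives $1 \leq (d + \Delta' + 1)B_q + (p + \Delta' + 1)\sum_{r \neq q} B_r$, and the trivial bound $d + \Delta' + 1 \leq 3d + \Delta' + 1$ matches the denominator of the stated conclusion.

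For the off-cluster terms, Lemma~\ref{lem:lam upper bound} (which, like its hypotheses, holds on the $1 - \exp(-D/16)$ event $\norm{\bzeta^{(r)} + \bomega_i} \leq \Bar{\zeta}$ required to apply Lemma~\ref{lem:properties}) gives $B_r \leq [(1-2c')(d - \Delta' + 1)]^{-1}$; the balance assumption $k(p + \Delta' + 1) = c'(d - \Delta' + 1)$ then collapses $(p+\Delta'+1)\sum_{r\neq q} B_r \leq \tfrac{c'(k-1)}{k(1-2c')} \leq \tfrac{c'}{1-2c'}$. Rearranging the displayed inequality yields $B_q \geq \bigl(1 - \tfrac{c'}{1-2c'}\bigr)\tfrac{1}{3d + \Delta' + 1}$. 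The case $q \in Q_-$ is fully symmetric: flip to $y_{i_0} = -1$, swap $J_+ \leftrightarrow J_-$, and repeat. The main obstacle is the sign bookkeeping when splitting the KKT sum across clusters of opposite label: using the two-sided bound for $r \neq q$ rather than a one-sided bound is essential so that the flipping $y_{i'}$ is absorbed without producing spurious negative contributions that would weaken the inequality and break the collapse in the final step.
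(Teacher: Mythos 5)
Your argument is correct and is essentially the argument the paper delegates to Frei et al.~(Lemma~A.11), reconstructed directly: invoke primal feasibility $y_{i_0}\cn_\btheta(\bxt_{i_0})\geq 1$ at any $i_0\in I^{(q)}$, discard the sign-favorable $J_-$ block, expand the $J_+$ neurons via \eqref{eqn: main kkt}, and use \lemref{lem:properties}\ref{c4}/\ref{c3} to split same-cluster from off-cluster contributions, then absorb the off-cluster sum with \lemref{lem:lam upper bound} and the balance condition $k(p+\Delta'+1)=c'(d-\Delta'+1)$. Two small remarks. First, your derivation actually yields the denominator $d+\Delta'+1$ rather than $3d+\Delta'+1$; since $d+\Delta'+1\leq 3d+\Delta'+1$ this implies the stated bound, and you correctly flag the relaxation — the looser constant in the statement is simply inherited verbatim from the source lemma, where it harmlessly overcounts. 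Second, the high-probability qualifier should strictly be a union over $r\in[k]$ of the events $\snorm{\bzeta^{(r)}}\leq\tau\sqrt{13D/8}$ needed for \lemref{lem:properties}; the paper itself elides this (treating $k$ as a constant so the $\exp(-D/16)$ rate is unchanged), and your one-line gloss is consistent with that convention.
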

\begin{proof}[Proof of Theorem \ref{thm:good example labeld correctly}]
\label{proof: main1}
	Working is shown for the case when $r \in Q_+$. The proof for $r \in Q_-$ is similar.
	By \eqref{eqn: main kkt}, for every $j \in [w]$ we have: 
	\begin{align} \label{eq:single neuron separating r 2}
		\bw_j^\top \bxt + b_j
		&= \left( \sum_{i \in [n]} \lambda_i y_i v_j \phi'_{i,j} \bxt_i^\top \bxt \right) + \sum_{i \in [n]} \lambda_i y_i v_j \phi'_{i,j} \nonumber
		\\
		&= \sum_{i \in [n]} \lambda_i y_i v_j \phi'_{i,j} (\bxt_i^\top \bxt + 1) \nonumber
		\\
		&= \left( \sum_{i \in I^{(r)}} \lambda_i v_j \phi'_{i,j} (\bxt_i^\top \bxt + 1)  \right) + \sum_{q \in [k] \setminus \{r\}} \sum_{i \in I^{(q)}} \lambda_i y_i v_j \phi'_{i,j} (\bxt_i^\top \bxt + 1)~.
	\end{align}

	Now,
	\begin{align} \label{eq:general neurons sum}
		\cn_\btheta(\bxt)
		&= \sum_{j \in [w]} v_j \phi(\bw_j^\top \bxt + b_j)
		\geq \sum_{j \in J_+} v_j (\bw_j^\top \bxt + b_j) + \sum_{j \in J_-} v_j \phi(\bw_j^\top \bxt + b_j)~.
	\end{align}
	By \eqref{eq:single neuron separating r 2} we have
	\begin{align*}
		\sum_{j \in J_+} v_j (\bw_j^\top \bxt + b_j) 
		&= \sum_{j \in J_+} \left[ \left( \sum_{i \in I^{(r)}} \lambda_i v_j^2 \phi'_{i,j} (\bxt_i^\top \bxt + 1) \right) + \sum_{q \in Q \setminus \{r\}} \sum_{i \in I^{(q)}} \lambda_i y_i v_j^2 \phi'_{i,j} (\bxt_i^\top \bxt + 1) \right]
		\\
		&\geq \sum_{j \in J_+} \left[ \left( \sum_{i \in I^{(r)}} \lambda_i v_j^2 \phi'_{i,j} (d - \Delta' + 1) \right) - \sum_{q \in [k] \setminus \{r\}} \sum_{i \in I^{(q)}} \lambda_i v_j^2 \phi'_{i,j} (p + \Delta'+ 1) \right]
		\\
		&= \left( (d - \Delta' + 1) \sum_{i \in I^{(r)}} \sum_{j \in J_+} \lambda_i v_j^2 \phi'_{i,j}  \right) 
		\\
		&\;\;\;\; - \left(  (p + \Delta' + 1) \sum_{q \in [k] \setminus \{r\}} \sum_{i \in I^{(q)}} \sum_{j \in J_+} \lambda_i v_j^2 \phi'_{i,j} \right)~.
	\end{align*}
	By \lemref{lem:lam upper bound} and \lemref{lem:lam lower bound} the above is at least
	\begin{align} \label{eq:general neurons sum part 1}
		(d - \Delta' + 1) &\left( 1- \frac{c'}{1-2c'} \right) \frac{1}{3d+\Delta' + 1} -  (p + \Delta' + 1) k \cdot \frac{1}{(1-2c')(d-\Delta' + 1)} \nonumber
		\\
		&=\left( 1- \frac{c'}{1-2c'} \right) \frac{d - \Delta'+ 1}{3d+\Delta'+ 1} -  (p + \Delta' + 1) c' \cdot \frac{d - \Delta' + 1}{p+\Delta'+1} \cdot  \frac{1}{(1-2c')(d-\Delta'+ 1)} \nonumber
		\\
		&= \left( 1- \frac{c'}{1-2c'} \right) \frac{d - \Delta' + 1}{3d+\Delta' + 1} - \frac{c'}{1-2c'}~.
	\end{align}
	
	Likewise, we have 
	\begin{align*}
		 \sum_{j \in J_-} v_j \phi(\bw_j^\top \bxt + b_j)
		 &=  \sum_{j \in J_-} v_j \phi \left(  \left( \sum_{i \in I^{(r)}} \lambda_i v_j \phi'_{i,j} (\bxt_i^\top \bxt + 1)  \right) + \sum_{q \in [k] \setminus \{r\}} \sum_{i \in I^{(q)}} \lambda_i y_i v_j \phi'_{i,j} (\bxt_i^\top \bxt + 1) \right)
		 \\
		 &\geq  \sum_{j \in J_-} v_j \phi \left(  \left( \sum_{i \in I^{(r)}} \lambda_i v_j \phi'_{i,j} (d - \Delta' + 1)  \right) + \sum_{q \in [k] \setminus \{r\}} \sum_{i \in I^{(q)}} \lambda_i | v_j | \phi'_{i,j} (p + \Delta'+ 1) \right)
		 \\
		 &\geq \sum_{j \in J_-} v_j \phi \left( \sum_{q \in [k] \setminus \{r\}} \sum_{i \in I^{(q)}} \lambda_i | v_j | \phi'_{i,j} (p + \Delta' + 1) \right)
		 \\
		 &= - \sum_{j \in J_-}  \sum_{q \in [k] \setminus \{r\}} \sum_{i \in I^{(q)}} \lambda_i v_j^2 \phi'_{i,j} (p + \Delta'+1)
		 \\
		 &= -  (p + \Delta'+ 1)  \sum_{q \in [k] \setminus \{r\}}  \sum_{i \in I^{(q)}} \sum_{j \in J_-} \lambda_i v_j^2 \phi'_{i,j}~.
	\end{align*}
	By  \lemref{lem:lam upper bound} the above is at least
	\begin{align} \label{eq:general neurons sum part 2}
		- (p + \Delta'+ 1) k \cdot  \frac{1}{(1-2c')(d-\Delta'+ 1)}
		&=- (p + \Delta'+ 1) c' \cdot \frac{d - \Delta'+ 1}{p+\Delta'+1} \cdot  \frac{1}{(1-2c')(d-\Delta'+ 1)} \nonumber
		\\
		&= - \frac{c'}{1-2c'}~. 
	\end{align}
	Combining \eqref{eq:general neurons sum},~(\ref{eq:general neurons sum part 1}), and~(\ref{eq:general neurons sum part 2}), we get
	\begin{align*}
		\cn_\btheta(\bxt)
		&\geq  \left( 1- \frac{c'}{1-2c'} \right) \frac{d - \Delta' + 1}{3d+\Delta'+ 1} - \frac{c'}{1-2c'} - \frac{c'}{1-2c'}~. 
	\end{align*}	
	Using $c' \leq \frac{1}{10}$ and $\Delta' \leq d/21$ (which holds by \lemref{lem:bound Delta}),  the above is at least 
	\[
		\frac{7}{8} \cdot \frac{d - \Delta' + 1}{3d+\Delta' + 1} - \frac{2}{8} 
		\geq \frac{7}{8} \cdot \frac{d - \Delta'}{3d+\Delta'} - \frac{2}{8}~.
	\]
	By \lemref{lem:bound Delta},
	the displayed equation is at least
	\[
		\frac{7}{8} \cdot \frac{d - d/21}{3d+d/21} - \frac{2}{8}
		= \frac{7}{8} \cdot \frac{5}{16} - \frac{2}{8}
		> 0~.
	\]	
\end{proof}
\subsection{Existence of Adversarial perturbations}
\begin{lemma}[Translated from Lemma D.2 \cite{frei2023double}] \label{lem:output not too large}
	Suppose $\cs$ satisfies assumption \ref{ass:dist}.  
 Let $\bxt \in \reals^D$ and $r \in Q$ such that $\bxt=\bmut^{(r)}+\bxit$ is a 'nice example'. Then, $| \cn_\btheta(\bxt) | \leq 2$ with probability $1-\exp(-D/16)$.
\end{lemma}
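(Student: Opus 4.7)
The plan is to mirror the proof of Theorem~\ref{thm:good example labeld correctly} but in the opposite direction: instead of lower-bounding one sign of $\cn_\btheta(\bxt)$, upper-bound both $\cn_\btheta(\bxt)$ and $-\cn_\btheta(\bxt)$ using the same KKT expansion together with Lemma~\ref{lem:properties} and Lemma~\ref{lem:lam upper bound}. The high-probability event $1-\exp(-D/16)$ will come purely from demanding $\snorm{\bzeta^{(r)}+\bomega}\leq \Bar\zeta$, which by Lemma~\ref{lem:norm/quadratic form bounded whp} holds once $\snorm{\bzeta^{(r)}}\leq \tau\sqrt{13D/8}$. Conditional on this event, Lemma~\ref{lem:properties} yields $|\bxt_i^\top\bxt-d|\leq \Delta'$ for $i\in I^{(r)}$ and $|\bxt_i^\top\bxt|\leq p+\Delta'$ for $i\notin I^{(r)}$, and the whole argument becomes deterministic.

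For the upper bound, I drop the nonpositive $J_-$ contribution and use $\phi(z)\leq |z|$:
\begin{equation*}
\cn_\btheta(\bxt)\;\leq\;\sum_{j\in J_+} v_j\,\phi(\bw_j^\top\bxt+b_j)\;\leq\;\sum_{j\in J_+} v_j\,|\bw_j^\top\bxt+b_j|.
\end{equation*}
Then I substitute the KKT identity $\bw_j^\top\bxt+b_j=\sum_i v_j\lambda_i y_i\phi'_{i,j}(\bxt_i^\top\bxt+1)$, apply the triangle inequality, split the $i$-sum by cluster, and bound $|\bxt_i^\top\bxt+1|$ by $d+\Delta'+1$ inside $I^{(r)}$ and by $p+\Delta'+1$ outside. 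Swapping the order of summation produces
\begin{equation*}
\cn_\btheta(\bxt)\;\leq\;(d+\Delta'+1)\!\!\sum_{i\in I^{(r)}}\sum_{j\in J_+}\!v_j^2\lambda_i\phi'_{i,j}\;+\;(p+\Delta'+1)\!\!\sum_{q\neq r}\sum_{i\in I^{(q)}}\sum_{j\in J_+}\!v_j^2\lambda_i\phi'_{i,j}.
\end{equation*}
Applying Lemma~\ref{lem:lam upper bound} to each cluster sum and invoking the Assumption~\ref{ass:dist} identity $k(p+\Delta'+1)=c'(d-\Delta'+1)$ collapses this to
\begin{equation*}
\cn_\btheta(\bxt)\;\leq\;\frac{d+\Delta'+1}{(1-2c')(d-\Delta'+1)}\;+\;\frac{c'}{1-2c'}.
\end{equation*}
With $c'\leq 1/10$ and $\Delta'\leq d/21$ (Lemma~\ref{lem:bound Delta}), the first term is at most $\tfrac{22/21}{(4/5)(20/21)}=\tfrac{11}{8}$ up to negligible $+1$ factors, and the second is at most $\tfrac{1}{8}$, so the sum is at most $\tfrac{12}{8}\leq 2$. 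The symmetric argument, dropping the nonnegative $J_+$ part, replacing $v_j$ by $|v_j|$ in $J_-$, and applying Lemma~\ref{lem:lam upper bound} for $J_-$, gives $-\cn_\btheta(\bxt)\leq 2$.

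The only real obstacle is the numerical bookkeeping at the very end: it is not a priori obvious that the particular constants in Assumption~\ref{ass:dist} and Lemma~\ref{lem:bound Delta} are tight enough to land below $2$ (rather than, say, $3$), so I expect to spend care verifying that $(1-2c')^{-1}(d+\Delta'+1)/(d-\Delta'+1)+c'/(1-2c')\leq 2$ uniformly in the allowed ranges. Everything else is a mechanical transcription of the Theorem~\ref{thm:good example labeld correctly} template with the inequality direction flipped and Lemma~\ref{lem:lam lower bound} replaced throughout by Lemma~\ref{lem:lam upper bound}.
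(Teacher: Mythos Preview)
Your proposal is correct and follows essentially the same approach the paper intends: the paper does not spell out a proof but instructs the reader to transcribe Lemma~D.2 of \cite{frei2023double} with the substitutions $\bx\to\bxt$, $\bxi\to\bxit$, $\Delta\to\Delta'$, which amounts to exactly the KKT expansion plus Lemma~\ref{lem:properties} plus Lemma~\ref{lem:lam upper bound} computation you describe. Your numerical check $\tfrac{11}{8}+\tfrac{1}{8}=\tfrac{3}{2}\leq 2$ is fine; the only caveat is that the high-probability event really needs $\snorm{\bzeta^{(q)}}\leq\tau\sqrt{13D/8}$ for \emph{all} $q\in[k]$ (since training data span every cluster), but the paper is equally loose about this union bound, so your statement matches theirs.
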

\begin{lemma}[Translated from Lemma D.3 \cite{frei2023double}] \label{lem:neuron input not small}
	Suppose $\cs$ satisfies assumption \ref{ass:dist}.  
 Let $\bxt \in \reals^D$ and $r \in Q$ such that $\bxt=\bmut^{(r)}+\bxit$ is a 'nice example'. 
	Then, for all $j \in [w]$ we have the following with probability $1-\exp(-D/16)$:
	\[
		\bw_j^\top \bxt + b_j \geq - \sum_{i \in [n]} \lambda_i | v_j | \phi'_{i,j} (2 \Delta'+ p + 1)~.
	\]
\end{lemma}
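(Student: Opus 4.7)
The plan is to mimic the proof of the analogous Lemma D.3 in \cite{frei2023double}: express $\bw_j^\top \bxt + b_j$ in closed form using the KKT condition (\ref{eqn: main kkt}) and then control the resulting sum using the geometric facts gathered in Lemma \ref{lem:properties}.

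Substituting $\bw_j = \sum_i v_j \lambda_i y_i \phi'_{i,j} \bxt_i$ and $b_j = \sum_i v_j \lambda_i y_i \phi'_{i,j}$ and combining the two sums gives
\begin{equation*}
\bw_j^\top \bxt + b_j \;=\; v_j \sum_{i \in [n]} \lambda_i\, y_i\, \phi'_{i,j} \bigl(\bxt_i^\top \bxt + 1\bigr).
\end{equation*}
I would split this sum into cross-cluster indices $i \notin I^{(r)}$ and same-cluster indices $i \in I^{(r)}$. For the cross-cluster terms, item~3 of Lemma \ref{lem:properties} yields $|\bxt_i^\top \bxt + 1| \leq p + \Delta' + 1 \leq 2\Delta' + p + 1$, so a termwise triangle inequality lower bounds that portion by $-|v_j|(2\Delta' + p + 1)\sum_{i \notin I^{(r)}} \lambda_i \phi'_{i,j}$, which is already of the desired form.

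The main work is the same-cluster bucket. By item~4, $\bxt_i^\top \bxt = d + \eta_i$ with $|\eta_i| \leq \Delta'$ for $i \in I^{(r)}$. Factoring out the common sign $y_i = y^{(r)}$, the same-cluster contribution decomposes as
\begin{equation*}
v_j\, y^{(r)} (d+1) \sum_{i \in I^{(r)}} \lambda_i \phi'_{i,j} \;+\; v_j\, y^{(r)} \sum_{i \in I^{(r)}} \lambda_i \phi'_{i,j}\, \eta_i.
\end{equation*}
When $\sgn(v_j) = y^{(r)}$ the first piece is non-negative and can simply be dropped in a lower bound, while the remainder is bounded in magnitude by $|v_j|\Delta'\sum_{i \in I^{(r)}}\lambda_i\phi'_{i,j} \leq |v_j|(2\Delta'+p+1)\sum_{i \in I^{(r)}}\lambda_i\phi'_{i,j}$. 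Summing the two buckets then yields the advertised inequality.

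The probability factor $1 - \exp(-D/16)$ accounts for the event $\|\bzeta^{(r)} + \bomega\| \leq \bar{\zeta}$ needed for Lemma \ref{lem:properties} to apply, which by Lemma \ref{lem:norm/quadratic form bounded whp} holds with at least that probability. The step I expect to be the main obstacle is the unfavourable sign configuration $\sgn(v_j) \neq y^{(r)}$: the $(d+1)$ leading piece in the same-cluster bucket is then negative and, if bounded naively, would yield the weaker constant $(d + \Delta' + 1)$ rather than the $(2\Delta' + p + 1)$ stated in the lemma. To recover the sharper bound one must exploit the KKT identity $b_j = v_j\sum_i \lambda_i y_i \phi'_{i,j}$, which encodes a cancellation between the $+1$ contributions aggregated across clusters; pinning down this cancellation carefully is the delicate point of the argument.
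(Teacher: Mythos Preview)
You correctly set up the KKT expansion and identify the crux: when $\sgn(v_j)\neq y^{(r)}$ the same-cluster term contributes $-|v_j|(d+\Delta'+1)\sum_{i\in I^{(r)}}\lambda_i\phi'_{i,j}$ under a naive bound, which is far weaker than the claimed $(2\Delta'+p+1)$. However, your proposed remedy---``exploit the bias identity $b_j=v_j\sum_i\lambda_i y_i\phi'_{i,j}$ to get cancellation among the $+1$'s''---does not touch the problematic piece. The obstruction is the $d$ coming from $(\bmut^{(r)})^\top\bmut^{(r)}$, not the $+1$'s, and the bias identity provides no leverage against it. As written, the argument is incomplete precisely at the step you flag.

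The missing idea (and this is how the Frei et al.\ proof, hence the paper's proof, actually goes) uses the meaning of $\phi'_{i,j}$ rather than any algebraic cancellation. Since $\phi'_{i,j}$ is a ReLU subgradient, $\phi'_{i,j}>0$ forces $\bw_j^\top\bxt_i+b_j\geq 0$. One then argues by cases. If some $i_0\in I^{(r)}$ has $\phi'_{i_0,j}>0$, write
\[
\bw_j^\top\bxt+b_j=(\bw_j^\top\bxt_{i_0}+b_j)+\bw_j^\top(\bxit-\bxit_{i_0})
\geq 0-|v_j|\sum_{l}\lambda_l\phi'_{l,j}\cdot 2\Delta',
\]
using $|\bxt_l^\top\bxit|,|\bxt_l^\top\bxit_{i_0}|\leq\Delta'$ from Lemma~\ref{lem:properties}(\ref{c2}). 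If instead $\phi'_{i,j}=0$ for every $i\in I^{(r)}$, the entire same-cluster bucket vanishes in the KKT sum, and only the cross-cluster terms remain, giving the bound $-(p+\Delta'+1)|v_j|\sum_l\lambda_l\phi'_{l,j}$. Either way the constant is at most $2\Delta'+p+1$. This anchoring-at-an-active-training-point argument is what replaces your hoped-for cancellation.
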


\begin{lemma} \label{lem:neuron input changes fast}
	Suppose $\cs$ satisfies assumptions \ref{ass:dist}. \\ 
 Let $\bu_{\perp} = \sum_{q\in [k]}y^{(q)}\bP\bzeta^{(q)}$, \, $\bu_{\shortparallel} = \sum_{q\in [k]}y^{(q)}(\bmut^{(q)}+\bP_{o}\bzeta^{(q)})$. Then $\bu_{\perp},\bu_{\shortparallel}$ satisfies the following with probability at least $1-\delta_1,1-\delta_2$ respectively.\\
 For every $j \in J_+$ we have :
 \begin{align*}
     \bw_j^\top \bu_{\perp} \geq \sum_{i \in [n]} \lambda_i v_j \phi'_{i,j}  c_6(g,\tau)>0; \,
     \bw_j^\top \bu_{\shortparallel} \geq \frac{9}{10}\sum_{i \in [n]} \lambda_i v_j \phi'_{i,j} c_4(d,\tau,k,p)>0 , 
 \end{align*}

	For every $j \in J_-$ we have :
	\begin{align*}
	     \bw_j^\top \bu_{\perp} \leq \sum_{i \in [n]} \lambda_i v_j \phi'_{i,j}  c_6(g,\tau) <0;\,
     \bw_j^\top \bu_{\shortparallel} \leq \frac{9}{10}\sum_{i \in [n]} \lambda_i v_j \phi'_{i,j}  c_4(d,\tau,k,p)<0, 
	\end{align*}
Where, $c_4(d,\tau,k,p)=d - \Delta'+\frac{9}{10}\cdot\left(\frac{d\tau^2}{2}-\sqrt{\frac{13d\tau^2}{8}}(\sigma\sqrt{2}\ln d+1)\right)- k(p + \Delta') $ and $c_6(g,\tau)=\frac{9}{10}\cdot(\frac{g\tau^2}{2}-\sqrt{\frac{13g\tau^2}{8}})$~.
$\delta_1,\delta_2$ are defined in \lemref{lem: w.h.p shifts nearly orthogonal in off manifold space} and \lemref{lem: w.h.p shifts nearly orthogonal in  manifold space} respectively.
\end{lemma}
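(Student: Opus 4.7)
The strategy is to leverage the KKT representation (\ref{eqn: main kkt}) to write
\begin{equation*}
\bw_j^\top \bu \;=\; v_j\sum_{i\in[n]}\lambda_i\phi'_{i,j}\,y_i\bxt_i^\top\bu.
\end{equation*}
Since $\lambda_i,\phi'_{i,j}\geq 0$, it suffices to establish a uniform-in-$i$ lower bound $y_i\bxt_i^\top\bu\geq c>0$: the $j\in J_+$ case then follows with $v_j>0$, and the $j\in J_-$ case with $v_j<0$ (flipping the inequality but not the magnitude). Everything therefore reduces to lower-bounding $y_i\bxt_i^\top\bu_{\perp}$ and $y_i\bxt_i^\top\bu_{\shortparallel}$ on the high-probability events supplied by Lemmas \ref{lem: w.h.p shifts nearly orthogonal in off manifold space} and \ref{lem: w.h.p shifts nearly orthogonal in  manifold space}.

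\textbf{Off-manifold.} Expanding $\bxt_i=\bM\bx_i+\bzeta^{(r_i)}+\bomega_i$ and using $\bP\bM=0$ eliminates the intrinsic signal, leaving
\begin{equation*}
y_i\bxt_i^\top\bu_{\perp} = \bzeta^{(r_i)\top}\bP\bzeta^{(r_i)} + y^{(r_i)}\!\!\sum_{q\neq r_i}\!y^{(q)}\inner{\bzeta^{(r_i)},\bzeta^{(q)}}_{\bP} + y^{(r_i)}\sum_{q}y^{(q)}\bomega_i^\top\bP\bzeta^{(q)}.
\end{equation*}
The diagonal is at least $g\tau^2/2$; each of the $k-1$ shift--shift cross terms is at most $\max_{p\neq q}|\inner{\bzeta^{(p)},\bzeta^{(q)}}_{\bP}|$; each of the $k$ noise--shift terms is at most $\bar\omega\snorm{\bP\bzeta^{(q)}}=\epsilon_{\zeta}^{\perp}$ by Cauchy--Schwarz. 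The key event $k(\max+\epsilon_{\zeta}^{\perp})\leq\tfrac{1}{10}(\bzeta^{(r_i)\top}\bP\bzeta^{(r_i)}-\epsilon_{\zeta}^{\perp})$ from Lemma \ref{lem: w.h.p shifts nearly orthogonal in off manifold space} forces the cross terms to consume at most a tenth of the diagonal, and a short rearrangement gives $y_i\bxt_i^\top\bu_{\perp}\geq\tfrac{9}{10}(g\tau^2/2-\sqrt{13g\tau^2/8})=c_6(g,\tau)>0$ uniformly in $i$.

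\textbf{On-manifold.} For $\bu_{\shortparallel}=\sum_q y^{(q)}(\bmut^{(q)}+\bP_o\bzeta^{(q)})$ the identity $\bP_o\bM=\bM$ means the signal survives. I split $y_i\bxt_i^\top\bu_{\shortparallel}$ into four groups: (i) pure mean--mean $y^{(r_i)}y^{(q)}\bx_i^\top\bmu^{(q)}$, where the $q=r_i$ diagonal is at least $d-\Delta$ and each of the $k-1$ off-diagonals is bounded by $p+\Delta$ via near-orthogonality (A2); (ii) pure shift--shift $y^{(r_i)}y^{(q)}\inner{\bzeta^{(r_i)},\bzeta^{(q)}}_{\bP_o}$, handled identically to the off-manifold case after swapping $\bP\leftrightarrow\bP_o$ and using Lemma \ref{lem: w.h.p shifts nearly orthogonal in  manifold space}; (iii) mean--shift cross terms, whose dominant contribution is exactly $V_{r_i}=\sum_q y^{(r_i)}y^{(q)}\inner{\bmut^{(r_i)},\bzeta^{(q)}}_{\bP_o}$ together with its symmetric counterpart $\sum_q y^{(r_i)}y^{(q)}\inner{\bzeta^{(r_i)},\bmut^{(q)}}_{\bP_o}$, both Gaussian of the same order and bounded by $\tfrac{1}{10}c_4$ on the event of Lemma \ref{lem:shift/mean interaction is small} (the symmetric piece requires adding one extra union bound of the same form); (iv) residual cross terms $(\bM\bxi_i)^\top\bzeta^{(q)}$, $\bomega_i^\top\bP_o\bzeta^{(q)}$, and $\bomega_i^\top\bmut^{(q)}$, all routed through $(\sqrt{2}\ln d+\bar\omega)\snorm{\bP_o\bzeta^{(q)}}=\epsilon_{\zeta}^{\shortparallel}$ via Cauchy--Schwarz, and absorbed into the tenth-slack provided by Lemma \ref{lem: w.h.p shifts nearly orthogonal in  manifold space}. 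Assembling and repeating the same algebraic manipulation as off-manifold yields $y_i\bxt_i^\top\bu_{\shortparallel}\geq\tfrac{9}{10}c_4(d,\tau,k,p)$.

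\textbf{Wrap-up and main obstacle.} Multiplying each uniform pointwise bound by $v_j\lambda_i\phi'_{i,j}$ and summing over $i$ produces the stated inequalities, with the sign flipping for $j\in J_-$ because $v_j<0$. Positivity of $c_4,c_6$ is precisely Assumption (A4), and the failure probabilities $\delta_1,\delta_2$ are inherited verbatim from the two concentration lemmas. The principal obstacle will be the bookkeeping in group (iii): because $\bx_i=\bmu^{(r_i)}+\bxi_i$, the mean--shift interaction splits into the clean Gaussian piece $V_{r_i}$ (tight and handled by Lemma \ref{lem:shift/mean interaction is small}) and a deviation piece $\bxi_i^\top\bM^\top\bzeta^{(q)}$ which is not Gaussian in a useful way and must be routed through the $\epsilon_{\zeta}^{\shortparallel}$ channel; making the constants align so that every residual fits into the $\tfrac{1}{10}$ budget of the on-manifold concentration event is exactly what Assumption (A4)'s requirement $c_5>0$ is designed to guarantee.
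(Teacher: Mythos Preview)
Your overall strategy is exactly the paper's: use the KKT representation to reduce everything to a uniform lower bound on $y_i\bxt_i^\top\bu$, then sum. Your off-manifold argument matches the paper essentially verbatim.

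For the on-manifold case, however, your bookkeeping in groups (iii) and (iv) diverges from the paper in a way that would not reproduce the stated constant $\tfrac{9}{10}c_4$ or the stated failure probability $\delta_2$. The paper does \emph{not} treat the ``symmetric counterpart'' $\sum_q y^{(r_i)}y^{(q)}\inner{\bzeta^{(r_i)},\bmut^{(q)}}_{\bP_o}$ by a separate concentration argument, nor does it route $\bomega_i^\top\bmut^{(q)}$ through $\epsilon_{\zeta}^{\shortparallel}$. Instead, both of these pieces sit inside $\bxit_i^\top\bmut^{(r)}=(\bM\bxi_i+\bzeta^{(r_i)}+\bomega_i)^\top\bmut^{(r)}$, which the paper bounds in one stroke by $\Delta'$ via Cauchy--Schwarz (note that $\Delta'=\Delta+\epsilon_x$ and $\epsilon_x$ contains exactly the $\bar\zeta\sqrt{d}$ and $\bar\omega\sqrt{d}$ contributions). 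This is precisely why $c_4$ is written with $\Delta'$ rather than the $\Delta$ you use in group (i): the extra $\epsilon_x$ is the budget that pays for the shift--mean and noise--mean interactions. With that single substitution, the paper gets $d-\Delta'+\tfrac{9}{10}(\cdot)-k(p+\Delta')=c_4$ and then subtracts only \emph{one} $\tfrac{1}{10}c_4$ for $V_q$, landing on $\tfrac{9}{10}c_4$.

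Your proposed route---bounding the symmetric piece by a second Gaussian tail---would indeed work in principle, but it costs you an extra $\tfrac{1}{10}c_4$ (giving $\tfrac{8}{10}c_4$) and an extra event in the union bound, so $\delta_2$ would no longer be the quantity defined in \lemref{lem: w.h.p shifts nearly orthogonal in  manifold space}. To match the lemma as stated, you should absorb the symmetric mean--shift term and the $\bomega_i^\top\bmut^{(q)}$ term into $\Delta'$ (i.e., bound them inside $|\bxit_i^\top\bmut^{(r)}|\le\Delta'$), and use $\Delta'$ rather than $\Delta$ throughout group (i).
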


\begin{proof}
Note that from \lemref{lem:rate for c4 and c6} the constants $c_4(d,\tau,k,p), c_6(g,\tau)$ are positive. 
\subsubsection*{On manifold perturbation}
For $\bw_j^\top \bu_{\shortparallel}$ we have:
	\begin{align} \label{eq:uz bound}
		\bw_j^\top &\sum_{r \in [k]} y^{(r)} (\bmut^{(r)}+\bP_{o}\bzeta^{(r)})
		= \sum_{i \in [n]} \lambda_i y_i v_j \phi'_{i,j} \bxt_i^\top  \sum_{r \in [k]} y^{(r)} (\bmut^{(r)}+\bP_o\bzeta^{(r)}) \nonumber
		\\
		&= \sum_{q \in [k]} \sum_{i \in I^{(q)}} \lambda_i y^{(q)} v_j \phi'_{i,j}  \bxt_i^\top  \left(  y^{(q)} (\bmut^{(q)}+\bP_o\bzeta^{(q)}) + \sum_{r \in [k] \setminus \{q\}} y^{(r)} (\bmut^{(r)}+\bP_o\bzeta^{(r)}) \right) \nonumber
		\\
		&= \sum_{q \in [k]} \sum_{i \in I^{(q)}} \lambda_i v_j \phi'_{i,j}  \left(  (y^{(q)})^2 \bxt_i^\top (\bmut^{(q)}+\bP_o\bzeta^{(q)}) + \sum_{r \in [k] \setminus \{q\}} y^{(q)} y^{(r)} \bxt_i^\top (\bmut^{(r)}+\bP_o\bzeta^{(r)}) \right) \nonumber
		\\
		&= \sum_{q \in [k]} \sum_{i \in I^{(q)}} \lambda_i v_j \phi'_{i,j}  [  (\bmut^{(q)})^\top \bmut^{(q)} + \bxit_i^\top \bmu^{(q)}+(\bmut^{(q)})^\top\bP_o\bzeta^{(q)} + (\bM\bxi_i+\bomega_i)^\top\bP_o\bzeta^{(q)}+\snorm{\bP_o\bzeta^{(q)}}^2 \nonumber
  \\ &\sum_{r \in [k] \setminus \{q\}} y^{(q)} y^{(r)}\left(  (\bmut^{(q)})^\top \bmut^{(r)} + \bxit_i^\top \bmut^{(r)} + \inner{\bmut^{(q)},\bzeta^{(r)}}_{\bP_o}+\inner{\bzeta^{(q)},\bzeta^{(r)}}_{\bP_o} +(\bM\bxi_i+\bomega_i)^\top\bP_o\bzeta^{(r)} \right) ]~. 
	\end{align}
	
	For $j \in J_+$ the above is at least
	\begin{align*}
	    &\sum_{q \in [k]} \sum_{i \in I^{(q)}} \lambda_i v_j \phi'_{i,j}  \left( V_q+d - \Delta'-\epsilon^{\shortparallel}_{\zeta}+\snorm{\bP_o\bzeta^{(q)}}^2 - \sum_{r \in [k] \setminus \{q\}} (p + \Delta'+\inner{\bzeta^{(q)},\bzeta^{(r)}}_{\bP_o}+\epsilon^{\shortparallel}_{\zeta}) \right)\\
     &\geq \sum_{q \in [k]} \sum_{i \in I^{(q)}} \lambda_i v_j \phi'_{i,j}  \left( V_q+d - \Delta'-\epsilon^{\shortparallel}_{\zeta}+\snorm{\bP_o\bzeta^{(q)}}^2 -  k(p + \Delta'+\max_{i\neq j}\inner{\bzeta^{(i)},\bzeta^{(j)}}_{\bP_o}+\epsilon^{\shortparallel}_{\zeta}) \right)\\
     &\overset{(i)}\geq \sum_{q \in [k]} \sum_{i \in I^{(q)}} \lambda_i v_j \phi'_{i,j}  \left( V_q+d - \Delta'+\frac{9}{10}\cdot(\snorm{\bP_o\bzeta^{(q)}}^2 - \epsilon^{\shortparallel}_{\zeta})- k(p + \Delta') \right)\\&\overset{(ii)}\geq \sum_{q \in [k]} \sum_{i \in I^{(q)}} \lambda_i v_j \phi'_{i,j}  \left( V_q+d - \Delta'+\frac{9}{10}\cdot\left(\frac{d\tau^2}{2}-\sqrt{\frac{13d\tau^2}{8}}(\sigma\sqrt{2}\ln d+1)\right)- k(p + \Delta') \right)\\
     &\overset{(iii)}\geq \frac{9}{10}c_4(d,\tau,k,p)\sum_{q \in [k]} \sum_{i \in I^{(q)}} \lambda_i v_j \phi'_{i,j}>0
	\end{align*}
	We use the \lemref{lem: w.h.p shifts nearly orthogonal in  manifold space} for the following:\\ $(i)$ we used that $\max_{i\neq j}\inner{\bzeta^{(i)},\bzeta^{(j)}}_{\bP_o}<\frac{1}{10}\cdot \snorm{\bP_o\bzeta^{(q)}}$ w.h.p.\\
 $(ii)$ we used that $\snorm{\bP_o\bzeta^{(q)}}>\frac{d\tau^2}{2}$ and $\epsilon^{\shortparallel}_{\zeta}\leq \sqrt{\frac{13d\tau^2}{8}}(\sigma\sqrt{2}\ln d+1)$ w.h.p.\\
 $(iii)$ we used that $V_q>-\frac{1}{10}c_4(d,\tau,k,p)$ w.h.p.\\
	Similarly, for $j \in J_-$, \eqref{eq:uz bound} is at most
	\begin{align*}
	    \frac{9}{10}c_4(d,\tau,k,p)\sum_{q \in [k]} \sum_{i \in I^{(q)}} \lambda_i v_j \phi'_{i,j}<0
	\end{align*}
\subsubsection*{Off manifold perturbation}
For $\bw_j^\top \bu_{\perp}$ we have:
	\begin{align} \label{eq:uz bound2}
		\bw_j^\top &\sum_{r \in [k]} y^{(r)} \bP\bzeta^{(r)}
		= \sum_{i \in [n]} \lambda_i y_i v_j \phi'_{i,j} \bxt_i^\top  \sum_{r \in [k]} y^{(r)} \bP\bzeta^{(r)} \nonumber
		\\
		&= \sum_{q \in [k]} \sum_{i \in I^{(q)}} \lambda_i y^{(q)} v_j \phi'_{i,j}  \bxt_i^\top  \left(  y^{(q)} \bP\bzeta^{(q)} + \sum_{r \in [k] \setminus \{q\}} y^{(r)} \bP\bzeta^{(r)} \right) \nonumber
		\\
		&= \sum_{q \in [k]} \sum_{i \in I^{(q)}} \lambda_i v_j \phi'_{i,j}  \left(  (y^{(q)})^2 \bxt_i^\top \bP\bzeta^{(q)} + \sum_{r \in [k] \setminus \{q\}} y^{(q)} y^{(r)} \bxt_i^\top \bP\bzeta^{(r)} \right) \nonumber
		\\
		&= \sum_{q \in [k]} \sum_{i \in I^{(q)}} \lambda_i v_j \phi'_{i,j}  \left[ \bomega_i^\top\bP\bzeta^{(q)}+\snorm{\bP\bzeta^{(q)}}^2+ \sum_{r \in [k] \setminus \{q\}} y^{(q)} y^{(r)}\left(  \inner{\bzeta^{(q)},\bzeta^{(r)}}_{\bP} +\bomega_i^\top\bP\bzeta^{(r)} \right) \right]~. 
	\end{align}
	For $j \in J_+$ the above is at least
	\begin{align*}
	    &\sum_{q \in [k]} \sum_{i \in I^{(q)}} \lambda_i v_j \phi'_{i,j}  \left( \snorm{\bP\bzeta^{(q)}}^2 -\epsilon^{\perp}_{\zeta}- \sum_{r \in [k] \setminus \{q\}} (\inner{\bzeta^{(q)},\bzeta^{(r)}}_{\bP}+\epsilon^{\perp}_{\zeta}) \right)\\
     &\geq \sum_{q \in [k]} \sum_{i \in I^{(q)}} \lambda_i v_j \phi'_{i,j}  \left( \snorm{\bP\bzeta^{(q)}}^2 -\epsilon^{\shortparallel}_{\zeta}-  k(\max_{i\neq j}\inner{\bzeta^{(i)},\bzeta^{(j)}}_{\bP}+\epsilon^{\perp}_{\zeta}) \right)\\
     &\overset{(iv)}\geq \sum_{q \in [k]} \sum_{i \in I^{(q)}} \lambda_i v_j \phi'_{i,j}  \left( \frac{9}{10}\cdot(\snorm{\bP\bzeta^{(q)}}^2 - \epsilon^{\perp}_{\zeta})\right)\\
     &\overset{(v)}\geq \sum_{q \in [k]} \sum_{i \in I^{(q)}} \lambda_i v_j \phi'_{i,j}  \left(\frac{9}{10}\cdot\left(\frac{g\tau^2}{2}-\sqrt{\frac{13g\tau^2}{8}}\right) \right)=\sum_{q \in [k]} \sum_{i \in I^{(q)}} \lambda_i v_j \phi'_{i,j} c_6(g,\tau)>0
	\end{align*}
 We use the \lemref{lem: w.h.p shifts nearly orthogonal in off manifold space} for the following:\\ $(iv)$ we used that $\max_{i\neq j}\inner{\bzeta^{(i)},\bzeta^{(j)}}_{\bP}<\frac{1}{10}\cdot \snorm{\bP\bzeta^{(q)}}$ w.h.p.\\
 $(v)$ we used that $\snorm{\bP\bzeta^{(q)}}>\frac{g\tau^2}{2}$ and $\epsilon^{\perp}_{\zeta}\leq \sqrt{\frac{13g\tau^2}{8}}$ w.h.p.\\
 	Similarly, for $j \in J_-$, \eqref{eq:uz bound2} is at most
	\begin{align*}
	    c_6(g,\tau)\sum_{q \in [k]} \sum_{i \in I^{(q)}} \lambda_i v_j \phi'_{i,j}<0
	\end{align*}

\end{proof}
\begin{lemma}
\label{lem:rate for c4 and c6}
    $$\frac{18}{21}\cdot \frac{g\tau^2}{2}\leq c_6(g,\tau)\leq \frac{9}{10}\cdot \frac{g\tau^2}{2}$$
    and, 
    $$\frac{18}{21}\cdot d(1+\frac{\tau^2}{2})\leq c_4(d,\tau,k,p)\leq d(1+\frac{9\tau^2}{20})$$
\end{lemma}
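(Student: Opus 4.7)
The plan is to reduce both inequalities to direct algebraic manipulations of the definitions, invoking Assumption \ref{ass:dist}---specifically conditions (A2) and (A4)---together with the bound $\Delta'\leq d/21$ from Lemma \ref{lem:bound Delta} and the fact that in a binary classification with both positive and negative clusters one has $k\geq 2$.

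For $c_6(g,\tau)$, the upper bound is immediate: drop the nonnegative subtracted term $\sqrt{13g\tau^2/8}$. For the lower bound, I would rearrange the $c_3>0$ clause of (A4) to the equivalent form
\[
\sqrt{\tfrac{13g\tau^2}{8}}\;\leq\;\frac{g\tau^2}{20k}\cdot\frac{10k}{10k+1}\;=\;\frac{g\tau^2}{2(10k+1)},
\]
substitute into the definition of $c_6$ to obtain $c_6\geq \tfrac{9}{10}\cdot\tfrac{g\tau^2}{2}\cdot\tfrac{10k}{10k+1}$, and specialize to $k\geq 2$ to extract the factor $\tfrac{9}{10}\cdot\tfrac{20}{21}=\tfrac{18}{21}$.

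For $c_4(d,\tau,k,p)$, the upper bound again follows by discarding the three nonpositive contributions $-\Delta'$, $-k(p+\Delta')$, and the subtracted noise expression. For the lower bound, I would split $c_4$ into a ``dimension part'' $d-\Delta'-k(p+\Delta')$ and a ``noise part'' $\tfrac{9}{10}\bigl(\tfrac{d\tau^2}{2}-\sqrt{13d\tau^2/8}\,(\sqrt{2\ln d+1})\bigr)$, and show that each is at least $\tfrac{18}{21}d$ and $\tfrac{18}{21}\cdot\tfrac{d\tau^2}{2}$ respectively. The noise part mirrors the $c_6$ argument using the $c_5>0$ branch of (A4).

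The main obstacle is the dimension part. Here I would rewrite $k(p+\Delta')=k(p+\Delta'+1)-k$, apply (A2) in the form $k(p+\Delta'+1)<(d-\Delta'+1)/10$ to obtain
\[
d-\Delta'-k(p+\Delta')\;\geq\;\tfrac{9}{10}(d-\Delta')-\tfrac{1}{10}+k,
\]
and then invoke Lemma \ref{lem:bound Delta}'s estimate $d-\Delta'\geq 20d/21$ to peel off $\tfrac{9}{10}\cdot\tfrac{20d}{21}=\tfrac{18d}{21}$, with the leftover constant $k-\tfrac{1}{10}$ positive by $k\geq 2$. Adding the two lower bounds yields $c_4\geq \tfrac{18}{21}d(1+\tau^2/2)$. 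The only delicate aspect is constant tracking: the definitions of $c_3$ and $c_5$ are tuned precisely so that both passes deliver the same factor $\tfrac{18}{21}$.
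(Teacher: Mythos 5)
Your proposal is correct and takes essentially the same route as the paper: both bound the subtracted square-root terms by rearranging the $c_3>0$ and $c_5>0$ clauses of (A4), specialize $k\geq 2$ to extract the factor $\tfrac{18}{21}$, and control the $d-\Delta'-k(p+\Delta')$ part of $c_4$ via (A2) together with Lemma~\ref{lem:bound Delta}. Your bookkeeping of the $+k$ term from $k(p+\Delta')=k(p+\Delta'+1)-k$ is in fact slightly cleaner than the paper's (which has a stray sign in the $\pm\tfrac{1}{10}$ at step (ii) but reaches the same conclusion because $\Delta'\leq(d-19)/21$).
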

\begin{proof}
    From assumption \ref{ass:dist}, $c_3(g,\tau,k)>0$:\\
        $\frac{g\tau^2}{20k}-\frac{10k+1}{10k}\sqrt{\frac{13g\tau^2}{8}}>0$ but $k\geq 2$ implying
        $\frac{g\tau^2}{42}\geq\sqrt{\frac{13g\tau^2}{8}}$.\\
        $$\frac{9}{10}\cdot\frac{g\tau^2}{2}\geq c_6(g,\tau)=\frac{9}{10}\cdot\left(\frac{g\tau^2}{2}-\sqrt{\frac{13g\tau^2}{8}}\right)\geq\frac{18}{21}\cdot \frac{g\tau^2}{2} $$
From assumption \ref{ass:dist}, $c_5(d,\tau,k)>0$:\\
$\frac{d\tau^2}{20k}-\frac{10k+1}{10k}\sqrt{\frac{13d\tau^2}{8}}(\sigma\sqrt{2\ln d +1})>0$, and $k\geq 2$ implying 
\begin{equation}
    \frac{d\tau^2}{42}\geq \sqrt{\frac{13d\tau^2}{8}}(\sigma\sqrt{2\ln d +1})
    \label{eq:bounded by d tau^2}
\end{equation}
    \begin{align*}
        c_4(d,\tau,k,p)&= d - \Delta'+\frac{9}{10}\cdot\left(\frac{d\tau^2}{2}-\sqrt{\frac{13d\tau^2}{8}}(\sigma\sqrt{2}\ln d+1)\right)- k(p + \Delta')\\
        &\overset{(i)}\geq d - \Delta'+\frac{18}{21}\cdot\frac{d\tau^2}{2}- k(p + \Delta'+1)\\
        &\overset{(ii)}\geq \frac{9}{10}\cdot(d-\Delta')+\frac{1}{10}+\frac{18}{21}\cdot\frac{d\tau^2}{2}\\
        &\overset{(iii)}\geq \frac{18}{21}\cdot d+\frac{18}{21}\cdot\frac{d\tau^2}{2}=\frac{18}{21}\cdot d(1+\frac{\tau^2}{2})
    \end{align*}
    For $(i)$ use \eqref{eq:bounded by d tau^2}; for $(ii)$ use \ref{ass:dist} $\frac{k(p+\Delta'+1)}{d-\Delta'+1}\leq \frac{1}{10}$; for $(iii)$ use \lemref{lem:bound Delta}.
    The upper-bound for $c_4(d,\tau,k,p)$ is straight forward and can be obtained by omitting the negative terms.
\end{proof}
\begin{lemma} \label{lem:neurons input become positive}
	Suppose $\cs$ satisfies assumption \ref{ass:dist}.  
 Let $\bxt \in \reals^D$ and $r \in [k]$ such that $\bxt=\bmut^{(r)}+\bxit$ is a 'nice example'. 
 Let $\bz_{\perp} = \eta^{\perp} \sum_{q\in [k]}y^{(q)}\bP\bzeta^{(q)}, \, \bz_{\shortparallel} =\eta^{\shortparallel} \sum_{q\in [k]}y^{(q)}\bmut^{(q)}$,  for $\eta^{\perp} \geq  \frac{ 2 \Delta' + p + 1}{c_6(g,\tau)}, \, \eta^{\shortparallel} \geq  \frac{10}{9}\cdot\frac{ 2 \Delta' + p + 1}{c_4(d,\tau,k,p)}$. Then, $\bz_{\perp},\bz_{\shortparallel}$ satisfies the following with probability at least $1-\delta_1,1-\delta_2$ respectively.\\
	 For all $j \in J_-$ we have:
 \begin{align*}
     \bw_j^\top (\bxt - \bz_{\perp}) + b_j\geq 0; \,
     \bw_j^\top (\bxt - \bz_{\shortparallel}) + b_j \geq 0
 \end{align*}
 and for all $j \in J_+$ : \begin{align*}
     \bw_j^\top (\bxt + \bz_{\perp}) + b_j\geq 0;\,
     \bw_j^\top (\bxt + \bz_{\shortparallel}) + b_j \geq 0,
 \end{align*}
$\delta_1,\delta_2$ are defined in \lemref{lem: w.h.p shifts nearly orthogonal in off manifold space} and \lemref{lem: w.h.p shifts nearly orthogonal in  manifold space} respectively.
\end{lemma}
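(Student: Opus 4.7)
The plan is to combine the two main preparatory lemmas already established: Lemma \ref{lem:neuron input not small}, which says that each neuron's clean pre-activation cannot be too negative, and Lemma \ref{lem:neuron input changes fast}, which says that the perturbation directions $\bu_{\perp}$ and $\bu_{\shortparallel}$ move every neuron's pre-activation by a definite amount in a direction determined by the sign of $v_j$. Adding these two estimates with suitably chosen scales $\eta^{\perp}, \eta^{\shortparallel}$ forces the pre-activation to cross zero.

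Concretely, I would fix $j \in J_+$ and write
\begin{equation*}
\bw_j^\top(\bxt + \bz_{\perp}) + b_j \;=\; \bigl[\bw_j^\top \bxt + b_j\bigr] + \eta^{\perp}\,\bw_j^\top \bu_{\perp}.
\end{equation*}
By Lemma \ref{lem:neuron input not small} the first bracket is at least $-\sum_i \lambda_i |v_j|\phi'_{i,j}(2\Delta'+p+1)$, and by Lemma \ref{lem:neuron input changes fast} the second summand is at least $\eta^{\perp} v_j \sum_i \lambda_i \phi'_{i,j}\, c_6(g,\tau)$. Since $v_j>0$ on $J_+$, we can pull out $v_j=|v_j|$, leaving
\begin{equation*}
\bw_j^\top(\bxt + \bz_{\perp}) + b_j \;\geq\; |v_j|\sum_{i}\lambda_i \phi'_{i,j}\bigl[\eta^{\perp}\, c_6(g,\tau) - (2\Delta'+p+1)\bigr],
\end{equation*}
which is non-negative once $\eta^{\perp} \geq (2\Delta'+p+1)/c_6(g,\tau)$. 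The case $j \in J_-$ is symmetric: replacing $\bz_{\perp}$ by $-\bz_{\perp}$ in the expansion flips the sign of $\eta^{\perp}\bw_j^\top \bu_{\perp}$, and since $v_j<0$ on $J_-$ the lower bound from Lemma \ref{lem:neuron input changes fast} becomes an upper bound with the same magnitude, producing the same lower-bound expression as above with $|v_j|=-v_j$.

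The argument for $\bz_{\shortparallel}$ is structurally identical, but uses the bound $\bw_j^\top \bu_{\shortparallel} \geq \tfrac{9}{10}\sum_i \lambda_i v_j \phi'_{i,j}\,c_4(d,\tau,k,p)$ in place of the $c_6$ bound, which explains the extra $10/9$ in the lower threshold for $\eta^{\shortparallel}$. The probability statements $1-\delta_1$ and $1-\delta_2$ simply propagate from the high-probability events of Lemmas \ref{lem: w.h.p shifts nearly orthogonal in off manifold space} and \ref{lem: w.h.p shifts nearly orthogonal in  manifold space} that underlie Lemma \ref{lem:neuron input changes fast}; Lemma \ref{lem:neuron input not small} holds on an event of probability $1-\exp(-D/16)$ which is absorbed into the same bounds.

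The only place requiring a bit of care is the sign-bookkeeping: one must track that the common factor $\sum_i \lambda_i \phi'_{i,j}$ appearing in both lemmas is non-negative (since $\lambda_i\geq 0$ by KKT and $\phi'_{i,j}\in[0,1]$), so the net sign of the combined expression is governed entirely by the bracket $\eta\cdot c - (2\Delta'+p+1)$. Once this is pinned down, the two threshold choices in the statement are exactly what makes those brackets non-negative, and no further computation is needed. I do not anticipate any significant obstacle beyond this sign accounting, since the heavy lifting has already been done in the preparatory lemmas.
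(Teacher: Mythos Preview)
Your proposal is correct and follows essentially the same route as the paper's proof: combine the lower bound on the clean pre-activation from Lemma~\ref{lem:neuron input not small} with the sign-consistent displacement bound from Lemma~\ref{lem:neuron input changes fast}, factor out $|v_j|\sum_i\lambda_i\phi'_{i,j}\ge 0$, and read off the threshold on $\eta$. The only cosmetic difference is that the paper treats $j\in J_-$ first (writing $-|v_j|=v_j$ there) while you start with $J_+$; the sign bookkeeping and the absorption of the $1-\exp(-D/16)$ event into $\delta_1,\delta_2$ are handled exactly as you describe.
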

\begin{proof}
	Let $j \in J_-$. By \lemref{lem:neuron input not small} w.h.p , we have 
	\[
		\bw_j^\top \bxt + b_j \geq 
		\sum_{i \in I} \lambda_i  v_j \phi'_{i,j} (2 \Delta'+ p + 1)~.
	\] 
	By \lemref{lem:neuron input changes fast}, we have
	\[
		-\bw_j^\top \bz_{\perp} \geq -\eta^{\perp}\sum_{i \in [n]} \lambda_i v_j \phi'_{i,j}  c_6(g,\tau) ;\quad
     -\bw_j^\top \bz_{\shortparallel} \geq -\eta^{\shortparallel}\frac{9}{10}\sum_{i \in [n]} \lambda_i v_j \phi'_{i,j}  c_4(d,\tau,k,p)
	\]
	Combining the last three displayed equations, we get
	\begin{align*}
		\bw_j^\top (\bxt - \bz_{\perp}) + b_j
		&= \bw_j^\top \bxt + b_j - \bw_j^\top \bz_{\perp}
		\\
		&\geq \sum_{i \in [n]} \lambda_i  v_j \phi'_{i,j} (2 \Delta' + p + 1) - \eta^{\perp}\sum_{i \in [n]} \lambda_i v_j \phi'_{i,j}  c_6(g,\tau)
		\\
		&= \sum_{i \in [n]} \lambda_i  v_j \phi'_{i,j} \left( 2 \Delta' +p + 1 - c_6(g,\tau)\eta^{\perp} \right)~. 
	\end{align*}
 Similarly for $\bz^{\shortparallel}$:
 \begin{align*}
		\bw_j^\top (\bxt - \bz_{\shortparallel}) + b_j
		\geq \sum_{i \in [n]} \lambda_i  v_j \phi'_{i,j} \left( 2 \Delta' +p + 1 - \frac{9}{10}c_4(d,\tau,k,p)\eta^{\shortparallel} \right)~. 
	\end{align*}
	Note that by \lemref{lem:rate for c4 and c6} we have $c_6(g,\tau),\, c_4(d,\tau,k,p) > 0$. Hence, for $\eta^{\perp} \geq  \frac{ 2 \Delta' + p + 1}{c_6(g,\tau)}, \, \eta^{\shortparallel} \geq  \frac{10}{9}\cdot\frac{ 2 \Delta' + p + 1}{c_4(d,\tau,k,p)}$ we have $\bw_j^\top (\bxt - \bz^{\perp}) + b_j, \, \bw_j^\top (\bxt - \bz^{\shortparallel}) + b_j\geq 0$ respectively.
	
	The proof for $j \in J_+$ is similar. Namely, by Lemmas~\ref{lem:neuron input not small} and~\ref{lem:neuron input changes fast}, we have
	\begin{align*}
		\bw_j^\top (\bxt + \bz_{\perp}) + b_j
		&= \bw_j^\top \bxt + b_j + \bw_j^\top \bz_{\perp}
		\\
		&-\geq \sum_{i \in [n]} \lambda_i  v_j \phi'_{i,j} (2 \Delta' + p + 1) +\eta^{\perp}\sum_{i \in [n]} \lambda_i v_j \phi'_{i,j}  c_6(g,\tau)
		\\
		&= \sum_{i \in [n]} \lambda_i  v_j \phi'_{i,j} \left( -2 \Delta' -p - 1 + c_6(g,\tau)\eta^{\perp} \right)~. 
	\end{align*}
	and hence for $\eta^{\perp} \geq  \frac{ 2 \Delta' + p + 1}{c_6(g,\tau)}$ we get $\bw_j^\top (\bxt + \bz^{\perp}) + b_j \geq 0$. Finally using identical arguments, for $\eta^{\shortparallel} \geq  \frac{10}{9}\cdot\frac{ 2 \Delta' + p + 1}{c_4(d,\tau,k,p)}$ we get $\bw_j^\top (\bxt + \bz^{\shortparallel}) + b_j \geq 0$
\end{proof}

\begin{proof}[Proof of Theorem \ref{thm:pert flips}]
\label{proof: main2}
We showcase the explicit steps for off-manifold  $\bz_{\perp}=(\eta_1^{\perp}+\eta_2^{\perp})\bu_{\perp}$, identical steps can be implemented to get the result for on-manifold $\bz_{\shortparallel}=(\eta_1^{\shortparallel}+\eta_2^{\shortparallel})\bu_{\shortparallel}$ and total $ \bz=(\eta_1+\eta_2)\bu$ attacks.\\
    We assume $r\in Q_+$, the case where $r\in Q_-$ can be proved similarly.
	We denote $\bxt' = \bxt - \bz_{\perp}$.
	By \lemref{lem:neuron input changes fast}, for every $j \in J_+$ we have 
	\begin{align*}
		\bw_j^\top  \bxt' + b_j 
		&= \bw_j^\top \bxt + b_j - \bw_j^\top   (\eta_1^{\perp} + \eta_2^{\perp})\bu_{\perp}
		\\
		&\leq \bw_j^\top \bxt + b_j  - (\eta_1^{\perp} + \eta_2^{\perp})   \sum_{i \in [n]} \lambda_i v_j \phi'_{i,j} c_6(g,\tau) ~.
	\end{align*}
	\begin{equation} \label{eq:value does not increase1}
	    \bw_j^\top  \bxt' + b_j 
	    \leq \bw_j^\top \bxt + b_j. 
	\end{equation}
	Thus, in the neurons $J_+$ the input does not increase when moving from $\bxt$ to $\bxt'$.
	
	Consider now $j \in J_-$. Let $\bxt_* = \bx - \eta_1^{\perp} \bu_{\perp}$. Setting $\eta_1^{\perp}=\frac{ 2 \Delta' + p + 1}{c_6(g,\tau)}$ and using \lemref{lem:neurons input become positive}, we have 
	$\bw_j^\top \bxt_* + b_j \geq 0$. Also, by \lemref{lem:neuron input changes fast}, we have 
	\begin{align*}
		\bw_j^\top  \bxt_* + b_j 
		&= \bw_j^\top \bxt + b_j - \bw_j^\top \eta_1^{\perp} \bu_{\perp}
		\\
		&\geq \bw_j^\top \bxt + b_j  - \eta_1^{\perp} \sum_{i \in [n]} \lambda_i v_j \phi'_{i,j}  c_6(g,\tau)~,
	\end{align*}
	the above is at least $\bw_j^\top \bxt + b_j$. Thus, when moving from $\bxt$ to $\bxt_*$ the input to the neurons $J_-$ can only increase, and at $\bxt_*$ it is 
	non-negative. 
	
	Next, we move from $\bxt_*$ to $\bxt'$. We have
	\begin{align*}
		\bw_j^\top \bxt' + b_j
		= \bw_j^\top \bxt_* + b_j - \eta_2^{\perp} \bw_j^\top   \bu_{\perp}
		\geq \max \left\{0,   \bw_j^\top \bxt + b_j  \right\} - \eta_2^{\perp} \bw_j^\top   \bu_{\perp}.
	\end{align*}
	By \lemref{lem:neuron input changes fast}, the above is at least 
	\begin{equation} \label{eq:value increases a lot1}
		 \max \left\{0, \bw_j^\top \bxt + b_j  \right\} - \eta_2^{\perp} \sum_{i \in [n]} \lambda_i v_j \phi'_{i,j}  c_6(g,\tau)
		 \geq 0~,
	\end{equation}

	Overall, we have 
	\begin{align*}
		\cn_\btheta(\bxt') 
		&= \left[ \sum_{j \in J_+} v_j \phi(\bw_j^\top \bxt' + b_j) \right] + \left[ \sum_{j \in J_-}  v_j \phi(\bw_j^\top \bxt' + b_j) \right]
		\\
		&\overset{(i)} = \left[ \sum_{j \in J_+} v_j \phi(\bw_j^\top \bxt' + b_j) \right] + \left[ \sum_{j \in J_-}  v_j (\bw_j^\top \bxt' + b_j) \right]
		\\
		&\overset{(ii)}  \leq \left[ \sum_{j \in J_+} v_j \phi(\bw_j^\top \bxt + b_j) \right] + \\
		&\;\;\;\;\; \left[ \sum_{j \in J_-}  v_j \left(   \max \left\{0,   \bw_j^\top \bxt + b_j  \right\} - \eta_2^{\perp} \sum_{i \in [n]} \lambda_i v_j \phi'_{i,j}  c_6(g,\tau) \right) \right]~,
	\end{align*}
    where in $(i)$ we used \eqref{eq:value increases a lot1}, and in $(ii)$ we used both \eqref{eq:value does not increase1} and \eqref{eq:value increases a lot1}.
    Now, the above equals	
	\begin{align*}
		&\left[ \sum_{j \in [w]} v_j \phi(\bw_j^\top \bxt + b_j) \right] - \left[ \sum_{j \in J_-}  v_j \eta_2^{\perp}  \sum_{i \in [n]} \lambda_i v_j \phi'_{i,j}  c_6(g,\tau)\right]
		\\
		&= \cn_\btheta(\bxt) - \eta_2^{\perp}  c_6(g,\tau)  \left[ \sum_{q' \in [k]} \sum_{i \in I^{(q')}} \sum_{j \in J_-} \lambda_i v_j^2 \phi'_{i,j}  \right]~.
	\end{align*}
	
	Combining the above with \lemref{lem:output not too large}, \lemref{lem:lam lower bound}, and \lemref{lem:neuron input changes fast}, we get 
	\begin{align*}
		\cn_\btheta(\bxt') 
		&\leq 2 - \eta_2^{\perp}    c_6(g,\tau) \left[ \sum_{q' \in Q_-} \sum_{i \in I^{(q')}} \sum_{j \in J_-} \lambda_i v_j^2 \phi'_{i,j}  \right]
		\\
		&\leq 2 - \eta_2^{\perp}    c_6(g,\tau) | Q_- |  \left( 1- \frac{c'}{1-2c'} \right) \frac{1}{3d+\Delta' + 1}
		\\
		&\leq 2 - \eta_2^{\perp}    c_6(g,\tau)  ck  \left( \frac{1-3c'}{1-2c'} \right) \frac{1}{3d+\Delta' + 1}~.
	\end{align*}
	For 
	\[
		\eta_2^{\perp} 
		= \frac{3(3d+\Delta'+1)(1-2c')}{  c_6(g,\tau) \cdot(1-3c')ck} 
	\] 
	we conclude that $\cn_\btheta(\bx')$ is at most $-1$. When $r\in Q_-$ one can get an identical result with a perturbation $+\bz_{\perp}$ instead of $-\bz_{\perp}$.\\
 
 Hence, an off-manifold perturbation $\bz_{\perp}=(\eta_1^{\perp}+\eta_2^{\perp})\bu_{\perp}$ where ,\begin{equation}
    \eta_1^{\perp}=\frac{ 2 \Delta' + p + 1}{c_6(g,\tau)}, \,\eta_2^{\perp}=\frac{3(3d+\Delta'+1)(1-2c')}{  c_6(g,\tau)\cdot (1-3c')ck} 
 \end{equation}  can flip the sign of the neuron accordingly.\\
 If we follow the identical steps as above for  on manifold perturbation $\bz_{\shortparallel}=(\eta_1^{\shortparallel}+\eta_2^{\shortparallel})\bu_{\shortparallel}$, we get that:
 \begin{align}
\eta_1^{\shortparallel}&=\frac{10}{9}\cdot\frac{ 2 \Delta' + p + 1}{c_4(d,\tau,k,p)}, \,\eta_2^{\shortparallel}=\frac{10}{9}\cdot\frac{3(3d+\Delta'+1)(1-2c')}{ c_4(d,\tau,k,p)\cdot (1-3c')ck}
 \end{align}
\subsubsection*{Off manifold perturbation upper bounds}
\begin{align}
\label{eqn: const_perp upper bound}
		(\eta^{\perp}_1+ \eta^{\perp}_2)
		&= \left( \frac{ 2 \Delta' + p + 1}{c_6(g,\tau)} +  \frac{3(3d+\Delta'+1)(1-2c')}{c_6(g,\tau)\cdot(1-3c')ck} \right) \nonumber
		\\
		&\leq \left( \frac{  2 (p + \Delta' + 1)}{c_6(g,\tau)} +  \frac{3(3d+\Delta'+1)(1-2c')}{c_6(g,\tau)\cdot(1-3c')ck} \right)\nonumber
		\\
		&\overset{(i)} = \left( \frac{2 c' (d-\Delta'+1)/k }{\frac{18}{42}\cdot g\tau^2} +  \frac{3(3d+\Delta'+1)(1-2c')}{\frac{18}{42}\cdot g\tau^2\cdot(1-3c')ck} \right)\nonumber
		\\
		&\overset{(ii)} \leq \left(  \frac{1}{k} \cdot \frac{ \frac{2}{10} (d+1) }{\frac{18}{42}\cdot g\tau^2} + \frac{1}{ck} \cdot \frac{3(3d + \frac{d}{21} + 1)}{\frac{18}{42}\cdot g\tau^2(1-\frac{3}{10})} \right)\nonumber
		\\
		&\leq \left( \frac{1}{k} \cdot \co{\left(\frac{d}{g\tau^2}\right) }+ \frac{1}{ck} \cdot  \co{\left(\frac{d}{g\tau^2}\right) } \right)\nonumber
		\\
		&\leq  \co{\left(\frac{d}{ckg\tau^2}\right) }~,
	\end{align}
 where in $(i)$ we used $k = c' \cdot \frac{d - \Delta' + 1}{p+\Delta'+1}$ and $\frac{18}{21}\cdot\frac{g\tau^2}{2}\leq c_6(g,\tau)$ from \lemref{lem:rate for c4 and c6}. In $(ii)$ we used \lemref{lem:bound Delta} and $c'<1/10$.
\begin{align*}
		\norm{\bu_{\perp}}^2=\norm{ \sum_{q \in [k]} y^{(q)} \bP\bzeta^{(q)}}^2
		&= \sum_{r \in [k]} \sum_{q \in [k]} y^{(r)} y^{(q)} \inner{\bzeta^{(r)}, \bzeta^{(q)}}_{\bP}
		\\
		&= \sum_{r \in [k]} \left[\inner{\bzeta^{(r)}, \bzeta^{(r)}}_{\bP} + \sum_{q \neq r} y^{(r)} y^{(q)} \inner{\bzeta^{(r)}, \bzeta^{(q)}}_{\bP} \right]
		\\
		&\overset{(iii)}\leq k\cdot\frac{13g\tau^2}{8} + k^2 c_3(g,\tau,k)
		\\
		&\leq k\cdot\frac{13g\tau^2}{8} + k\frac{g\tau^2}{20}= \co(kg\tau^2)~.
	\end{align*}
 \begin{equation}
 \label{eqn: u_perp upper bound}
     \norm{\bu_{\perp}}=\co{\left(\sqrt{kg\tau^2}\right)}
 \end{equation}
 where in $(iii)$ we used $\snorm{\bP\bzeta^{(r)}}<\frac{13g\tau^2}{8}; \,\inner{\bzeta^{(r)}, \bzeta^{(q)}}_{\bP}<c_3(g,\tau,k)$ \lemref{lem: w.h.p shifts nearly orthogonal in off manifold space} .\\
 Combining \eqref{eqn: const_perp upper bound} and \eqref{eqn: u_perp upper bound} we have $\norm{\bz_{\perp}}=\co{\left(\frac{d}{c\sqrt{kg\tau^2}}\right)}$.
 \begin{align}
 \label{eqn:u_perp inf upper bound}
		\norm{\bu_{\perp}}_{\infty}=\norm{ \sum_{q \in [k]} y^{(q)} \bP\bzeta^{(q)}}_{\infty}
		&\leq  \sum_{q \in [k]}\norm{\bP\bzeta^{(q)}}_{\infty}\nonumber
		\\
		&\overset{(iv)}\leq 3k\cdot\tau\sqrt{2\log(2g)}=\co{\left(k\tau\sqrt{2\log (2g)}\right)}~.
	\end{align}
 For $(iv)$ use, $\snorm{\bP\bzeta^{(q)}}_{\infty}\leq 3\tau\sqrt{2\log(2g)}$ (\lemref{lem: w.h.p shifts nearly orthogonal in off manifold space}).\\
 Combining \eqref{eqn: const_perp upper bound} and \eqref{eqn:u_perp inf upper bound} we have $\norm{\bz_{\perp}}_{\infty}=\co{\left(\frac{d\sqrt{2\log (2g)}}{cg\tau}\right)}$.
 \subsubsection*{On manifold perturbation upper bounds}
 \begin{align}
\label{eqn: const_parallel upper bound}
		(\eta^{\shortparallel}_1+ \eta^{\shortparallel}_2)
		&= \frac{10}{9}\left( \frac{ 2 \Delta' + p + 1}{c_4(d,\tau,k,p)} +  \frac{3(3d+\Delta'+1)(1-2c')}{c_4(d,\tau,k,p)\cdot(1-3c')ck} \right) \nonumber
		\\
		&\leq \frac{10}{9}\left( \frac{  2 (p + \Delta' + 1)}{c_4(d,\tau,k,p)} +  \frac{3(3d+\Delta'+1)(1-2c')}{c_4(d,\tau,k,p)\cdot(1-3c')ck} \right)\nonumber
		\\
		&\overset{(i)} = \left( \frac{2 c' (d-\Delta'+1)/k }{\frac{18}{21}\cdot d(1+\frac{\tau^2}{2})} +  \frac{3(3d+\Delta'+1)(1-2c')}{\frac{18}{21}\cdot d(1+\frac{\tau^2}{2})\cdot(1-3c')ck} \right)\nonumber
		\\
		&\overset{(ii)} \leq \left(  \frac{1}{k} \cdot \frac{ \frac{2}{10} (d+1) }{\frac{18}{21}\cdot d(1+\frac{\tau^2}{2})} + \frac{1}{ck} \cdot \frac{3(3d + \frac{d}{21} + 1)}{\frac{18}{21}\cdot d(1+\frac{\tau^2}{2})(1-\frac{3}{10})} \right)\nonumber
		\\
		&\leq \left( \frac{1}{k} \cdot \co{\left(\frac{1}{1+\frac{\tau^2}{2}}\right) }+ \frac{1}{ck} \cdot  \co{\left(\frac{1}{1+\frac{\tau^2}{2}}\right) } \right)\nonumber
		\\
		&\leq  \co{\left(\frac{1}{ck(2+\tau^2)}\right) }~,
	\end{align}
 where in $(i)$ we used $k = c' \cdot \frac{d - \Delta' + 1}{p+\Delta'+1}$ and $\frac{18}{21}\cdot d(1+\frac{\tau^2}{2})\leq c_4(d,\tau,k,p)$ from \lemref{lem:rate for c4 and c6}. In $(ii)$ we used \lemref{lem:bound Delta} and $c'<1/10$.
 \begin{align}
 \label{eqn: u_parallel upper bound}
		\norm{\bu_{\shortparallel}}^2&=\norm{ \sum_{q \in [k]} y^{(q)} (\bmut^{(q)}+\bP_o\bzeta^{(q)})}^2 \nonumber\\
		&= \sum_{r \in [k]} \sum_{q \in [k]} y^{(r)} y^{(q)} \inner{\bzeta^{(r)}, \bzeta^{(q)}}_{\bP_o}+ \sum_{r \in [k]} \sum_{q \in [k]} y^{(r)} y^{(q)} \inner{\bmut^{(r)}, \bmut^{(q)}}+\sum_{r \in [k]} \sum_{q \in [k]} 2y^{(r)} y^{(q)}   \inner{\bmut^{(r)}, \bzeta^{(q)}}_{\bP_o}
		\nonumber\\
		&= \sum_{r \in [k]} \left[\inner{\bzeta^{(r)}, \bzeta^{(r)}}_{\bP_o}+d + \sum_{q \neq r} y^{(r)} y^{(q)} \left(\inner{\bzeta^{(r)}, \bzeta^{(q)}}_{\bP_o} +\inner{\bmut^{(r)}, \bmut^{(q)}}\right)\right]+\sum_{r\in [k]} V_r
		\nonumber\\
		&\overset{(iii)}\leq k\cdot\frac{13d\tau^2}{8}+kd+k^2 c_5(d,\tau,k) + k^2 p
		+\frac{k}{10}\cdot c_4(d,\tau,k,p)\nonumber\\
		&\overset{(iv)}\leq k\cdot\frac{13d\tau^2}{8}+kd+k\frac{d\tau^2}{20} + k^2 p
		+\frac{k}{10}\cdot d(1+\frac{\tau^2}{2})\nonumber\\
  &\overset{(v)}\leq k\cdot\frac{13d\tau^2}{8}+kd+k\frac{d\tau^2}{20} + \frac{kd+k}{10}
		+\frac{k}{10}\cdot d(1+\frac{\tau^2}{2})=kd\left(\frac{6}{5}+\frac{69\tau^2}{40}\right)+\frac{k}{10}\nonumber\\
  &\leq \co{\left(kd(2+\tau^2)\right)}
	\end{align}
 For $(iii)$ note that $\snorm{\bP_o\zeta^{(r)}}<\frac{13d\tau^2}{8}$ ,$\inner{\bzeta^{(r)}, \bzeta^{(q)}}_{\bP_o}<c_5(d,\tau,k), |V_r|<\frac{1}{10}\cdot c_4(d,\tau,k,p)$ (\lemref{lem: w.h.p shifts nearly orthogonal in  manifold space}).\\
 For $(iv)$ , we know that $c_5(d,\tau,k)<\frac{d\tau^2}{20k}$ and $c_4(d,\tau,k,p)<d(1+\tau^2/2)$ (\lemref{lem:rate for c4 and c6}).\\
 For $(v)$, $k(p+\Delta'+1)<1/10\cdot (d-\Delta'+1)\implies kp<1/10\cdot (d+1)$.\\
  Combining \eqref{eqn: const_parallel upper bound} and \eqref{eqn: u_parallel upper bound} we have $\norm{\bz_{\shortparallel}}=\co{\left(\sqrt{\frac{d}{c^2k(2+\tau^2)}}\right)}$

  \begin{align}
 \label{eqn:u_parallel inf upper bound}
		\norm{\bu_{\shortparallel}}_{\infty}=\norm{ \sum_{q \in [k]} y^{(q)} (\bmut^{(q)}+\bP_o\bzeta^{(q)})}_{\infty}
		&\leq  \sum_{q \in [k]}\underbrace{\norm{\bmut^{(q)}}_{\infty}}_{\leq\snorm{\bmut^{(q)}}}+\sum_{q \in [k]}\norm{\bP_o\bzeta^{(q)}}_{\infty}\nonumber
		\\
		&\overset{(iv)}\leq k\sqrt{d}+3k\cdot\tau\sqrt{2\log(2d)}=\co{\left(kd+k\tau\sqrt{2\log (2d)}\right)}~.
	\end{align}
 For $(iv)$ use, $\snorm{\bP_o\bzeta^{(q)}}_{\infty}\leq 3\tau\sqrt{2\log(2d)}$ (\lemref{lem: w.h.p shifts nearly orthogonal in manifold space}).\\
 Combining \eqref{eqn: const_parallel upper bound} and \eqref{eqn:u_parallel inf upper bound} we have $\norm{\bz_{\shortparallel}}_{\infty}=\co{\left(\frac{d+\tau\sqrt{2\log (2d)}}{c(2+\tau^2)}\right)}$.
 The corresponding probability with all of the bounds holding true for the off-manifold and on-manifold perturbation is $1-\delta_1,1-\delta_2$, respectively. Where $\delta_1, \delta_2$ are defined in \lemref{lem: w.h.p shifts nearly orthogonal in off manifold space},\lemref{lem: w.h.p shifts nearly orthogonal in manifold space} respectively.
\end{proof}
\section{Additional Experiment Results}
\label{Additional details exp}

\begin{table*}[]

\centering
\begin{tabular}{lllllllllll}
\toprule
CIFAR Class & 0 & 1 & 2 & 3 & 4 & 5 & 6 & 7 & 8 & 9\\\hline
LPCA                                      & 8                        & 11                     & 8                      & 11                     & 9                      & 13                     & 7                      & 14                     & 10                     & 16                     \\ 
MLE(k=5)                                  & 18.34                    & 21.20                  & 21.35                  & 21.08                  & 21.43                  & 22.11                  & 22.96                  & 22.57                  & 19.80                  & 24.42                  \\ 
TwoNN                                     & 21.77                    & 25.98                  & 25.53                  & 26.09                  & 24.60                  & 27.32                  & 26.19                  & 25.95                  & 24.77                  & 29.11                  \\ \bottomrule
\end{tabular}
\caption{Intrinsic dimension of CIFAR-10 Dataset. \label{tab:cifar10 individual classes full}}
\end{table*}
\begin{table}[]
\centering
\begin{tabular}{@{}lllllllllll@{}}
\toprule
MNIST class & 0     & 1     & 2     & 3     & 4     & 5     & 6     & 7     & 8     & 9     \\ \midrule
LPCA        & 19    & 8     & 31    & 29    & 27    & 23    & 21    & 21    & 32    & 22    \\
MLE (k=5)   & 13.01 & 9.65  & 14.14 & 15.14 & 13.65 & 14.70 & 12.65 & 12.08 & 15.65 & 12.79 \\
TwoNN       & 15.33 & 12.98 & 15.29 & 16.35 & 14.71 & 15.90 & 14.02 & 13.39 & 16.60 & 14.03 \\ \bottomrule
\end{tabular}
\caption{Estimated intrinsic dimension of MNIST Dataset.\label{tab:mnistclasses}}
\end{table}

\subsection{Simulation}
\subsubsection{Training setting}
\label{appendix: sim train setting}
We use a two-layer ReLU network to train the model. The width $w$ of the neural network is set to be 2000. The initialization of the two layers follows \emph{Kaiming Normal distribution} \citep{he2015delving}. We use gradient descent to train the neural network, i.e., the batch size is $n=1000$, and take the learning rate as 0.1 to train 1000 epochs. For all the settings, the clean training accuracy is 100. We repeat the training process 10 times for an average result and the error bar.
\subsubsection{Minimal strength attack generation}
\label{appendix: minimal strength attack}
In the experiments, we track the attack strength of an unconstrained attack, \emph{on-manifold} and \emph{off-manifold} attack. The way we generate these are described as follows:
\begin{itemize}
    \item Usual $\mathbb{R}^D$ attack: PGD-20 \citep{madry2017towards} attack in the whole space.
    \item On-manifold attack: PGD-20 attack on the intrinsic data space. We achieve this by projecting the PGD algorithm's gradients to the underlying $d$ dimensional intrinsic space by applying the matrix $\bP_o=\bM(\bM^T\bM)^{-1}\bM^T$.
    \item Off-manifold attack: PGD-20 attack on the off-manifold space. We achieve this by projecting the PGD algorithm's gradients to the co-kernel $g$ dimensional space by applying the matrix $\bP=\bI-\bP_o$.
\end{itemize}
\subsubsection{Additional Figures}
\begin{figure*}[!ht]
    \centering
    \includegraphics[scale=0.5]{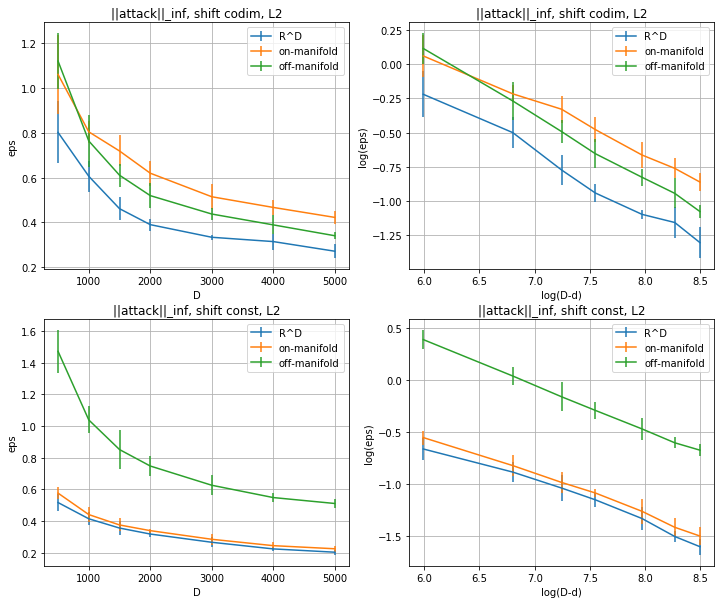}
    \caption{The $\ell_{\infty}$ norm of the $\ell_2$ attack in Figure \ref{fig:change_D}. The top and bottom rows correspond to the cases $\snorm{\bzeta}=\Theta(\sqrt{d}),\Theta(1)$ respectively.}
    \label{fig:simulation:inf}
\end{figure*}
\newpage
\begin{figure*}[!ht]
    \centering
    \includegraphics[scale=0.4]{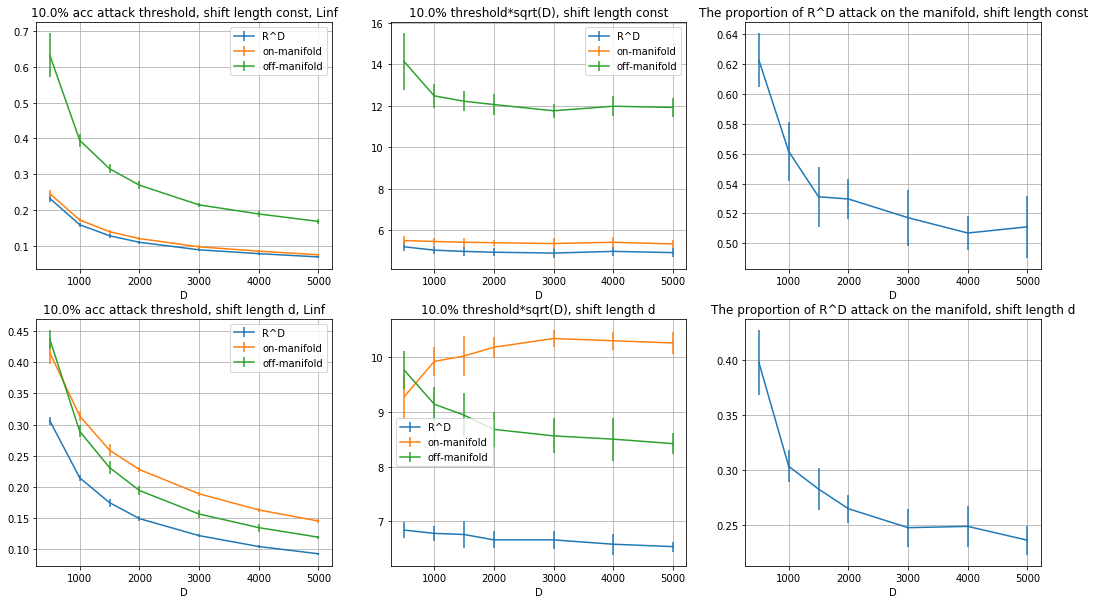}
    \caption{Attack strength threshold associating with $10\%$ robust test accuracy (left), its scaled value (middle), and the on-manifold proportion in $\mathbb{R}^D$ attack. The top and bottom rows correspond to the cases $\snorm{\bzeta}=\Theta(1),\Theta(\sqrt{d})$ respectively.}
    \label{fig:simulation:inf_}
\end{figure*}
\newpage

\begin{figure*}[!ht]
    \centering
    \includegraphics[scale=0.5]{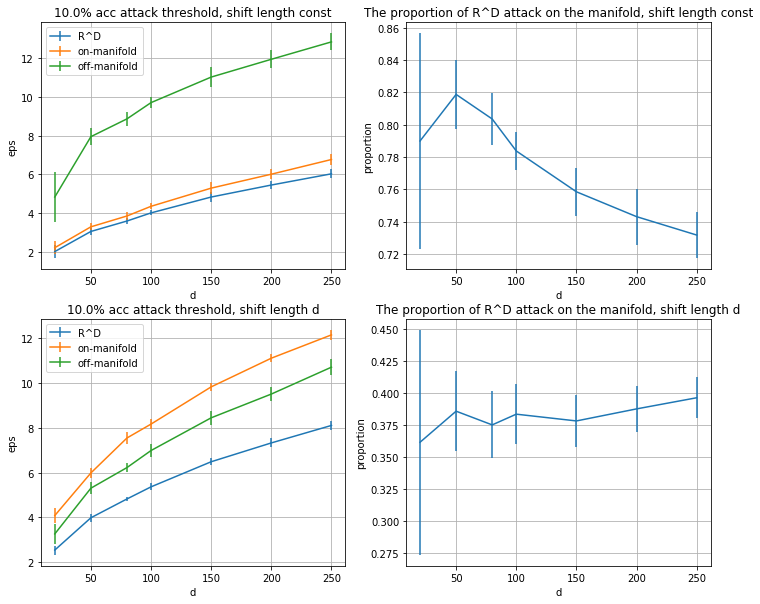}
    \caption{Attack strength threshold associating with $10\%$ robust test accuracy (left) and the on-manifold proportion in $\mathbb{R}^D$ attack when changing $d$. The top and bottom rows correspond to the cases $\snorm{\bzeta}=\Theta(1),\Theta(\sqrt{d})$ respectively.}
    \label{fig:change_codim}
\end{figure*}

\begin{figure*}[!ht]
    \centering
    \includegraphics[scale=0.5]{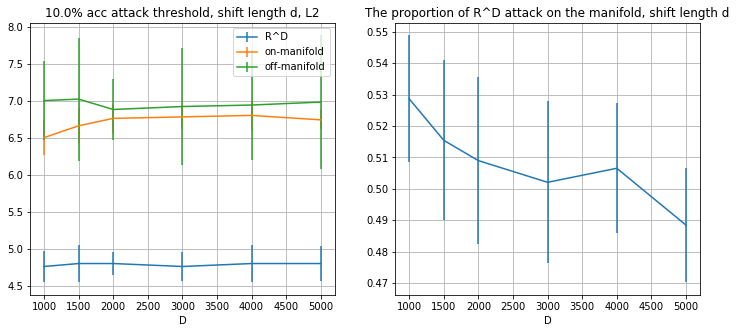}
    \caption{Attack strength threshold associating with $10\%$ robust test accuracy (left) and the on-manifold proportion in $\mathbb{R}^D$ attack when changing $d$. The initialization in this setting is 0.01 of the initialization used in all other figures. }
    \label{fig:change_init}    
\end{figure*}

\begin{figure*}[!ht]
    \centering
    \includegraphics[scale=0.5]{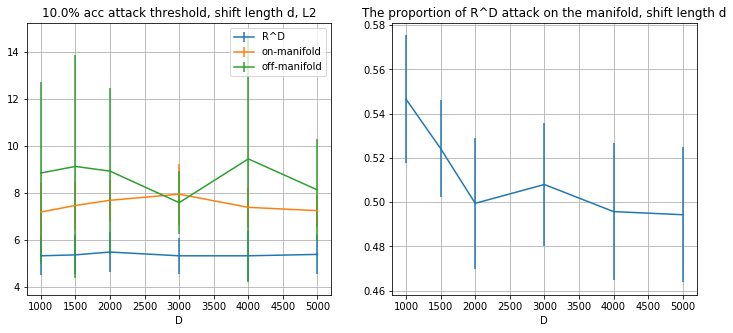}
    \caption{Attack strength threshold associating with $10\%$ robust test accuracy (left) and the on-manifold proportion in $\mathbb{R}^D$ attack when changing $d$. The optimizer takes weight\_decay as 0.1. The number of epochs is changed to 500 in this experiment to save the computation time for large $D$ cases. Since we change the number of epochs in this experiment, we also conduct experiment for the scenario of Figure \ref{fig:change_D}, and there is no change in the result.}
    \label{fig:change_decay}    
\end{figure*}
\FloatBarrier
\subsection{MNIST}
\subsubsection{Architecture and training setting}
\label{appendix: mnist setting}
We use Adam optimizer with learning rate $10^{-3}$ to train the model. The batch size is 256.
\begin{table}[!ht]
    \centering
    \begin{tabular}{c}
         Layer\\\hline
       Conv2d(1, 16, 4, stride=2, padding=1), ReLU\\
      Conv2d(16, 32, 4, stride=2, padding=1), ReLU\\
       Linear(32*(7+$P/2$)*(7+$P/2$),100), ReLU\\
        Linear(100, 10)\\
    \end{tabular}
    \caption{Neural network architecture.}
    \label{tab:my_label}
\end{table}

We take 0.02 as the stopping criteria because when $P=0$, the training loss is around 0.02 after 10 epochs. We repeat the experiment 30 times to get a mean and standard deviation.

\subsubsection{Additional Figures}
\label{appendix: mnist figures}
\begin{figure}[!ht]
    \centering
    \includegraphics[scale=0.5]{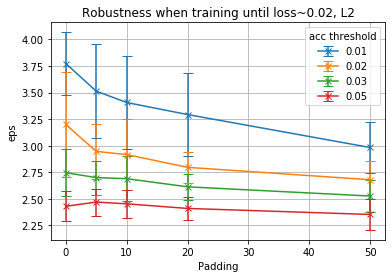}
    \includegraphics[scale=0.5]{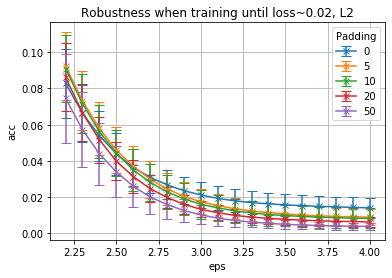}
    \caption{The relationship between attack strength and the padding number $P$ in Table \ref{tab:my_label} (left) and the relationship between robust test accuracy and attack strength under different $P$. $\ell_2$ attack. Train loss is 0.02.} 
    \label{fig:mnist_l2}
\end{figure}

\begin{figure}[!ht]
    \centering
    \includegraphics[scale=0.5]{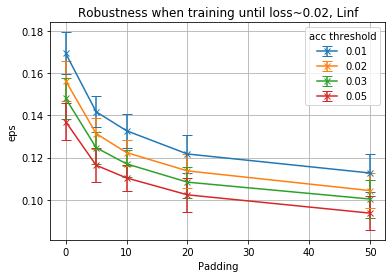}
    \includegraphics[scale=0.5]{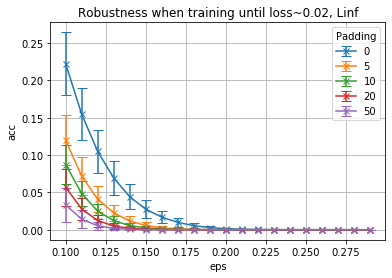}
    \caption{The relationship between attack strength and the padding number $P$ in Table \ref{tab:my_label} (left) and the relationship between robust test accuracy and attack strength under different $P$. $\ell_\infty$ attack. Train loss is 0.02.} 
    \label{fig:mnist_linf}
\end{figure}
\end{document}